\documentclass{article}
\pdfoutput=1


\usepackage[utf8]{inputenc}
\usepackage[T1]{fontenc}

\usepackage[verbose=true,
letterpaper,
textheight=8.7in,
textwidth=6.4in,
margin=1in,
headheight=12pt,
headsep=25pt,
footskip=30pt]{geometry}

\usepackage{microtype}
\usepackage{mathtools}
\usepackage{booktabs}
\usepackage[numbers,sort&compress]{natbib}
\usepackage{graphicx}
\usepackage{amsmath,amssymb,amsfonts,amsxtra,bm}
\usepackage{amsthm}
\usepackage{xspace}
\usepackage{enumerate}
\usepackage{hyperref}
\usepackage{url}

\usepackage{colortbl}

\usepackage{makecell,multirow}       
\usepackage{nicefrac}       
\usepackage{thmtools}  
\usepackage{xspace}
\usepackage{footnote, tablefootnote}
\usepackage{float}
\floatstyle{plaintop}
\restylefloat{table}
\usepackage{epstopdf}
\usepackage{latexsym}
\usepackage{graphicx}

\setcounter{section}{0} 

\usepackage{ss}
\usepackage{enumerate}
\usepackage{amssymb}
\usepackage{amsbsy}
\usepackage{amsmath}
\usepackage{amsfonts}
\usepackage{thmtools}
\usepackage{mathtools}

\usepackage{color}

\newcommand{\mc}[1]{\mathcal{#1}}
\newcommand{\mbf}[1]{\mathbf{#1}}


\newcommand{\norm}[1]{\left\|{#1}\right\|} 
\newcommand{\lfro}[1]{\left\|{#1}\right\|_{\rm F}} 
\newcommand{\norms}[1]{\|{#1}\|} 

\newcommand{\spann}{\mathop{\rm span}}

\newcommand{\ones}{{\mathbf{1}}}
\newcommand{\zeros}{{\mathbf{0}}}

\newcommand{\defeq}{:=}

\newcommand{\wt}[1]{\widetilde{#1}} 

\newcommand{\half}{\tfrac{1}{2}}
\newcommand{\bhalf}{\frac{1}{2}}
\newcommand{\indic}{\mbf{1}} 

\newcommand{\reals}{\mathbb{R}} 
\newcommand{\C}{\mathbb{C}} 



\makeatletter
\long\def\@makecaption#1#2{
  \vskip 0.8ex
  \setbox\@tempboxa\hbox{\small {\bf #1:} #2}
  \parindent 1.5em  
  \dimen0=\hsize
  \advance\dimen0 by -3em
  \ifdim \wd\@tempboxa >\dimen0
  \hbox to \hsize{
    \parindent 0em
    \hfil 
    \parbox{\dimen0}{\def\baselinestretch{0.96}\small
      {\bf #1.} #2
    } 
    \hfil}
  \else \hbox to \hsize{\hfil \box\@tempboxa \hfil}
  \fi
}
\makeatother











\providecommand{\minimize}{\mathop{\rm minimize}}

\newcommand{\leqwtxt}[1]{\stackrel{{#1}}{\leq}}


\usepackage{bm}

















\def\1{\bm{1}}








\def\vb{{\bm{b}}}

\def\vu{{\bm{u}}}
\def\vv{{\bm{v}}}

\def\vx{{\bm{x}}}
\def\vy{{\bm{y}}}
\def\vz{{\bm{z}}}


\def\mA{{\bm{A}}}
\def\mB{{\bm{B}}}
\def\mC{{\bm{C}}}

\def\mI{{\bm{I}}}

\def\mM{{\bm{M}}}

\def\mP{{\bm{P}}}
\def\mQ{{\bm{Q}}}

\def\mS{{\bm{S}}}
\def\mT{{\bm{T}}}
\def\mU{{\bm{U}}}
\def\mV{{\bm{V}}}
\def\mW{{\bm{W}}}
\def\mX{{\bm{X}}}
\def\mY{{\bm{Y}}}
\def\mZ{{\bm{Z}}}

\def\mLambda{{\bm{\Lambda}}}

\DeclareMathAlphabet{\mathsfit}{\encodingdefault}{\sfdefault}{m}{sl}
\SetMathAlphabet{\mathsfit}{bold}{\encodingdefault}{\sfdefault}{bx}{n}



\def\sD{{\mathbb{D}}}

\def\sS{{\mathbb{S}}}








\newcommand{\E}{\mathbb{E}}

\newcommand{\R}{\mathbb{R}}



\definecolor{LightGray}{gray}{0.9}

\allowdisplaybreaks
\usepackage[flushleft]{threeparttable} 

\usepackage{enumitem}
\allowdisplaybreaks

\usepackage{subcaption}

\newcommand{\singshuf}{\textsc{SingleShuffle}}
\newcommand{\randshuf}{\textsc{RandomShuffle}}
\newcommand{\sgd}{\textsc{Sgd}}
\newcommand{\ssmean}{\mW_{\rm SS}}
\newcommand{\rsmean}{\mW_{\rm RS}}
\newcommand{\gdmean}{\mW_{\rm GD}}
\newcommand{\I}{\mathrm{i}}
\renewcommand{\Re}{{\rm Re}}
\newcommand{\Ewo}{\E_{\rm wo}}
\newcommand{\Ewr}{\E_{\rm wr}}

\title{Can Single-Shuffle SGD be Better than\\Reshuffling SGD and GD?}
\author{\name Chulhee Yun \email{chulheey@mit.edu}\\
  \name Suvrit Sra \email{suvrit@mit.edu}\\
  \name Ali Jadbabaie \email{jadbabai@mit.edu}\\
  \addr{Massachusetts Institute of Technology, Cambridge, MA, USA 02139}
}

\begin{document}

\maketitle

\begin{abstract}
We propose matrix norm inequalities that extend the \citet{recht2012toward} conjecture on a noncommutative AM-GM inequality by supplementing it with another inequality that accounts for \emph{single-shuffle}, which is a widely used without-replacement sampling scheme that shuffles only once in the beginning and is overlooked in the Recht-R\'e conjecture.
Instead of general positive semidefinite matrices, we restrict our attention to positive definite matrices with small enough condition numbers, which are more relevant to matrices that arise in the analysis of SGD. For such matrices, we conjecture that the means of matrix products corresponding to with- and without-replacement variants of SGD satisfy a series of spectral norm inequalities that can be summarized as: ``\emph{single-shuffle SGD converges faster than random-reshuffle SGD, which is in turn faster than with-replacement SGD.}'' We present theorems that support our conjecture by proving several special cases.
\end{abstract}


\section{Introduction}
\label{sec:intro}
\lettrine[lines=3]{\color{BrickRed}F}{\rm{}inite-sum} optimization problems are ubiquitous in machine learning. Typically, these problems are high-dimensional and the finite-sum is comprised of a large number of component functions, making the evaluation of exact gradients prohibitively expensive. Hence, randomized methods such as stochastic gradient descent (SGD) and its variants have become indispensable to modern machine learning. At each iteration, these methods evaluate the gradient of one component function sampled from the entire set of components and use it as a noisy estimate of the full gradient.

Depending on how the components are chosen, SGD-based methods can broadly be classified into two categories: \emph{with-replacement} and \emph{without-replacement}. In many theoretical studies the indices of component functions are assumed to be chosen with replacement, making the choice at each iteration independent of other iterations. In contrast, the vast majority of practical implementations use without-replacement sampling, where all the indices are randomly shuffled and are then visited exactly once per \emph{epoch} (i.e., one pass through all the components). Since the algorithms operate epochwise, there are two popular variants of shuffling schemes: one that \emph{reshuffles} the components at every epoch and another that \emph{shuffles only once} at the beginning and reuses that order every epoch. 

Practitioners opt for without-replacement sampling schemes not only because they are easier to implement, but also because they often result in faster convergence \citep{bottou2009curiously}.
However, analyzing algorithms based on without-replacement sampling is considerably more difficult than analyzing their with-replacement counterparts, because the component chosen at each iteration is dependent on the previous iterates of an epoch. Even for SGD, tight convergence bounds for its without-replacement variants have become known only recently \citep{nagaraj2019sgd, safran2019good, rajput2020closing, ahn2020sgd, mishchenko2020random}.

In 2012, \citet{recht2012toward} proposed a conjecture that the mean of without-replacement products of positive semidefinite (PSD) matrices has spectral norm no larger than the mean of their with-replacement products. Formally, given $n$ PSD matrices $\mA_1, \dots, \mA_n$, they conjectured:
\begin{equation}
    \norm{\frac{1}{n!} \sum\nolimits_{\sigma \in \mc S_n} \prod\nolimits_{i=1}^n \mA_{\sigma(i)}} 
    \leq
    \norm{\frac{1}{n} \sum\nolimits_{i=1}^n \mA_i}^n,
\label{eq:recht-re-conj}
\end{equation}
where $\mc S_n$ is the set of all permutations on $\{1,\dots,n\}$.
This so-called \emph{matrix AM-GM inequality}\footnote{Developing a matrix counterpart of the well-known scalar arithmetic-geometric means inequality has been a longstanding research topic. Different geometric means and their corresponding AM-GM inequalities have been proposed and studied in the literature \citep{bhatia1993more,horn1995norm,bhatia2000notes,ando2004geometric,bhatia2006riemannian,bini2010effective,bhatia2012monotonicity}.} conjecture has drawn much attention because it can help explain why without-replacement algorithms may converge faster than with-replacement ones. Inequality~\eqref{eq:recht-re-conj} is true for $n=2$ \citep{recht2012toward}, and $n=3$ \citep{zhang2018note,lai2020recht};
however, it was recently proven to be false for $n=5$~\citep{lai2020recht} and explicit counterexamples were found for $n \geq 5$~\citep{de2020random}.

Although this conjecture-and-disproof story seems inescapable, we point out a couple of important facts about the conjectured inequality~\eqref{eq:recht-re-conj} and its disproof, reigniting new hope.
\begin{itemize}[leftmargin=15pt,itemsep=-1pt]
    \item The counterexamples are rank-deficient PSD matrices, and adding a positive multiple of the identity matrix to each matrix makes them satisfy \eqref{eq:recht-re-conj} (see Section~\ref{sec:breaking-disproofs}). This leaves open the possibility that the conjecture may still be true for positive definite matrices with sufficiently small condition numbers. In fact, matrices that arise in the analyses of without-replacement methods are often those with small condition numbers (see Section~\ref{sec:moti-sgd} for details).
    \item The conjecture~\eqref{eq:recht-re-conj} only implies faster convergence of the without-replacement scheme where the indices are randomly reshuffled at every epoch. It does not provide any useful insights towards the analysis of \emph{single-shuffle}, an equally (if not more) popular scheme that shuffles indices only once and reuses that order for the remaining epochs.
\end{itemize}

\paragraph{Our contributions \& paper organization.} 
We propose a new conjecture (Conjecture~\ref{conj:main}) that extends and also refines the previously conjectured matrix AM-GM inequality~\eqref{eq:recht-re-conj}.
Our new conjecture is motivated by a rather surprising empirical observation from linear regression (see Figure~\ref{fig:linearreg}(a); details are in Section~\ref{sec:empirical}), namely that SGD with single shuffling converges consistently faster than SGD with reshuffling, which in turn outperforms with-replacement SGD and gradient descent (GD).

Our conjecture states that for any $n \geq 2$ and $K \geq 1$, positive definite (PD) matrices $\mA_1,\dots,\mA_n$ with ``small enough'' condition numbers satisfy the following inequalities:
\begin{equation}
    \norm{\frac{1}{n!} \sum\nolimits_{\sigma \in \mc S_n} \left(\prod\nolimits_{i=1}^n \mA_{\sigma(i)} \right)^K}
    \leq
    \norm{\frac{1}{n!} \sum\nolimits_{\sigma \in \mc S_n} \prod\nolimits_{i=1}^n \mA_{\sigma(i)}}^K
    \leq
    \norm{\frac{1}{n} \sum\nolimits_{i=1}^n \mA_i}^{nK}.
\label{eq:our-conj}
\end{equation}
These two spectral norm inequalities imply that, at least in some special cases,
\begin{quote}
\begin{center}
  \emph{Shuffling only once is faster than randomly reshuffling at every epoch,\\and random reshuffling is faster than with-replacement sampling.}  
\end{center}
\end{quote}
In Section~\ref{sec:background}, we introduce with- and without-replacement SGD and connect their convergence analysis to our conjecture~\eqref{eq:our-conj}. We also discuss motivations for our extended conjecture~\eqref{eq:our-conj} via our empirical observations and disproofs of \eqref{eq:recht-re-conj}.
In Section~\ref{sec:conj}, we formally state the new conjecture (Conjecture~\ref{conj:main}) and discuss its implications.
We present theorems that prove Conjecture~\ref{conj:main} for special cases in Section~\ref{sec:proof-of-conj}; their proofs are sketched in Section~\ref{sec:proof-sketches}.
\begin{figure}[tbp]
     \centering
     \begin{subfigure}[b]{0.47\textwidth}
         \centering
         \includegraphics[width=\textwidth]{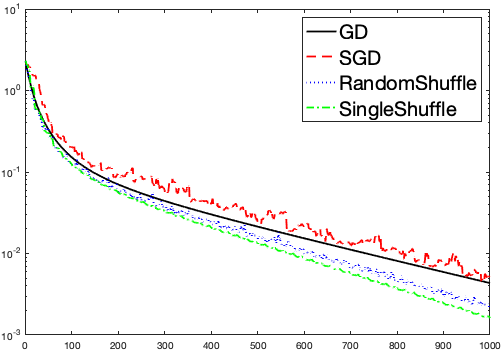}
         \caption{Convergence of $F(\vz_t)$}
         \label{fig:linearreg-loss}
     \end{subfigure}
     \hfill
     \begin{subfigure}[b]{0.47\textwidth}
         \centering
         \includegraphics[width=\textwidth]{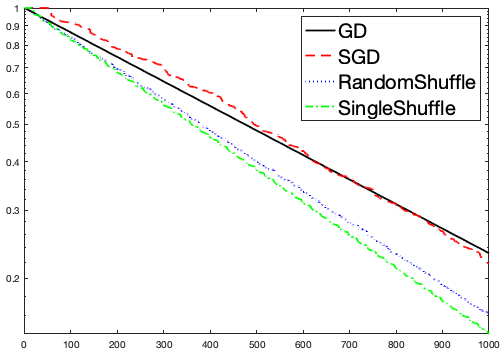}
         \caption{Evolution of spectral norm}
         \label{fig:linearreg-norm}
     \end{subfigure}
        \caption{Convergence of the training loss and the norm of corresponding matrices (see Section~\ref{sec:empirical}) in a linear regression problem with $n = 20$ data points in $d = 30$ dimension, for $K = 50$ epochs.}
        \label{fig:linearreg}
\end{figure}



\section{Background and motivation}
\label{sec:background}
\paragraph{Notation.}
We first summarize notation used in this paper. For a positive integer $a$, we use $[a] \defeq \{1, 2, \dots, a\}$. Let $\mc S_n$ be the symmetric group of $[n]$, i.e., the set of all permutations $\sigma: [n] \to [n]$.
For a vector $\vv \in \reals^d$, we use $\norm{\vv}$ to denote its Euclidean ($\ell_2$) norm. Let $[\vv]_i$ be the $i$-th component of a vector $\vv$. We write a $d$-dimensional all-one vector as $\ones_d$.
Given a matrix $\mA \in \reals^{d \times k}$, we use $\norm{\mA}$ to denote its spectral norm, and $[\mA]_{i,j}$ to denote its $(i,j)$-th entry. We define $\sS^d$ to be the set of $d$-by-$d$ real symmetric matrices. We also use $\sS^d_{+}$ (or $\sS^d_{++}$) to denote the set of $d$-by-$d$ real symmetric PSD (or PD) matrices. For two symmetric matrices $\mA, \mB \in \sS^d$, $\mA \preceq \mB$ (or $\mA \prec \mB$) means that $\mB - \mA \in \sS^d_+$ (or $\mB - \mA \in \sS^d_{++})$.

\subsection{Connection of Conjectures \eqref{eq:recht-re-conj} and \eqref{eq:our-conj} to the convergence analysis of SGD}
\label{sec:moti-sgd}
In this section, we briefly introduce with- and without-replacement SGD algorithms to motivate both the original AM-GM conjecture~\eqref{eq:recht-re-conj} and our new conjecture~\eqref{eq:our-conj}. Suppose we want to solve a finite-sum optimization problem of the form
\begin{equation}
    \minimize\nolimits_{\vz \in \R^d} \quad F(\vz) \defeq \tfrac{1}{n} \sum\nolimits_{i=1}^n f_i(\vz).
\label{eq:finite-sum-prob}
\end{equation}
For simplicity of illustration, consider quadratic component functions $f_i(\vz) \defeq \tfrac{1}{2} \vz^T \mM_i \vz$, where $\mM_i \in \sS^d_+$.
We consider running SGD with a constant step-size $\eta > 0$, with the following updates:
\begin{equation*}
    \vz_t = \vz_{t-1} - \eta \nabla f_{i(t)} (\vz_{t-1}) 
    = \vz_{t-1} - \eta \mM_{i(t)} \vz_{t-1} = (\mI - \eta \mM_{i(t)}) \vz_{t-1},
\end{equation*}
where $i(t)$ is the index at iteration $t$ chosen by the algorithm. 

In \emph{with-replacement SGD}, $i(t)$ is drawn independently from the uniform distribution over $[n]$, making $\nabla f_{i(t)}$ an unbiased estimate of the full gradient $\nabla F$. Since $i(t)$'s are independent, we have
\begin{equation}
    \E[\vz_t] 
    = \E_{i(t)}[(\mI-\eta\mM_{i(t)})]\E[\vz_{t-1}]
    = \bigg (\mI- \frac{\eta}{n}\sum_{i=1}^n \mM_i \bigg )\E[\vz_{t-1}] = \bigg (\mI- \frac{\eta}{n}\sum_{i=1}^n \mM_i \bigg )^t \vz_0,
\label{eq:sgditer}
\end{equation}
where $\vz_0$ is the initialization. Note that this expectation $\E[\vz_t]$ is also equal to the GD iterate after $t$~steps. For the remaining of the paper, we will use $\sgd$ to denote the with-replacement SGD.

In contrast, in \emph{without-replacement SGD}, the algorithm shuffles the component functions $f_i$ by a random permutation $\sigma \in \mc S_n$ and makes a complete pass through the entire set of functions in the order $f_{\sigma(1)}, f_{\sigma(2)}, \dots, f_{\sigma(n)}$. This complete pass is called an \emph{epoch}. At the $t$-th iteration of an epoch, the index $\sigma(t)$ is used for the update, which can be written as 
\begin{equation*}
    \vz_{t} = \vz_{t-1} - \eta \nabla f_{\sigma(t)}(\vz_{t-1})
    = \big(\mI - \eta \mM_{\sigma(t)}\big) \vz_{t-1}.
\end{equation*}
If we unroll the recursion from $\vz_n$ to $\vz_0$ (i.e., over an epoch), we obtain
\begin{equation*}
    \vz_{n} = \Big( \prod\nolimits_{t=n}^1 \big(\mI - \eta \mM_{\sigma(t)}\big) \Big ) \vz_{0}.
\end{equation*}
To compute the expected value of $\vz_n$, we have to take an expectation over permutations $\sigma \sim {\rm Unif}(\mc S_n)$, so the expected value of the product $\prod\nolimits_{t=n}^1 (\mI - \eta \mM_{\sigma(t)})$ is of interest to us:
\begin{equation}
    \E_\sigma [\vz_n] = 
    \E_\sigma \left [ \prod\nolimits_{t=n}^1 (\mI - \eta \mM_{\sigma(t)}) \right ] \vz_0
    = \left ( \frac{1}{n!} \sum\nolimits_{\sigma \in \mc S_n} \prod\nolimits_{i=1}^n (\mI - \eta \mM_{\sigma(i)}) \right ) \vz_0.
\label{eq:rs1epoch}
\end{equation}
Comparing the matrices in the RHSs of \eqref{eq:sgditer} and \eqref{eq:rs1epoch}, we can notice their connection to the original AM-GM inequality conjecture~\eqref{eq:recht-re-conj}, with $\mA_i = \mI - \eta \mM_i$.
If true, inequality~\eqref{eq:recht-re-conj} would imply that in the special case $f_i(\vz) = \half \vz^T \mM_i \vz$, the expected iterate of without-replacement SGD after an epoch is closer to the global minimum than that of $\sgd$ or GD.

Without-replacement SGD is typically run over multiple epochs, and the algorithm has variants depending on how the permutation is selected for different epochs. An algorithm we refer to as $\randshuf$ takes fresh independent samples of $\sigma_k\sim {\rm Unif}(\mc S_n)$ at each epoch $k$. Another algorithm called $\singshuf$ samples a permutation $\sigma \sim {\rm Unif}(\mc S_n)$ in the beginning of the first epoch and then reuses this particular $\sigma$ for all the epochs. 
This difference in the shuffling scheme results in different expected iterates after running the algorithms for multiple epochs. 

To simplify notation, we define $\mP_{\sigma}\defeq \prod\nolimits_{i=1}^n (\mI - \eta \mM_{\sigma(i)})$. 
In case of $\randshuf$, the expected iterate after $nK$ iterates, i.e., $K$ epochs, reads
\begin{equation}
    \E[\vz^{\rm RS}_{nK}] = \E_{\sigma_K} [ \mP_{\sigma_K} ] \cdots \E_{\sigma_1} [ \mP_{\sigma_1} ] \vz_0
    = 
    \left ( \frac{1}{n!} \sum\nolimits_{\sigma \in \mc S_n} \mP_\sigma \right )^K \vz_0,
\label{eq:rsKepoch}
\end{equation}
because $\sigma_k$ are mutually independent. In $\singshuf$, the permutation $\sigma$ is fixed, so
\begin{equation}
    \E[\vz^{\rm SS}_{nK}] = \E_{\sigma} [ \mP_\sigma^K ] \vz_0
    = 
    \left ( \frac{1}{n!} \sum\nolimits_{\sigma \in \mc S_n} \mP_\sigma^K \right ) \vz_0.
\label{eq:ssKepoch}
\end{equation}
Notice that the matrix on the RHS of \eqref{eq:rsKepoch} is a power of an expectation, whereas the other one in \eqref{eq:ssKepoch} is an expectation of powers.
Comparing \eqref{eq:sgditer}, \eqref{eq:rsKepoch}, and \eqref{eq:ssKepoch}, the implication of our new conjecture~\eqref{eq:our-conj} on the expected iterates of without-replacement SGD algorithms becomes clear.

In many recent theoretical advances on without-replacement SGD~\citep{haochen2018random,nagaraj2019sgd,rajput2020closing,ahn2020sgd},
$\eta = O(\frac{\log(nK)}{nK})$ is typically chosen as the step-size.
This choice makes the matrices $\mA_i = \mI - \eta \mM_i$ close to identity; or equivalently, they have small condition numbers. Therefore, in the context of without-replacement SGD, it is evident that understanding the conjecture~\eqref{eq:our-conj} for well-conditioned matrices is of great importance.

\subsection{Motivation: S{\small INGLE}S{\small HUFFLE} beats R{\small ANDOM}S{\small HUFFLE} and S{\small GD} in linear regression}
\label{sec:empirical}
Now that we have defined with- and without-replacement SGD algorithms, we report an interesting empirical observation that serves as the main motivation for our new conjecture~\eqref{eq:our-conj}.

Consider solving a linear regression problem, which is a canonical example of finite-sum optimization problems~\eqref{eq:finite-sum-prob}. We are given $n$ data points $\{(\vx_i, y_i)\}_{i=1}^n$, where $\vx_i \in \reals^d$ is a unit vector and $y_i \in \reals$. We would like to minimize $F(\vz) = \half \sum_{i=1}^n (\vx_i^T\vz - y_i)^2$.

Using data points sampled i.i.d.\ from the standard Gaussian distribution, we tested convergence speed of GD, $\sgd$, $\randshuf$ and $\singshuf$, using the same random initialization and step-size $\eta = 0.5$ across all four algorithms. Rather surprisingly, plotting the objective function values $F(\vz_t)$ over iterations reveals that $\singshuf$ quite consistently outperforms $\randshuf$, and $\randshuf$ outperforms $\sgd$ and GD. Figure~\ref{fig:linearreg}(a) shows a typical plot obtained with $n = 20$ and $d = 30$, over $K = 50$ epochs. Due to the randomness in data points, initialization, and choice of permutations, there are some cases where $\singshuf$ performs worse than other methods, but in most of the runs we obtained plots similar to Figure~\ref{fig:linearreg}(a). We also find that this tendency is much stronger in underdetermined settings ($d > n$). 

We observe the same trend with $y_i = 0$ for all $i \in [n]$ as well. Note that linear regression with labels $y_i = 0$ corresponds to the case $f_i(\vz) = \half \vz^T \mM_i \vz$ discussed in Section~\ref{sec:moti-sgd}, with $\mM_i = \vx_i \vx_i^T$.
Recall that the $t$-th iterate of SGD on this problem is $\vz_t = \prod_{j=t}^1(\mI - \eta \mM_{i(j)}) \vz_0$, where the choice of index $i(j)$ depends on the sampling scheme. To see if the difference in convergence speed can be attributed to the difference in the decay speed of the spectral norm of the matrix $\prod_{j=t}^1(\mI - \eta \mM_{i(j)})$ over iterations, we plot the spectral norm $\norms{\mV \prod_{j=t}^1(\mI - \eta \mM_{i(j)})}$ over iterations in Figure~\ref{fig:linearreg}(b)\footnote{We plot $\norms{\mV(\mI - \frac{\eta}{n}\sum_{i=1}^n \mM_i)}^{nK}$ for GD.}. Here, $\mV$ is the projection matrix onto $\spann\{\vx_1, \dots, \vx_n\}$ because the spectral norm is always fixed to 1 when the regression problem is underdetermined ($d > n$). 
We observe that the decay speed of spectral norm agrees with the convergence speed of $F(\vz_t)$: in most runs, the spectral norm decays the fastest for $\singshuf$, followed by $\randshuf$, and then $\sgd$/GD. 

This inspiring observation motivates an extension of the AM-GM inequality conjecture~\eqref{eq:recht-re-conj}, which corresponds to ``$\randshuf \leq \sgd$/GD,'' to our new conjecture~\eqref{eq:our-conj} by adding a new inequality ``$\singshuf \leq \randshuf$.''

\subsection{Motivation: Disproofs of the Recht-R\'e conjecture break for well-conditioned matrices}
\label{sec:breaking-disproofs}
Before we formally state the new conjecture~\eqref{eq:our-conj}, we discuss the reason why it makes sense to pursue a proof of the AM-GM inequality (which is the second inequality of \eqref{eq:our-conj}) when its disproofs are already known \citep{lai2020recht,de2020random}. Essentially, the reason is that the known disproofs only work for PSD matrices or PD matrices with large condition numbers; hence, there is a possibility that the AM-GM inequality is still true for PD matrices with smaller condition numbers, which are more relevant to the convergence analysis of without-replacement methods.

\paragraph{Disproof by Positivstellensatz.}
\citet{lai2020recht} disprove the AM-GM inequality~\eqref{eq:recht-re-conj} by formulating the Positivstellensatz of the matrix polynomials from an equivalent conjecture into a semidefinite program (SDP). The equivalent conjecture reads: 
\begin{quote}
\textit{For any $\mA_1, \dots, \mA_n$ satisfying $\mA_i \succeq \zeros$ for all $i \in [n]$ and $\sum_{i=1}^n \mA_i \preceq n\mI$, we have
    $-n! \mI \preceq \sum_{\sigma \in \mc S_n} \prod_{i=1}^n \mA_{\sigma(i)} \preceq n! \mI$.}
\end{quote}
Using the SDP formulation, \citet{lai2020recht} calculate the minimum $\lambda_1$ and $\lambda_2$ that satisfy
\begin{equation}
\label{eq:lailim}
    \lambda_1 \mI - \sum\nolimits_{\sigma \in \mc S_n} \prod\nolimits_{i=1}^n \mA_{\sigma(i)} \succeq \zeros,~~\text{ and }~~
    \lambda_2 \mI + \sum\nolimits_{\sigma \in \mc S_n} \prod\nolimits_{i=1}^n \mA_{\sigma(i)} \succeq \zeros,
\end{equation}
for all $(\mA_1, \dots \mA_n) \in \mc A \defeq \{(\mA_1, \dots, \mA_n) : \mA_i \succeq \zeros, \text{ and } \sum_i \mA_i \preceq n\mI \}$ (see the paper for more details).
From the SDP formulation, \citet{lai2020recht} find out that $\lambda_2 \approx 144.65$ for $n = 5$, which exceeds $5! = 120$, hence disproving the conjecture for $n = 5$\footnote{The authors also checked carefully that it is not a numerical error.}.

To see if the inequality~\eqref{eq:recht-re-conj} holds for PD matrices with smaller condition numbers, we slightly changed the code provided by \citet{lai2020recht} to calculate the minimum $\lambda_1$ and $\lambda_2$~\eqref{eq:lailim}, this time over a more stringent set $\wt{\mc A}_\eta \defeq \{(\mA_1, \dots, \mA_n) : \mA_i \succeq (1-\eta)\mI, \text{ and } \sum_i \mA_i \preceq n\mI \}$. 
We note that $\wt{\mc A}_1$ corresponds to the original $\mc A$.


\begin{table}[tbp]
\centering
\begin{tabular}{r|r|r|r|r|r|r|r}
\multicolumn{1}{c|}{$n$} &
\multicolumn{1}{c|}{$n!$} &
\multicolumn{1}{c|}{$\eta = 1$} &
\multicolumn{1}{c|}{$\eta = 0.9$} &
\multicolumn{1}{c|}{$\eta = 0.8$} &
\multicolumn{1}{c|}{$\eta = 0.7$} &
\multicolumn{1}{c|}{$\eta = 0.6$} &
\multicolumn{1}{c}{$\eta = 0.5$} \\
\hline
2 & 2 & 0.5000 & 0.2250 & 2.321e-10 & -0.1750 & -0.3200 & -0.5000 \\
3 & 6 & 3.4113 & 2.0147 & 0.9434 & 0.1620 & -0.3639 & -0.7500 \\
4 & 24 & 22.4746 & 15.1451 & 8.6081 & 3.6744 & 0.4138 & -1.4956 \\
5 & 120 & \textbf{144.6488} & 105.4259 & 70.0519 & 38.1546 & 13.2151 & -1.5622 \\
\end{tabular}
\caption{Minimum Values of $\lambda_2$ \eqref{eq:lailim} for different values of $n$ and $\eta$}
\label{tab:sdp}
\end{table}

Table~\ref{tab:sdp} shows the values of minimum $\lambda_2$ obtained for $\wt{\mc A}_\eta$, for $\eta \in \{0.5, 0.6, \dots, 1\}$\footnote{We do not report the values of $\lambda_1$ because the calculated values of $\lambda_1$ were always equal to $n!$, regardless of $\eta$; this is in fact attained by setting $\mA_1 = \dots = \mA_n = \mI$.}.
A negative value of the minimum $\lambda_2$ means that the sum $\sum_{\sigma} \prod_i \mA_{\sigma(i)}$ is in fact PD for all matrices in $\wt{\mc A}_\eta$.
One can see in Table~\ref{tab:sdp} that in case of $n = 5$, although we have $\lambda_2 > 5!$ for $\eta = 1$, $\lambda_2$ quickly becomes smaller than $5!$ with $\eta = 0.9$. This suggests that by letting $\eta$ decrease with $n$, we can still make the AM-GM inequality~\eqref{eq:recht-re-conj} hold.
We actually prove later in Theorem~\ref{thm:rs-1} (Section~\ref{sec:proof-of-conj}) that \eqref{eq:recht-re-conj} holds in some linear regression settings for general $n$ and $\eta = O(\frac{1}{n})$, providing evidence to this observation.

As also noted in \citet{lai2020recht}, the limitation of the SDP formulation is that the size of the SDP explodes quickly with $n$, so already the SDP with $n = 6$ exceeds the memory size of a moderate-sized computer. We were only able to solve the SDPs up to $n = 5$.

\paragraph{Disproof by counterexamples.}
\citet{de2020random} disproves the AM-GM inequality by \emph{constructing} a class of counterexamples. For any $n \geq 2$, \citet{de2020random} constructs $n$ matrices $\mA_i = \mI + \ones_n \vy_i^T + \vy_i \ones_n^T \in \sS^n_+$ for $i \in [n]$, where $\vy_i$ is defined as $[\vy_i]_i = \frac{\sqrt{n-1}}{n}$ and $[\vy_i]_j = \frac{-1}{n\sqrt{n-1}}$ for any $j \neq i$.
It is shown in \citet{de2020random} that this set of $n$ matrices is a counterexample to \eqref{eq:recht-re-conj} if $n$ satisfies
\begin{equation*}
    \left | \left( 1+\tfrac{1}{n-1} \right)^{n/2} \cos\left( n \arcsin\left( \tfrac{1}{\sqrt{n}} \right)\right )\right | > 1.
\end{equation*}
Here, $n = 5$ is the smallest such $n$.

The counterexample matrices $\mA_i$ are rank-deficient PSD matrices, with $\norm{\mA_i} = 2$ for all $i \in [n]$. Interestingly, the rank-deficiency or high enough condition number of $\mA_i$ seems important in the construction of counterexamples, because adding small $c_n \mI$ to each $\mA_i$ makes the matrices no longer a counterexample. For example, for $n = 5$, adding $0.06 \mI$ to each $\mA_i$ breaks the counterexample. For $n = 10$, adding $0.36 \mI$ suffices. Again, this observation leaves open the possibility that the inequality~\eqref{eq:recht-re-conj} is true for sufficiently well-conditioned PD matrices.

\paragraph{Positive results for close-to-identity matrices.}
Recent theoretical advances further inspire the study of a revised AM-GM inequality conjecture~\eqref{eq:recht-re-conj} for PD matrices with small condition numbers.
Theorem~1 of \citet{de2020random} proves that for any $n \geq 2$, if matrices $\mM_1, \dots, \mM_n \in \sS^d$ have no eigenvalue-eigenvector pair shared across all $\mM_i$'s, then there exists $\eta_{\max} \in (0,1]$ such that for any $\eta \in [0,\eta_{\max}]$ and $\mA_i = \mI-\eta \mM_i$, the inequality \eqref{eq:recht-re-conj} is true.
\citet{ahn2020sgd} also show that a slightly looser version of the \emph{symmetrized} AM-GM inequality conjecture (see \eqref{eq:otherver-4} in Appendix~\ref{sec:othervers} for its definition) holds for $\eta = O(\frac{1}{n})$ and matrices $\mA_i = \mI - \eta \mM_i$, where $\norm{\mM_i} \leq 1$ for all $i \in [n]$.

\section{New conjecture: SS-RS-GD inequalities}
\label{sec:conj}
Based on the motivations discussed so far, we now formally state our new conjecture in Conjecture~\ref{conj:main}, which we refer to as the \emph{SS-RS-GD inequalities} conjecture. Conjecture~\ref{conj:main} extends the AM-GM inequality conjecture~\eqref{eq:recht-re-conj} by adding the $\singshuf$ algorithm to the picture. It also refines the original conjecture to a setting that is more relevant to the convergence analysis of SGD.
\begin{conjecture}
\label{conj:main}
For any $n \geq 2$, $K \geq 1$, and $d \geq 1$, there exists a step-size constant $\eta_{n,K} \in (0,1]$ such that the following statement holds:
Suppose $\mA_1, \dots, \mA_n \in \sS^d_{+}$ 
satisfy
$(1-\eta_{n,K}) \mI \preceq \mA_i \preceq \mI$ for all $i \in [n]$. Then, for the three matrices defined as
\begin{align}
\label{eq:def-means}
    \ssmean \defeq \frac{1}{n!} \!\sum_{\sigma \in \mc S_n} \!\left( \prod_{i=1}^n \mA_{\sigma(i)} \right)^K\!,
    \rsmean \defeq \left( \frac{1}{n!} \!\sum_{\sigma \in \mc S_n} \prod_{i=1}^n \mA_{\sigma(i)} \right)^K\!,
    \gdmean \defeq \left( \frac{1}{n} \sum_{i=1}^n \mA_i \right)^{nK}\!,
\end{align}
the following spectral norm inequalities are satisfied:
\begin{equation*}
    \norm{\ssmean} \leq \norm{\rsmean} \leq \norm{\gdmean}.
\end{equation*}

\end{conjecture}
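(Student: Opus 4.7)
The plan is perturbative. Writing $\mA_i = \mI - \eta \mM_i$ with $\zeros \preceq \mM_i \preceq \mI$, the hypothesis $(1-\eta_{n,K})\mI \preceq \mA_i \preceq \mI$ becomes $\eta \leq \eta_{n,K}$, and each of $\ssmean, \rsmean, \gdmean$ is an analytic matrix-valued function of $\eta$ near $0$. Pairing each $\sigma \in \mc S_n$ with its reversal $\sigma \circ \rho$ shows that $\ssmean$ and $\rsmean$ are both symmetric, so all three spectral norms equal largest eigenvalues. The overall approach is: for each of the two differences $\gdmean - \rsmean$ and $\rsmean - \ssmean$, identify the lowest-order term in $\eta$, verify that it is positive semidefinite, and absorb the higher-order remainder by taking $\eta_{n,K}$ small enough.

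For the second inequality $\norm{\rsmean} \leq \norm{\gdmean}$, I would first handle $K=1$, which is exactly the Recht--R\'e matrix AM-GM inequality restricted to well-conditioned matrices. A direct Taylor expansion gives
\begin{equation*}
  \gdmean - \rsmean = \tfrac{1}{2n}\eta^2 \sum\nolimits_{k<l}(\mM_k - \mM_l)^2 + O(\eta^3),
\end{equation*}
which is PSD at leading order. A remainder estimate for $\eta = O(1/n)$ then yields $\gdmean \succeq \rsmean$, hence $\norm{\rsmean} \leq \norm{\gdmean}$ for $K=1$. The extension to $K \geq 2$ is automatic: since the $K=1$ versions of $\rsmean, \gdmean$ are symmetric, $\norm{Y^K} = \norm{Y}^K$, and a simple rearrangement reduces the general-$K$ conjecture to the $K=1$ case.

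For the first inequality $\norm{\ssmean} \leq \norm{\rsmean}$ --- the new content --- I would expand $\mP_\sigma = \mI + \eta \mP^{(1)} + \eta^2 \mP^{(2)}_\sigma + O(\eta^3)$, with the crucial observation that $\mP^{(1)} = -n \bar\mM$ is $\sigma$-independent. This forces $\ssmean$ and $\rsmean$ to agree through order $\eta^3$, and the first discrepancy is
\begin{equation*}
  \ssmean - \rsmean = \binom{K}{2}\eta^4 \bigl(\E_\sigma[(\mP^{(2)}_\sigma)^2] - (\E_\sigma \mP^{(2)}_\sigma)^2\bigr) + O(\eta^5).
\end{equation*}
The key sign verification exploits the reversal identity $\mP^{(2)}_{\sigma \circ \rho} = (\mP^{(2)}_\sigma)^T$, which implies $\mP^{(2)}_\sigma + (\mP^{(2)}_\sigma)^T$ is $\sigma$-independent and hence $\mP^{(2)}_\sigma - \E_\sigma \mP^{(2)}_\sigma = \tfrac12(\mP^{(2)}_\sigma - (\mP^{(2)}_\sigma)^T)$ is purely antisymmetric. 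Since the square of an antisymmetric matrix is negative semidefinite, the leading-order difference $\ssmean - \rsmean$ is $\preceq \zeros$, which is precisely the right sign for $\norm{\ssmean} \leq \norm{\rsmean}$.

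The main obstacle is controlling the higher-order Taylor remainders uniformly enough that $\eta_{n,K}$ only needs to shrink moderately in $n$ and $K$: the number of monomials at order $\eta^k$ grows combinatorially in both parameters, and naive bounds would force $\eta_{n,K}$ to zero much too fast to be interesting. As tractable stepping stones I would pursue (a) the $K=1$ case of the second inequality, rigorously recovering the well-conditioned regime suggested by Table~\ref{tab:sdp}; and (b) the first inequality under structural assumptions --- $n = 2$, commuting $\mM_i$, or the rank-one regression setting $\mM_i = \vx_i \vx_i^T$ from Section~\ref{sec:empirical} --- where $\mP^{(2)}_\sigma$ simplifies dramatically and the leading-order antisymmetry argument extends cleanly to all $\eta \leq \eta_{n,K}$.
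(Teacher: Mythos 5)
The target is a conjecture the paper itself leaves open, so the fair comparison is between your outline and the paper's partial results. Your perturbative decomposition is exactly the paper's engine: the degree-$\eta^2$ formula $\gdmean - \rsmean = \tfrac{\eta^2}{2n}\sum_{k<l}(\mM_k-\mM_l)^2 + O(\eta^3)$ is the $K=1$ content of Lemma~\ref{lem:amgm}, and your degree-$\eta^4$ formula for $\rsmean - \ssmean$ is precisely Lemma~\ref{lem:ssprobdep}, with your observation that $\mP^{(2)}_{\sigma\circ\rho} = (\mP^{(2)}_\sigma)^T$ forces the centered $\mP^{(2)}_\sigma - \E_\sigma\mP^{(2)}_\sigma$ to be antisymmetric playing the same role as the skew-symmetry of $e_2(\sigma)-e_2(\tilde\sigma)$ in Appendix~C.1 --- if anything, your centering argument is a slightly tidier packaging of that calculation. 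The reduction of the second inequality from general $K$ to $K=1$ via $\norm{Y^K}=\norm{Y}^K$ is also exactly how the paper opens the proof of Theorem~\ref{thm:rs-1}.

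The gap is twofold, and you have named only half of it. The half you named --- controlling the Taylor remainder uniformly over all admissible $\mA_i$, not merely for a matrix-dependent $\eta$ --- is indeed what separates Theorem~\ref{thm:ss-1} (and Theorem~1 of \citet{de2020random}) from the conjecture as stated, and the paper is explicit that Theorem~\ref{thm:ss-1} falls short of Conjecture~\ref{conj:main} for precisely this reason. The half you glossed over is that your leading coefficients are only positive \emph{semi}definite: $\sum_{k<l}(\mM_k-\mM_l)^2$ vanishes when the $\mM_i$ coincide, and $-\E_\sigma\bigl[(\mP^{(2)}_\sigma - \E\mP^{(2)}_\sigma)^2\bigr]$ is positive definite only under the genericity condition~\eqref{eq:thm-ss-1}. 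On the nullspace of a merely PSD leading term, ``absorb the higher-order remainder by taking $\eta_{n,K}$ small enough'' does not go through, because the remainder may have indefinite sign there. The paper escapes this either by assuming the leading term is definite (hypothesis~\eqref{eq:thm-ss-1}, at the cost of matrix dependence in $\eta_{\max}$), or by imposing the structured rank-one setting with incoherence (Theorem~\ref{thm:rs-1}), where the nullspace is controlled explicitly. Closing the conjecture would require either a structural reason the nullspace is harmless across all $n,K,d$, or a non-perturbative argument altogether; your sketch supplies neither, and the special cases you list in your last paragraph are essentially the ones the paper already proves.
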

Several remarks on Conjecture~\ref{conj:main} are in order.

\vspace*{-2pt}
\paragraph{Requirements on ``scaling'' are not strict.} 
Although the requirement $(1-\eta_{n,K}) \mI \preceq \mA_i \preceq \mI$ in Conjecture~\ref{conj:main} assumes $\mA_i$ in the form of the identity matrix minus a small PSD matrix, this specific form is not strictly necessary because the inequalities do not change when we scale all the matrices with the same multiplicative factor. Also, for the first inequality of the conjecture, individual scaling of $\mA_i$ is not important because scaling a single matrix $\mA_i$ does not change the inequality $\norm{\ssmean} \leq \norm{\rsmean}$. Rather, the actual requirement on $\mA_i$'s for the first inequality is a restriction that they are PD with condition numbers at most $\frac{1}{1-\eta_{n,K}}$.

\vspace*{-2pt}
\paragraph{Matrices that commute.}
For matrices $\mA_1, \dots, \mA_n$ that commute, Conjecture~\ref{conj:main} is true with $\eta_{n,K} = 1$. The product $\prod_{i=1}^n \mA_{\sigma(i)}$ is identical for all $\sigma \in \mc S_n$, so the first inequality of the conjecture holds with equality. The second inequality boils down to the scalar AM-GM inequality.

\vspace*{-2pt}
\paragraph{Doesn't Conjecture~\ref{conj:main} contradict matrix convexity of $\mA \mapsto \mA^2$?}
While the $\norm{\rsmean} \leq \norm{\gdmean}$ inequality of Conjecture~\ref{conj:main} is just the $K$-th power of the AM-GM inequality conjecture~\eqref{eq:recht-re-conj} with more stringent requirements on $\mA_i$'s, the first inequality $\norm{\ssmean} \leq \norm{\rsmean}$ may look questionable at first glance.
It is of the form $\norms{\E_\sigma [\mP_\sigma^K]} \leq \norms{\E_\sigma [\mP_\sigma]^K}$ and the direction of the inequality is the \textbf{\emph{opposite}} of Jensen's inequality! 
In particular, for $K=2$, it is well-known that the map $\mA \mapsto \mA^2$ on $\sS^d$ is matrix convex, meaning that we have $\frac{1}{4}(\mA+\mB)^2 \preceq \half(\mA^2 + \mB^2)$ for any $\mA, \mB \in \sS^d$. If $\mA$ and $\mB$ are also PSD, matrix convexity implies $\norm{\frac{1}{4}(\mA+\mB)^2} \leq \norm{\half(\mA^2 + \mB^2)}$, a norm inequality in the opposite direction from Conjecture~\ref{conj:main}.
The key difference here is that although $\mA_i$'s are symmetric and PD, their product $\prod_{i=1}^n \mA_{\sigma(i)}$ is not necessarily symmetric in general. Thus, Conjecture~\ref{conj:main} does not contradict the matrix convexity of $\mA \mapsto \mA^2$ on $\sS^d$.

\vspace*{-2pt}
\paragraph{Does Conjecture~\ref{conj:main} alone imply that $\singshuf$ always converges faster?}
The short answer is, in general, no. As seen in Section~\ref{sec:moti-sgd}, for the special case of $f_i(\vz) = \half \vz^T \mM_i \vz$, the conjectured inequalities indeed imply that the expected iterate of $\singshuf$ after $nK$ iterations is closer to the global minimum than $\randshuf$, $\sgd$, and GD. However, for general quadratic functions $f_i(\vz) = \half \vz^T \mM_i \vz + \vb_i^T \vz + c_i$, the linear coefficients $\vb_i$ introduce additional ``noise'' terms and they result in SGD iterates of the form
\begin{equation*}
    \vz_{t} = \Big ( \prod\nolimits_{j=t}^1 (\mI - \eta \mM_{i(j)}) \Big ) \vz_0
    - \eta \vb_{i(t)} - \eta \Big ( \sum\nolimits_{l=1}^{t-1} \Big ( \prod\nolimits_{j=t}^{l+1} (\mI - \eta \mM_{i(j)} )\Big ) \vb_{i(l)} \Big ),
\end{equation*}
and the terms involving $\vb_i$'s become a dominant factor that ``slows down'' SGD and determines the convergence rate. For example, if we assume without loss of generality that $\sum_{i=1}^n \vb_i = \zeros$ so as to locate the global minimum at $\vz^* = \zeros$, the GD iterates are $\vz_t = (\mI - \frac{\eta}{n} \sum_{i=1}^n \mM_i)^t \vz_0$. Notice that the iterates do not have any terms involving $\vb_i$'s, which is contrary to SGD; this difference results in a linear convergence rate of GD and sublinear rates of SGD algorithms.
Although Conjecture~\ref{conj:main} alone does not prove superior performance of $\singshuf$ over other algorithms, it still provides useful tools in the analysis of without-replacement sampling methods. We also believe that the matrix norm inequalities themselves are interesting in their own right!

\vspace*{-2pt}
\paragraph{Independence of $\eta_{n,K}$ from $d$ and $\mA_i$'s.}
In the step-size constant $\eta_{n,K}$ in Conjecture~\ref{conj:main}, we did not use the dimension $d$ in the subscript to reflect our belief that a dimension-independent constant exists. Also, Conjecture~\ref{conj:main} is stated in a way that $\eta_{n,K}$ is also matrix-independent, i.e., $\eta_{n,K}$ does not depend on $\mA_i$'s. Theorems~\ref{thm:ss-2} and \ref{thm:rs-1} presented in Section~\ref{sec:proof-of-conj} support our belief: we prove special cases of the conjecture with step-size constants $\eta_{n,K} = O(\frac{1}{K})$ or $O(\frac{1}{n})$, which are \emph{independent} of $d$ and $\mA_i$'s.
In fact, these theorems suggest that the true $\eta_{n,K}$ satisfying Conjecture~\ref{conj:main} may be of order $O(\frac{1}{nK})$. 
Although this may look like a strong restriction, we emphasize that this matches (up to log factors) the order of the step-sizes $\eta$ chosen in many recent results on without-replacement SGD~\citep{haochen2018random,nagaraj2019sgd,rajput2020closing,ahn2020sgd}.

\vspace*{-2pt}
\paragraph{Should $\eta_{n,K}$ necessarily decay with $n$ and $K$?}
From Section~\ref{sec:breaking-disproofs}, we saw that $\eta_{n,K} < 1$ must hold for $n \geq 5$ to make $\norm{\rsmean} \leq \norm{\gdmean}$ hold, and we speculated that $\eta_{n,K}$ should decrease with $n$ for the inequality to hold.
For the other inequality $\norm{\ssmean} \leq \norm{\rsmean}$, we do have a counterexample for any $n \geq 3$, $K \geq 2$ and $d \geq 2$ if $\eta_{n,K} = 1$; we prove this in Appendix~\ref{sec:finding-counterex}. The presence of a counterexample indicates that $\eta_{n,K} < 1$ is necessary for $n \geq 3$, again motivating our assumption of well-conditioned PD matrices. 
A further investigation on this counterexample also provides an evidence that $\eta_{n,K}$ should shrink with $K$ for $\norm{\ssmean} \leq \norm{\rsmean}$ to hold; see Appendix~\ref{sec:finding-counterex} for details.


\vspace*{-2pt}
\paragraph{Can one use the Positivstellensatz to verify the first inequality?}
Unfortunately, we do not know if we can check the inequality $\norm{\ssmean} \leq \norm{\rsmean}$ with a Positivstellensatz and SDP approach similar to \citet{lai2020recht}. This is because in the equivalent conjecture for this inequality, the feasible set (corresponding to $\mc A$ and $\wt{\mc A}_\eta$ in Section~\ref{sec:breaking-disproofs}) is no longer representable with linear matrix inequalities, violating the requirements of the Positivstellensatz in~\citet{helton2012convex}. For example, for $n = 2$ and $\eta_{2,K} = 1$, the equivalent conjecture to $\norm{\ssmean} \leq \norm{\rsmean}$ reads:
\begin{quote}
\textit{For any $\mA$, $\mB$ satisfying $\mA \succeq \zeros$, $\mB \succeq \zeros$ and $-2 \mI \preceq \mA \mB + \mB \mA \preceq 2\mI$, we have $-2 \mI \preceq (\mA \mB)^K + (\mB \mA)^K \preceq 2\mI$,}
\end{quote}
which involves nonlinear inequalities such as $\mA \mB + \mB \mA \preceq 2\mI$.

\vspace*{-2pt}
\paragraph{Other versions of Conjecture~\ref{conj:main}.}
The original Recht-R\'e conjecture~\eqref{eq:recht-re-conj} has other versions such as \emph{symmetrized} or \emph{expectation-of-norm}, and some of them are also disproved in \citet{de2020random}. We discuss the variants of \eqref{eq:recht-re-conj} as well as the corresponding versions of Conjecture~\ref{conj:main} in Appendix~\ref{sec:othervers}.


\section{Proof of Conjecture~\ref{conj:main} for special cases}
\label{sec:proof-of-conj}
In this section, we present theorems that prove some special cases of Conjecture~\ref{conj:main}. From this point onward, we occasionally write the matrices $\ssmean$, $\rsmean$, and $\gdmean$~\eqref{eq:def-means} as functions of $n$, $K$, and $\mA_1, \dots, \mA_n$ (e.g., $\ssmean(n,K,\mA_{1:n})$), whenever we need to make the dependency more explicit.

\subsection{The SS-RS-GD inequalities are true for generic matrices and sufficiently small $\eta$}
We first show that for any fixed generic matrices $\mM_1, \dots, \mM_n$, the conjectured inequalities are true for sufficiently small step-size $\eta$ and $\mA_i = \mI - \eta \mM_i$.
In Section~\ref{sec:proof-thm-ss-1}, we prove the following theorem:
\begin{theorem}
\label{thm:ss-1}
For any $n \geq 2$, suppose symmetric matrices $\mM_1, \dots, \mM_n \in \sS^{d}$ satisfy 
\begin{equation}
\label{eq:thm-ss-1}
    \sum\nolimits_{i, j \in [n], i \neq j}(\mM_i \mM_j - \mM_j \mM_i)^2 \prec \zeros.
\end{equation}
Then, for any $K \geq 1$, there exists $\eta_{\max} \in (0,1]$ such that for any $\eta \in [0,\eta_{\max}]$, the matrices $\mA_1, \dots, \mA_n$ defined as $\mA_i = \mI - \eta \mM_i$ satisfy $\norm{\ssmean(n,K,\mA_{1:n})} \leq \norm{\rsmean(n,K,\mA_{1:n})}$.
\end{theorem}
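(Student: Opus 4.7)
The plan is to expand $\ssmean$ and $\rsmean$ in powers of $\eta$, isolate the leading nonzero order of their Loewner difference, and verify that its coefficient is negative definite under the hypothesis~\eqref{eq:thm-ss-1}. First I would observe that both matrices are symmetric: pairing each $\sigma \in \mc S_n$ with its reverse $\sigma^R$ defined by $\sigma^R(i) \defeq \sigma(n+1-i)$ gives $\prod_i \mA_{\sigma^R(i)} = \bigl(\prod_i \mA_{\sigma(i)}\bigr)^T$, so both averages are fixed by transpose. For sufficiently small $\eta$ both matrices are close to $\mI$ and hence positive definite, so it suffices to establish the Loewner inequality $\ssmean \preceq \rsmean$, from which $\norm{\ssmean} = \lambda_{\max}(\ssmean) \leq \lambda_{\max}(\rsmean) = \norm{\rsmean}$.

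Define $\bar{\mP} \defeq \tfrac{1}{n!}\sum_{\sigma}\prod_i \mA_{\sigma(i)}$ and $\mD_\sigma \defeq \prod_i \mA_{\sigma(i)} - \bar{\mP}$, so $\E_\sigma[\mD_\sigma] = \zeros$. Expanding $(\bar{\mP} + \mD_\sigma)^K$ binomially and taking the expectation, the single-$\mD_\sigma$ terms vanish. Since the $O(\eta^0)$ and $O(\eta^1)$ parts of $\prod_i \mA_{\sigma(i)}$ are $\sigma$-independent (they equal $\mI$ and $-\eta \sum_i \mM_i$), we have $\mD_\sigma = O(\eta^2)$; combined with $\bar{\mP} = \mI + O(\eta)$ this yields
\begin{equation*}
    \ssmean - \rsmean \;=\; \binom{K}{2}\,\E_\sigma[\mD_\sigma^2] \;+\; O(\eta^5),
\end{equation*}
where $\binom{K}{2}$ counts the placements of two $\mD_\sigma$'s among the $K$ factors. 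Using the identity $\sum_{i<j}\mM_{\sigma(i)}\mM_{\sigma(j)} = \tfrac{1}{2}\bigl((\sum_i\mM_i)^2 - \sum_i\mM_i^2\bigr) + \tfrac{1}{2}\mT_\sigma$ with $\mT_\sigma \defeq \sum_{i<j}[\mM_{\sigma(i)},\mM_{\sigma(j)}]$, the $\sigma$-dependent part of $\prod_i \mA_{\sigma(i)}$ at order $\eta^2$ is $\tfrac{1}{2}\mT_\sigma$, so $\mD_\sigma = \tfrac{\eta^2}{2}\mT_\sigma + O(\eta^3)$ and $\E_\sigma[\mD_\sigma^2] = \tfrac{\eta^4}{4}\E_\sigma[\mT_\sigma^2] + O(\eta^5)$.

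The heart of the argument is a combinatorial enumeration of $\E_\sigma[\mT_\sigma^2]$, organized by the overlap size of the two commutator index-pairs $(i,j)$ and $(k,l)$. Evaluating the resulting permutation statistics case by case and reassembling using the skew-symmetry $[\mM_a,\mM_b] = -[\mM_b,\mM_a]$ together with the telescoping $\sum_{x\neq e}[\mM_e,\mM_x] = [\mM_e, \sum_j \mM_j]$, I expect the clean identity
\begin{equation*}
    \E_\sigma[\mT_\sigma^2] \;=\; \tfrac{1}{6}\sum_{i \neq j}[\mM_i,\mM_j]^2 \;+\; \tfrac{1}{3}\sum_{i=1}^n \bigl[\mM_i,\,\textstyle\sum_{j=1}^n \mM_j\bigr]^2.
\end{equation*}
Both summands are negative semidefinite---the second because each $[\mM_i,\sum_j\mM_j]$ is skew-symmetric---and the first is strictly negative definite by~\eqref{eq:thm-ss-1}. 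Hence $\E_\sigma[\mT_\sigma^2] \prec \zeros$, so the leading $\eta^4$ term of $\ssmean - \rsmean$ is strictly negative definite; choosing $\eta_{\max}$ small enough (in a manner depending on the fixed $\mM_i$'s) for this term to dominate the $O(\eta^5)$ remainder yields $\ssmean \preceq \rsmean$, as required.

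The hard part will be the combinatorial identity for $\E_\sigma[\mT_\sigma^2]$. The subcases in which the two index-pairs share exactly one index require evaluating expectations of the form $\E_\sigma[\mathrm{sign}(\sigma^{-1}(b)-\sigma^{-1}(a))\,\mathrm{sign}(\sigma^{-1}(d)-\sigma^{-1}(c))]$, which equal $\pm\tfrac{1}{3}$ depending on whether the shared index sits in matching or opposing slots across the two commutators. Tracking these signs across the four such configurations and collapsing them into a form where the hypothesis~\eqref{eq:thm-ss-1} manifestly applies is the bookkeeping bottleneck of the proof.
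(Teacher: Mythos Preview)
Your proposal is correct and follows the same overall perturbative strategy as the paper: expand $\ssmean$ and $\rsmean$ in powers of $\eta$, observe that the difference vanishes through order $\eta^3$, and show that the $\eta^4$ coefficient is positive definite under hypothesis~\eqref{eq:thm-ss-1}. The symmetry-via-reversal and positive-definiteness-for-small-$\eta$ reductions you make are exactly the ones used implicitly in the paper's Section~\ref{sec:proof-thm-ss-1}.

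Where you diverge is in how you handle the degree-$4$ coefficient. The paper writes $\rsmean-\ssmean$ at order $\eta^4$ as $-\tfrac{1}{2(n!)^2}\sum_{\sigma,\tilde\sigma}(e_2(\sigma)-e_2(\tilde\sigma))^2$, observes each summand is the square of a skew-symmetric matrix, and then \emph{lower-bounds} by throwing away all pairs except those where $\tilde\sigma$ differs from $\sigma$ by a single adjacent transposition; that subset yields $(n-1)!$ copies of each $[\mM_i,\mM_j]^2$, giving Lemma~\ref{lem:ssprobdep}. You instead compute $\E_\sigma[\mT_\sigma^2]$ \emph{exactly} via the pairwise correlations $\E_\sigma[s_\sigma(a,b)s_\sigma(c,d)]\in\{0,\pm\tfrac{1}{3},1\}$, arriving at the closed form $\tfrac{1}{6}\sum_{i\neq j}[\mM_i,\mM_j]^2+\tfrac{1}{3}\sum_i[\mM_i,\sum_j\mM_j]^2$. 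I verified your identity (disjoint pairs contribute $0$ by independence; one-overlap pairs contribute $\tfrac{1}{3}C_{ex}C_{ey}$ uniformly after absorbing signs via $C_{xe}=-C_{ex}$; completing the square over $x\neq e$ gives your second sum), and it agrees with the paper's expression for $n=2$. Your route costs more bookkeeping but buys an exact equality rather than the paper's one-sided bound, which is a genuine strengthening of Lemma~\ref{lem:ssprobdep}; the paper's transposition trick is shorter precisely because it settles for an inequality.
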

The assumption~\eqref{eq:thm-ss-1} requires that the sum of squared commutators of each pair of matrices $\mM_i$ is negative definite. We note that $\mM_i\mM_j - \mM_j \mM_i$ is a skew-symmetric matrix, so each $(\mM_i\mM_j - \mM_j \mM_i)^2$ is already negative semidefinite; therefore, the only requirement of \eqref{eq:thm-ss-1} is that the sum is full-rank. This assumption holds for generic matrices, i.e., the set of matrices $\mM_1, \dots, \mM_n$ that do not satisfy \eqref{eq:thm-ss-1} has Lebesgue measure zero.

Since Theorem~1 of \citet{de2020random} also holds for generic matrices and sufficiently small $\eta$, Theorem~1 of \citet{de2020random} and our Theorem~\ref{thm:ss-1} together prove that for generic symmetric matrices $\mM_1, \dots, \mM_n \in \sS^d$, $K \geq 1$ and sufficiently small $\eta \geq 0$, the matrices $\mA_i = \mI - \eta \mM_i$ satisfy the conjectured SS-RS-GD inequalities
\begin{equation*}
    \norm{\ssmean(n,K,\mA_{1:n})} \leq \norm{\rsmean(n,K,\mA_{1:n})} \leq \norm{\gdmean(n,K,\mA_{1:n})}.
\end{equation*} 
We emphasize that this holds for any $n$, $K$, and $d$.
However, we remark that the statement proven here is somewhat \emph{different} from our original Conjecture~\ref{conj:main}, because the step-size constant $\eta_{\max}$ \emph{depends} on the matrices $\mM_i$.
As discussed in Section~\ref{sec:conj}, our aim is to prove the existence of matrix-independent step-size constants $\eta_{n,K}$; we will see such theorems in the following sections.

\subsection{Proof for special cases of $n = 2$ with matrix-independent $\eta_{n,K}$}
In this section, we prove the inequality $\norm{\ssmean} \leq \norm{\rsmean}$ for $n=2$, in some special cases. 
\begin{theorem}
\label{thm:ss-2}
Consider $K = 2^m$, and $d \geq 1$, where $m$ is a positive integer. For $\mA, \mB \in \sS^d_{++}$ satisfying $(1-\frac{1}{2K}) \mI \preceq \mA \preceq \mI$ and $(1-\frac{1}{2K}) \mI \preceq \mB \preceq \mI$, we have 
\begin{equation*}
\half \norm{(\mA \mB)^K + (\mB \mA)^K} 
\leq
\norm{\half(\mA \mB + \mB \mA)}^K.
\end{equation*}
\end{theorem}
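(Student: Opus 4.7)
The plan is to induct on $m$ where $K = 2^m$. Setting $\mX = \mA\mB$ and $\mY = \mB\mA$, the symmetry of $\mA$ and $\mB$ gives $\mX^T = \mY$, so $\mX^j + \mY^j$ is symmetric and $\mX^j - \mY^j$ is skew-symmetric for every $j \geq 1$. Consequently $(\mX^j + \mY^j)^2 \succeq \zeros$ while $(\mX^j - \mY^j)^2 \preceq \zeros$. The base case $K = 1$ is trivial (equality).

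The heart of the inductive step is the polarization-like identity
\begin{equation*}
\frac{\mX^{2M} + \mY^{2M}}{2} = \frac{(\mX^M + \mY^M)^2}{4} + \frac{(\mX^M - \mY^M)^2}{4},
\end{equation*}
which decomposes the symmetric matrix on the left as a PSD part plus an NSD part. Since the matrix is symmetric, its spectral norm equals the largest absolute eigenvalue, so
\begin{equation*}
\normbig{\tfrac{\mX^{2M}+\mY^{2M}}{2}} \leq \max\Big( \normbig{\tfrac{\mX^M+\mY^M}{2}}^2,\ \normbig{\tfrac{\mX^M-\mY^M}{2}}^2 \Big).
\end{equation*}
For the inductive step from $M = 2^{m-1}$ to $K = 2M$, the first maximand is bounded by $\|(\mX+\mY)/2\|^{2M} = \|(\mX+\mY)/2\|^K$ via the inductive hypothesis applied at level $M$ (the assumption $\epsilon \defeq \tfrac{1}{2K} \leq \tfrac{1}{2M}$ is strictly stricter than the IH requires).

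For the second maximand, set $\mM = \mI - \mA$ and $\mN = \mI - \mB$, so $\|\mM\|, \|\mN\| \leq \epsilon$ and $\mX - \mY = \mM\mN - \mN\mM$, giving $\|\mX - \mY\| \leq 2\epsilon^2$. Combined with $\|\mX\|, \|\mY\| \leq \|\mA\|\|\mB\| \leq 1$, the telescoping identity $\mX^M - \mY^M = \sum_{i=0}^{M-1} \mX^{M-1-i}(\mX - \mY) \mY^i$ yields $\|(\mX^M - \mY^M)/2\|^2 \leq M^2 \epsilon^4 = \tfrac{1}{64 K^2}$. Separately, $\|(\mX+\mY)/2 - \mI\| \leq 2\epsilon + \epsilon^2 \leq 3\epsilon$ shows that the symmetric matrix $(\mX+\mY)/2$ has all eigenvalues in $[1-3\epsilon,\,1+3\epsilon]$, so its spectral norm is at least $1 - \tfrac{3}{2K}$; raising to the $K$-th power and checking at $K = 2$ gives $\|(\mX+\mY)/2\|^K \geq (1 - \tfrac{3}{2K})^K \geq \tfrac{1}{16}$, which comfortably exceeds $\tfrac{1}{64 K^2}$ for all $K = 2^m$ with $m \geq 1$.

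The main obstacle is recognizing the symmetric/skew-symmetric decomposition in the first place; without it there is no obvious way to bound $\mX^K + \mY^K$ by a power of $\mX + \mY$, since the desired inequality goes in the \emph{opposite} direction of Jensen's inequality (as the authors themselves flag in their discussion of matrix convexity of $\mA \mapsto \mA^2$). Once the decomposition is in hand, the remaining quantitative task is calibrating the step-size $\epsilon$ so that the $O(1/K^2)$ commutator contribution is dominated by the $\Omega(1)$ anticommutator lower bound, which is what forces the choice $\epsilon = \tfrac{1}{2K}$. The power-of-two restriction on $K$ is inherent: the polarization identity doubles the exponent, so induction proceeds naturally in powers of two, and extending beyond this would require a different algebraic trick.
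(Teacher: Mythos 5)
Your proof is correct, and it starts from the same polarization-like identity as the paper: writing $\tfrac{1}{2}(\mX^{2M}+\mY^{2M}) = \tfrac{1}{4}(\mX^M+\mY^M)^2 + \tfrac{1}{4}(\mX^M-\mY^M)^2$ with $\mX=\mA\mB$, $\mY=\mB\mA$ and noting the second term is a square of a skew-symmetric matrix, hence NSD. Where you diverge is how the norm inequality is extracted from this decomposition. The paper first proves a standalone lemma (their Lemma~\ref{lem:sspsd}) that $(\mA\mB)^K + (\mB\mA)^K \succeq \zeros$ under the $\eta = \tfrac{1}{2K}$ condition --- proved by a power-series expansion of $(\mA\mB)^K$ in $\eta$ and a comparison with the binomial expansion of $1^K$ --- and then invokes $\zeros \preceq \mC \preceq \mD \Rightarrow \|\mC\| \leq \|\mD\|$ to get the recursion in one step. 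You avoid proving PSD-ness altogether: you use the elementary fact (Weyl) that if $\mC = \mP + \mQ$ with $\mP \succeq \zeros$ and $\mQ \preceq \zeros$, then $\|\mC\| \leq \max(\|\mP\|, \|\mQ\|)$, bound the first maximand by the inductive hypothesis, and knock out the second maximand via the telescoping commutator estimate $\|\mX^M-\mY^M\| \leq M\|\mX-\mY\| \leq 2M\epsilon^2$ together with the crude lower bound $\|\tfrac{\mX+\mY}{2}\|^K \geq (1-\tfrac{3}{2K})^K \geq \tfrac{1}{16}$. Both calibrations force $\epsilon = \tfrac{1}{2K}$, but your completion of the argument is arguably more elementary (no binomial expansion), at the cost of not producing the intermediate PSD-ness result that the paper's Lemma~\ref{lem:sspsd} gives for free. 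The one small thing worth being explicit about is the quantifier in the PSD/NSD split: it gives $\lambda_{\max}(\mC) \leq \lambda_{\max}(\mP)$ and $\lambda_{\min}(\mC) \geq \lambda_{\min}(\mQ)$, whence the max-of-norms bound --- a line of justification you should spell out if writing this up fully, but it is standard and correct.
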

\begin{theorem}
\label{thm:ss-3}
For any $\mA, \mB \in \sS^2_+$ and $K \geq 1$, we have 
\begin{equation*}
\half \norm{(\mA \mB)^K + (\mB \mA)^K} 
\leq
\norm{\half(\mA \mB + \mB \mA)}^K.
\end{equation*}
\end{theorem}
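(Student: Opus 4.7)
The plan is to use Cayley--Hamilton in dimension $d=2$ to reduce the matrix inequality to a scalar one, and then prove the scalar version via a termwise binomial comparison. The crucial $d=2$ fact I would exploit is that every $2\times 2$ real skew-symmetric matrix is a scalar multiple of a fixed skew-symmetric matrix $\mJ$ with $\mJ^2 = -\mI$.

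Setting $\mM := \mA\mB$ (so $\mB\mA = \mM^T$), I would write $\mM = \mS + \alpha\mJ$, where $\mS := \tfrac{1}{2}(\mM + \mM^T)$ and $\alpha \in \reals$ is the unique scalar for which $\tfrac{1}{2}(\mM - \mM^T) = \alpha\mJ$. Since $\mA\mB$ is similar to a PSD matrix, its eigenvalues $\lambda_1 \geq \lambda_2 \geq 0$ are real and nonnegative, so Cayley--Hamilton gives $\mM^K = a_K \mM + b_K \mI$ with scalars $a_K, b_K$ determined by $a_K \lambda_i + b_K = \lambda_i^K$. The same formula (same eigenvalues) holds for $\mM^T$, so averaging yields $\tfrac{1}{2}(\mM^K + (\mM^T)^K) = a_K \mS + b_K \mI$. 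Hence the LHS of the theorem equals $\max_i |p(\mu_i)|$, where $p(x) := a_K x + b_K$ and $\mu_1, \mu_2$ are the eigenvalues of $\mS$.

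Next I would relate $(\mu_i)$ to $(\lambda_i)$. From $\tr(\mS) = \tr(\mM)$, $\mu_1 + \mu_2 = \lambda_1 + \lambda_2 =: 2m \geq 0$. A direct $2\times 2$ computation shows $\det(\mS + \alpha\mJ) = \det(\mS) + \alpha^2$, whence $\mu_1\mu_2 = \lambda_1\lambda_2 - \alpha^2$. Setting $\beta := (\lambda_1 - \lambda_2)/2 \geq 0$ and, WLOG, $\delta := (\mu_1 - \mu_2)/2 \geq 0$, this gives the crucial relation $\delta^2 = \beta^2 + \alpha^2$, so $0 \leq \beta \leq \delta$; combined with $m \geq \beta$ (from $\lambda_2 \geq 0$), we get $\norm{\mS} = m + \delta$.

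To finish, I would write $p(m \pm \delta) = E \pm \delta O$, where $E := \sum_{j \text{ even}} \binom{K}{j} m^{K-j} \beta^{j}$ and $O := \sum_{j \text{ odd}} \binom{K}{j} m^{K-j} \beta^{j-1}$ are both nonnegative since $m, \beta \geq 0$. Then $|p(m-\delta)| = |E - \delta O| \leq E + \delta O = p(m+\delta)$, so the LHS reduces to $p(m+\delta)$. A termwise comparison against $(m+\delta)^K = \sum_{j\text{ even}} \binom{K}{j} m^{K-j} \delta^j + \delta \sum_{j \text{ odd}} \binom{K}{j} m^{K-j} \delta^{j-1}$ then closes the argument: the inequality $0 \leq \beta \leq \delta$ dominates every summand, giving $p(m+\delta) \leq (m+\delta)^K = \norm{\mS}^K$. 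The decisive step is establishing $\delta \geq \beta$, which is where the PSD hypothesis and $d=2$ both enter; I expect no further obstacle, and consistent with the theorem statement, no step-size restriction appears.
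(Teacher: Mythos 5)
Your proof is correct, and it takes a genuinely different route from the paper's. The paper diagonalizes $\mA\mB = \mQ\mS\mQ^{-1}$ over the reals, then invokes the numerical radius $w$ together with the classical power inequality $w(\mM^K)\leq w(\mM)^K$ (Pearcy), and the heavy lifting is done by a technical $2\times 2$ identity (Lemma~\ref{lem:numrad}) asserting $w(\mQ\mLambda\mQ^{-1}) = \tfrac12\norm{\mQ\mLambda\mQ^{-1}+\mQ^{-T}\mLambda\mQ^T}$. You instead split $\mM=\mA\mB$ into symmetric plus skew parts $\mS+\alpha\mJ$, use Cayley--Hamilton to write $\mM^K = a_K\mM + b_K\mI$ with $p(x)=a_Kx+b_K$ interpolating $x\mapsto x^K$ at the eigenvalues $\lambda_i$ of $\mM$, and then reduce the whole inequality to the scalar bound $|p(\mu_i)|\leq\norm{\mS}^K$. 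The role played in the paper by Lemma~\ref{lem:numrad} is played in your argument by the $2\times2$ determinant identity $\det(\mS+\alpha\mJ)=\det\mS+\alpha^2$, which yields the decisive $\delta\geq\beta$ with $\delta=\tfrac12(\mu_1-\mu_2)$ and $\beta=\tfrac12(\lambda_1-\lambda_2)$; from there the termwise binomial comparison finishes. Both approaches lean on the same $d=2$ geometry (that the skew part is one-dimensional), but yours is more elementary and self-contained, bypassing the numerical-radius machinery entirely. One small slip worth noting: to conclude $\norm{\mS}=m+\delta$ you cite $m\geq\beta$, but what is actually needed (and true, from $\lambda_1,\lambda_2\geq 0$) is $m\geq 0$, which gives $|m-\delta|\leq m+\delta$. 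You should also state explicitly why $\mA\mB$ has real nonnegative eigenvalues (e.g.\ $\det(x\mI-\mA\mB)=\det(x\mI-\mA^{1/2}\mB\mA^{1/2})$ and the latter matrix is PSD), and handle the degenerate case $\lambda_1=\lambda_2$ of Cayley--Hamilton either by continuity or by noting the binomial identities remain valid at $\beta=0$. These are cosmetic; the argument is sound.
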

Theorem~\ref{thm:ss-2} proves the $\norm{\ssmean} \leq \norm{\rsmean}$ inequality for $n = 2$, $K = 2^m$, and $d \geq 1$, with step-size constants $\eta_{2,K} = \frac{1}{2K}$. Theorem~\ref{thm:ss-3} proves the same inequality for $n = 2$, $K \geq 1$, and $d = 2$, with $\eta_{2,K} = 1$ (i.e., the inequality holds for \emph{any PSD} matrices).
Since $\norm{\rsmean} \leq \norm{\gdmean}$ holds for $n = 2$ and any PSD matrices~\citep{recht2012toward}, the theorems prove the entire Conjecture~\ref{conj:main} for the special cases considered in them. The proofs of Theorem~\ref{thm:ss-2} and \ref{thm:ss-3} are sketched in Sections~\ref{sec:proof-thm-ss-2} and \ref{sec:proof-thm-ss-3}, respectively.

The restriction of Theorem~\ref{thm:ss-3} to $d = 2$ is an artifact of our proof technique that does not extend to $d \geq 3$ (see Section~\ref{sec:proof-thm-ss-3} for more details). Nevertheless, for $n = 2$, we believe that the conjecture itself is likely true for any $d$ and $K$, with step-size constants $\eta_{2,K} = 1$.

\subsection{Proof of the AM-GM inequality for general $n$ in linear regression settings}
Next, we prove the second inequality of Conjecture~\ref{conj:main}, i.e., the matrix AM-GM inequality, for some linear regression settings with any $n$ and $\eta_{n,K} = O(\frac{1}{n})$, when the data points are ``diverse enough.''
\begin{theorem}
\label{thm:rs-1}
For any $n \geq 2$, $K\geq 1$, and $d \leq n$, suppose that we have $n$ unit vectors $\vx_1, \dots, \vx_n \in \reals^d$ satisfying the following two assumptions:
\begin{enumerate}[label=(A\arabic*)]
    \item \label{assm:1} For any $i, j \in [n]$, $i \neq j$, we have $|\vx_i^T \vx_j| \leq n^{-1/2}$.
    \item \label{assm:2} For any unit vector $\vu \in \reals^d$, we have $n^{-1/4} \leq \sum_{i=1}^n (\vu^T\vx_i)^2 \leq n^{1/4}$.
\end{enumerate}
Then, for any $\eta \in [0,\frac{1}{6n}]$, the matrices $\mA_1, \dots, \mA_n$ defined as $\mA_i = \mI - \eta \vx_i\vx_i^T$ satisfy the second inequality of Conjecture~\ref{conj:main}: $\norm{\rsmean(n,K,\mA_{1:n})} \leq \norm{\gdmean(n,K,\mA_{1:n})}$.
\end{theorem}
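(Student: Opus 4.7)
The plan is to first reduce the $K$-epoch statement to a one-epoch inequality, and then prove the one-epoch inequality via a polynomial expansion in $\eta$ that exploits the rank-one structure $\mN_i \defeq \vx_i \vx_i^T$ together with assumptions (A1)--(A2). Set
\[
\mR \defeq \frac{1}{n!}\sum_{\sigma \in \mc S_n}\prod_{i=1}^{n} \mA_{\sigma(i)}, \qquad \mG \defeq \bigl(\mI - \tfrac{\eta}{n}\mM\bigr)^n, \qquad \mM \defeq \sum_{i=1}^n \vx_i \vx_i^T.
\]
The matrix $\mR$ is symmetric, since the involution $\sigma \mapsto \sigma \circ \mrm{rev}$ on $\mc S_n$ sends each $\mP_\sigma \defeq \prod_i \mA_{\sigma(i)}$ to its transpose. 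Under $\eta\leq \tfrac{1}{6n}$ and (A2) (which gives $\|\mM\|\leq n^{1/4}$), crude bounds show both $\mR$ and $\mG$ are PSD with spectrum in $[0,1]$, so $\|\rsmean\| = \|\mR\|^K$ and $\|\gdmean\|=\|\mG\|^K$; it suffices to prove $\|\mR\|\leq \|\mG\|$.

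Next, I would expand both matrices as polynomials in $\eta$:
\[
\mR = \sum_{k=0}^n \frac{(-\eta)^k}{k!}\,T_k, \qquad T_k \defeq \!\!\!\sum_{(j_1,\dots,j_k)\text{ distinct}}\!\!\! \mN_{j_1}\mN_{j_2}\cdots\mN_{j_k}, \qquad \mG = \sum_{k=0}^n \frac{(-\eta)^k}{k!}\,q_k\,\mM^k,
\]
where $q_k \defeq n!/((n-k)!\,n^k) \leq 1$. Assumption (A2) places the spectrum of $\mM$ inside $[n^{-1/4}, n^{1/4}]$, so $\|\mG\| = (1 - \tfrac{\eta}{n}\lambda_{\min}(\mM))^n$. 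Using the rank-one factoring $\mN_{j_1}\cdots \mN_{j_k} = \vx_{j_1}\vx_{j_k}^T \prod_{\ell=1}^{k-1} \alpha_{j_\ell j_{\ell+1}}$ with $\alpha_{ij}\defeq \vx_i^T\vx_j$, the quadratic form $\vu^T T_k \vu$ becomes a ``sum over paths'' where each edge carries a factor controlled by (A1). The goal is to establish the pointwise inequality $\vu^T \mR \vu \leq (1-\tfrac{\eta}{n}\,\vu^T\mM\vu)^n$ for every unit $\vu$; since the right-hand side is monotone decreasing in $b\defeq\vu^T\mM\vu$, taking the supremum gives $\|\mR\|\leq (1-\tfrac{\eta\lambda_{\min}(\mM)}{n})^n = \|\mG\|$.

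The main obstacle is extracting enough cancellation from $T_k$ for the term-by-term comparison. Inclusion--exclusion on repeated indices yields clean identities at low order (e.g.\ $T_2 = \mM^2 - \mM$ and $T_3 = \mM^3 - 2\mM^2 + 2\mM - \sum_i (\vx_i^T\mM\vx_i)\mN_i$), and the $k=2$ contribution already produces a strictly positive gap $\tfrac{\eta^2}{2}\bigl(b - \vu^T\mM^2\vu/n\bigr)$ on the $\mG - \mR$ side, of magnitude at least $\Omega(\eta^2/\sqrt{n})$ under (A2). For general $k$ the naive estimate ``number of tuples times max norm'' of order $n^k \cdot n^{-(k-1)/2}$ is too weak; instead one must iterate an identity of the form $T_k = \mM T_{k-1} - (\text{lower-order correction})$ and use (A1) to bound off-diagonal chains by $n^{-(k-1)/2}$ while (A2) bounds $\mM$-moments by $n^{k/4}$, eventually yielding $\|T_k\| \lesssim C_k n^{k/4}$ with mild $C_k$. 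The step-size restriction $\eta\leq \tfrac{1}{6n}$ then makes the effective expansion parameter $\eta n^{1/4}\leq \tfrac{1}{6n^{3/4}}$ small enough that $\sum_{k\geq 3}\tfrac{\eta^k}{k!}(q_k\mM^k - T_k)$ is dominated by the positive $k=2$ gap, closing the argument.
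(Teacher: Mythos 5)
Your high-level plan (reduce $K$ to $1$, expand $\rsmean$ and $\gdmean$ in powers of $\eta$, show the degree-$0$ and degree-$1$ terms cancel, and then let a positive degree-$2$ gap dominate the tail under $\eta \leq \frac{1}{6n}$) is the right one and matches the paper. But the specific target you set, the pointwise inequality $\vu^T \mR \vu \leq (1-\tfrac{\eta}{n}\vu^T\mM\vu)^n$ for every unit $\vu$, is strictly stronger than $\mR \preceq \mG$ and appears to be false under the stated hypotheses. Expanding both sides in $\eta$, the degree-$2$ gap of your pointwise comparison is
\[
\tfrac{\eta^2}{2}\!\left[\tfrac{n-1}{n}b^2 - \vu^T\mM^2\vu + b\right], \qquad b = \vu^T\mM\vu,
\]
and this can be negative: if $\vu = \tfrac{1}{\sqrt 2}(\vu_1 + \vu_2)$ with $\vu_1,\vu_2$ eigenvectors of $\mM$ at $\lambda_1 \approx n^{1/4}$ and $\lambda_2 \approx n^{-1/4}$, then $b \approx \tfrac{1}{2}n^{1/4}$ while $\vu^T\mM^2\vu \approx \tfrac{1}{2}n^{1/2}$, so the bracket $\approx \tfrac{n^{1/2}}{4} - \tfrac{n^{1/2}}{2} + \tfrac{n^{1/4}}{2} < 0$ for moderate $n$. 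Notice that the gap you actually compute, $\tfrac{\eta^2}{2}\bigl(b - \tfrac{1}{n}\vu^T\mM^2\vu\bigr)$, is \emph{not} the degree-$2$ gap of your pointwise bound --- it is the degree-$2$ part of $\vu^T(\mG-\mR)\vu$, i.e., of the Löwner-order comparison $\mR \preceq \mG$. So your writeup quietly switches from the (false) pointwise target to the (correct) matrix-inequality target; only the latter is the thing to prove, and it is what the paper proves via Lemma~\ref{lem:amgm}: show $\lambda_{\min}(\gdmean - \rsmean) \geq 0$ together with $\lambda_{\min}(\rsmean) \geq 0$, which (using symmetry of both matrices) directly gives $\|\rsmean\| \leq \|\gdmean\|$.

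Two further gaps. First, the tail estimate ``$\|T_k\| \lesssim C_k\, n^{k/4}$ with mild $C_k$'' is asserted without proof, and the iterate-an-identity sketch doesn't obviously yield it. You don't need a bound that strong: the paper writes $\overline{e_m} = \tfrac{1}{m!}\mX \mT_m \mX^T$ with $\mT_m$ a hollow $n\times n$ ``path-weight'' matrix, bounds $\|\mT_m\|$ by its Frobenius norm to get $\|\mT_m\|\leq (n\delta)^{m-1}$, and concludes $\|\overline{e_m}\| \leq s_{\max}(n\delta)^{m-1}$, which with $\delta = n^{-1/2}$, $s_{\max}=n^{1/4}$ is of order $n^{(2m-1)/4}$ --- weaker than your claim but already enough once $\eta \leq \tfrac{1}{6n}$. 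Second, you need to control degree $3$ separately (the paper does so with a dedicated estimate \eqref{eq:proof-lem-amgm-8}) before the geometric tail bound kicks in at degree $4$; your sketch lumps everything from $k\geq3$ into a single geometric bound, which the arithmetic does not quite support. If you replace the pointwise target with the Löwner-order comparison and carry out the degree-$3$ and degree-$\geq 4$ bounds along the lines above, the argument closes.
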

The proof of Theorem~\ref{thm:rs-1} is sketched in Section~\ref{sec:proof-thm-rs-1}. Notice that we are restricting our attention to $d \leq n$, because if $d > n$, the linear span of unit vectors $\vx_1, \dots, \vx_n \in \reals^d$ has dimension at most $n$, so the inequality trivially holds with $\norm{\rsmean} = \norm{\gdmean} = 1$.

Theorem~\ref{thm:rs-1} requires that the data points are sufficiently ``spread out'' in the $\reals^d$ space. As a sanity check, when $d = n$, any set of vectors that forms an orthonormal basis satisfies the assumptions.
The assumptions are motivated from the \emph{mutual incoherence} and \emph{restricted isometry property} assumptions commonly adopted in the compressed sensing literature. They are used in order to ensure that the ``information'' is distributed evenly over a measurement matrix, so that exact recovery of the true sparse signal is possible (for a recent summary, see \citet{rani2018systematic} and references therein).
The first assumption~\ref{assm:1} is precisely the mutual incoherence assumption and it requires any two columns of the data matrix $\mX = \begin{bmatrix} \vx_1 & \cdots & \vx_n \end{bmatrix}$ to have bounded inner product, and the other assumption~\ref{assm:2} requires that the matrix $\mX\mX^T$ is close to an isometry, which is motivated from the restricted isometry property. 

Under these assumptions, Theorem~\ref{thm:rs-1} proves that the matrix AM-GM inequality holds for linear regression settings with data points that are diverse enough. 
Although Theorem~\ref{thm:rs-1} is restricted to a special case where $\mA_i$ is the identity matrix minus a rank-1 matrix, it is interesting because it works for all $n \geq 2$, especially provided that the original conjecture~\eqref{eq:recht-re-conj} for PSD matrices is false as soon as $n = 5$. The theorem provides evidence towards the correctness of our revised conjecture for well-conditioned PD matrices.

\section{Proof sketches}
\label{sec:proof-sketches}
\subsection{Proof of Theorem~\ref{thm:ss-1}}
\label{sec:proof-thm-ss-1}
Theorem~\ref{thm:ss-1} is a consequence of the following lemma, which we prove in Appendix~\ref{sec:proof-lem-ssprobdep}:
\begin{lemma}
\label{lem:ssprobdep}
For any $n\geq 2$ and $K \geq 1$, suppose we have $\mA_1, \dots, \mA_n \in \reals^{d \times d}$ of the form $\mA_i = \mI - \eta \mM_i$.
Then, the difference $\rsmean(n,K,\mA_{1:n}) - \ssmean(n,K,\mA_{1:n})$ is written as
\begin{align*}
\rsmean - \ssmean 
\succeq -\frac{\eta^4 K (K-1)}{4n \cdot n!} \sum_{i,j \in [n], i\neq j} (\mM_i \mM_j - \mM_j \mM_i)^2 + O(\eta^5).
\end{align*}
\end{lemma}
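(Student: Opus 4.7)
The plan is to Taylor-expand in $\eta$ and isolate the leading-order ($\eta^4$) difference between $\rsmean$ and $\ssmean$. First I expand each per-epoch product as
\[
\mP_\sigma := \prod_{i=1}^n (\mI - \eta \mM_{\sigma(i)}) \;=\; \mI - \eta\, n\bar{\mM} + \eta^2 \mP_\sigma^{(2)} + O(\eta^3),
\]
where $\bar{\mM}=\tfrac{1}{n}\sum_i\mM_i$ and $\mP_\sigma^{(2)} = \sum_{i<j}\mM_{\sigma(i)}\mM_{\sigma(j)}$. The key observation is that the $O(\eta)$ coefficient is independent of $\sigma$, so setting $\mQ:=\E_\sigma\mP_\sigma$ and $\mD_\sigma:=\mP_\sigma-\mQ$ yields $\mD_\sigma = \eta^2 \mR_\sigma + O(\eta^3)$ with $\mR_\sigma := \mP_\sigma^{(2)}-\E_\tau\mP_\tau^{(2)}$. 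Counting, for each ordered pair $(a,b)$, the probability that $a$ precedes $b$ under a uniform $\sigma$ (which equals $1/2$) yields the explicit identity
\[
\mR_\sigma \;=\; \tfrac{1}{2}\sum_{a<b} s_{ab}(\sigma)\,[\mM_a,\mM_b],\qquad s_{ab}(\sigma):=\sign\bigl(\sigma^{-1}(b)-\sigma^{-1}(a)\bigr)\in\{\pm 1\},
\]
showing that $\mR_\sigma$ is skew-symmetric (a real linear combination of commutators of symmetric matrices).

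Next I expand $\ssmean-\rsmean=\E_\sigma\bigl[(\mQ+\mD_\sigma)^K-\mQ^K\bigr]$ via the noncommutative binomial theorem. Terms linear in $\mD_\sigma$ vanish in expectation; terms with three or more $\mD_\sigma$-factors are $O(\eta^6)$ since $\mD_\sigma=O(\eta^2)$. Collecting the remaining quadratic-in-$\mD_\sigma$ contributions and using $\mQ=\mI+O(\eta)$ to reduce every sandwiched $\mQ^{j-i-1}$ to the identity at leading order,
\[
\ssmean - \rsmean \;=\; \binom{K}{2}\eta^4\, \E_\sigma[\mR_\sigma^2] \;+\; O(\eta^5).
\]
Skew-symmetry of $\mR_\sigma$ then gives $\mR_\sigma^2 = -\mR_\sigma^T\mR_\sigma \preceq \zeros$, so $\rsmean-\ssmean\succeq O(\eta^5)$ already at the qualitative level.

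To upgrade this to the quantitative statement, I would expand
\[
\E_\sigma[\mR_\sigma^2] \;=\; \tfrac{1}{4}\sum_{a<b,\,c<d}\E_\sigma[s_{ab}s_{cd}]\,[\mM_a,\mM_b][\mM_c,\mM_d],
\]
and evaluate $\E_\sigma[s_{ab}s_{cd}]$ by cases on $|\{a,b\}\cap\{c,d\}|$: the value is $1$ for identical pairs, $0$ for disjoint pairs (by independence of the two marginal orderings), and $\pm\tfrac13$ for pairs sharing a single index (with the sign determined by whether the shared index is extremal or intermediate among the three relevant positions, as read off from the uniform law on $3!$ relative orderings). Substituting and converting matrix products into inner products of the vectors $\vu_{ab}:=[\mM_a,\mM_b]\vv$, the lemma reduces to the scalar inequality $\E_\sigma\|\mR_\sigma\vv\|^2 \ge \tfrac{1}{n\cdot n!}\sum_{a<b}\|\vu_{ab}\|^2$ for every $\vv\in\R^d$, which in turn follows from a spectral lower bound $\lambda_{\min}(\mSigma) \ge 4/(n\cdot n!)$ on the sign-covariance matrix $\mSigma\in\R^{\binom{n}{2}\times\binom{n}{2}}$ with entries $[\mSigma]_{(a,b),(c,d)} = \E_\sigma[s_{ab}s_{cd}]$.

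The main obstacle I anticipate is precisely this final spectral bound on $\mSigma$: its off-diagonals have magnitude $1/3$, comparable to the diagonal $1$, so naive Cauchy--Schwarz loses too much already for $n\ge 4$. The cleanest route is likely to exploit the $\mc S_n$-invariance of $\mSigma$ under the action that simultaneously permutes the $n$ base indices, decomposing $\mSigma$ along the irreducible components of the permutation representation on the space of unordered pairs and computing the minimum eigenvalue on each component. An alternative is a coupling argument via adjacent transpositions: the identity $\mP_\sigma - \mP_{\sigma\circ(k,k{+}1)} = \eta^2\,\mA_{\sigma(1)}\cdots\mA_{\sigma(k-1)}[\mM_{\sigma(k)},\mM_{\sigma(k+1)}]\mA_{\sigma(k+2)}\cdots\mA_{\sigma(n)}$ expresses $\mD_\sigma$ as a telescoping sum of single-swap commutator terms, and a careful summation over $\mc S_n$-many paths should extract the $1/(n\cdot n!)$ factor.
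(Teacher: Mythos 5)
Your degree-4 analysis is correct and matches the paper's: both arrive at
\[
\rsmean - \ssmean = -\binom{K}{2}\eta^4\,\E_\sigma\bigl[(e_2(\sigma)-\overline{e_2})^2\bigr] + O(\eta^5),
\]
your $\mR_\sigma$ being exactly the fluctuation $e_2(\sigma)-\overline{e_2}$ of the paper's noncommutative second elementary symmetric polynomial, and your identity $\mR_\sigma = \tfrac12\sum_{a<b}s_{ab}(\sigma)[\mM_a,\mM_b]$ is a clean way to see the skew-symmetry. The qualitative conclusion $\rsmean-\ssmean\succeq O(\eta^5)$ is therefore established.

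However, the quantitative step is a genuine gap, and you say so yourself: the reduction to the spectral lower bound $\lambda_{\min}(\mSigma)\ge 4/(n\cdot n!)$ on the sign-covariance matrix is \emph{stated}, not proved. Neither of your two suggested routes (decomposing $\mSigma$ along irreducibles of the permutation action on unordered pairs, or a ``telescoping over $\mathcal{S}_n$-many paths'') is carried out, and the first, in particular, would require nontrivial representation-theoretic bookkeeping to produce the needed eigenvalue. As written, the lemma is not proven.

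The paper sidesteps the spectral problem entirely with a much more elementary move, which is worth internalizing because it is both simpler and strictly sufficient. Write the variance as a symmetrized sum over pairs of permutations,
\[
\E_\sigma\bigl[(e_2(\sigma)-\overline{e_2})^2\bigr] = \frac{1}{2(n!)^2}\sum_{\sigma,\tilde\sigma\in\mathcal{S}_n}\bigl(e_2(\sigma)-e_2(\tilde\sigma)\bigr)^2.
\]
Each summand $(e_2(\sigma)-e_2(\tilde\sigma))^2$ is negative semidefinite (the difference is skew-symmetric, being a sum of commutators), so one may freely \emph{drop} terms from the sum to obtain a matrix lower bound on $-\E_\sigma[\mR_\sigma^2]$. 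Restricting to the subset $\mathcal{T}$ of pairs $(\sigma,\tilde\sigma)$ that differ by a single adjacent transposition makes each retained summand equal to a single $[\mM_i,\mM_j]^2$, and counting gives exactly $(n-1)!$ occurrences per ordered pair $(i,j)$, producing the factor $\frac{(n-1)!}{2(n!)^2}=\frac{1}{2n\cdot n!}$. This is essentially the ``adjacent transposition'' idea you mention as an alternative, but the key simplification you are missing is that no telescoping or path-summation is needed: because you want only a \emph{one-sided} semidefinite bound and the terms are signed, discarding all non-adjacent pairs is already enough. To complete your proof you should either carry out this subset-restriction argument or actually prove the spectral bound on $\mSigma$; the former is a few lines, the latter is not.
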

The proof of Lemma~\ref{lem:ssprobdep} arranges $\rsmean$ and $\ssmean$ in increasing order of degree of $\eta$ and finds out that they differ only in degrees at least 4. Moreover, the lemma shows that the sum of degree-4 terms in $\rsmean-\ssmean$ is in fact PD, given our assumption that the sum of squared commutators of $\mM_i$ and $\mM_j$ are full-rank.

Consequently, we can write 
    $\rsmean - \ssmean = \eta^4 \mC + O(\eta^5)$,
where $\mC$ is a PD matrix. This implies that there exists $\eta_1 > 0$ such that for any $\eta \in [0, \eta_1]$, the difference stays PSD: $\rsmean - \ssmean \succeq \zeros$.

Lastly, note that when $\eta = 0$, we have $\mA_i = \mI$ for all $i\in[n]$ and $\ssmean(n,K,\mI, \dots, \mI) = \mI$ is of course PD. Therefore, there exists $\eta_2 > 0$ such that for any $\eta \in [0, \eta_2]$ and $\mA_i = \mI-\eta\mM_i$, $\ssmean(n,K,\mA_{1:n})$ is still PSD. 
Choosing $\eta_{\max} = \min \{\eta_1, \eta_2\}$ ensures that the inequality $\norm{\ssmean} \leq \norm{\rsmean}$ holds for any $\eta \in [0, \eta_{\max}]$.

\subsection{Proof of Theorem~\ref{thm:ss-2}}
\label{sec:proof-thm-ss-2}
For any $K = 2^m$ ($m \geq 1$) and $\mA, \mB$ satisfying the assumptions, we prove in the following lemma (whose proof is deferred to Appendix~\ref{sec:proof-lem-sspsd}) that $(\mA \mB)^K + (\mB \mA)^K \succeq \zeros$ holds:
\begin{lemma}
\label{lem:sspsd}
For any $K \geq 1$, consider $\mA, \mB \in \sS^d_{++}$ satisfying $(1-\frac{1}{2K}) \mI \preceq \mA \preceq \mI$ and $(1-\frac{1}{2K}) \mI \preceq \mB \preceq \mI$. Then, we have $(\mA \mB)^K + (\mB \mA)^K \succeq \zeros$.
\end{lemma}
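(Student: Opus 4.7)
The plan is to rewrite the expression so that PSDness of $(\mA\mB)^K + (\mB\mA)^K$ becomes PSDness of an anticommutator of two well-conditioned PD matrices, and then split that anticommutator into a ``dominant'' PSD piece plus a symmetric error term whose norm is controlled.

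First, I observe that $(\mA\mB)^K = \mS\mB$ and $(\mB\mA)^K = \mB\mS$, where $\mS \defeq (\mA\mB)^{K-1}\mA$. Using $\mA\mB = \mA^{1/2}(\mA^{1/2}\mB\mA^{1/2})\mA^{-1/2}$, I can also write $\mS = \mA^{1/2}\mN^{K-1}\mA^{1/2}$ with $\mN \defeq \mA^{1/2}\mB\mA^{1/2}$; this form makes $\mS$ manifestly symmetric PD. Hence $(\mA\mB)^K + (\mB\mA)^K = \mS\mB + \mB\mS$.

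Next, I apply the identity (verified by direct expansion)
\[
\mS\mB + \mB\mS = 2\mS^{1/2}\mB\mS^{1/2} + [\mS^{1/2},[\mS^{1/2},\mB]],
\]
reducing the task to showing that the smallest eigenvalue of the first (PSD) summand dominates the spectral norm of the second (symmetric) summand. Setting $\eta \defeq 1/(2K)$, the assumption $\mA,\mB \succeq (1-\eta)\mI$ gives $\mN \succeq (1-\eta)^2\mI$, hence $\mS \succeq (1-\eta)^{2K-1}\mI$, and therefore $2\mS^{1/2}\mB\mS^{1/2} \succeq 2(1-\eta)^{2K}\mI$. Using $[\mX,\mY] = [\mX-\mI,\mY-\mI]$ twice, the double-commutator term satisfies
\[
\bigl\|[\mS^{1/2},[\mS^{1/2},\mB]]\bigr\| \leq 4\|\mS^{1/2}-\mI\|^2\|\mB-\mI\| \leq 4\bigl(1-(1-\eta)^{(2K-1)/2}\bigr)^2 \eta.
\]

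What remains is the scalar inequality $2(1-\eta)^{2K} \geq 4(1-(1-\eta)^{(2K-1)/2})^2 \eta$ for $\eta = 1/(2K)$. Setting $\mu \defeq (1-\eta)^{(2K-1)/2}$ and noting $(1-\eta)^{2K} = \mu^2(1-\eta)$, this reduces to $\mu(1+\sqrt{K-1/2}) \geq 1$. Writing $m = K - 1/2$, I have $\mu = (1+\tfrac{1}{2m})^{-m} \geq e^{-1/2}$ by the standard bound $(1+x/n)^n \leq e^x$; combined with $1+\sqrt{K-1/2} \geq 1 + 2^{-1/2}$ for $K \geq 1$, this yields $\mu(1+\sqrt{K-1/2}) \geq e^{-1/2}(1+2^{-1/2}) \approx 1.036 > 1$.

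The main obstacle is that the final numerical verification is essentially tight at $K=1$: the constant $e^{-1/2}(1+2^{-1/2})$ barely exceeds $1$, so there is almost no slack. This tightness is reassuring in that it suggests the condition-number restriction $1-\eta_{n,K} = 1-1/(2K)$ is roughly the best that this particular splitting can accommodate, and it explains why the lemma is stated with precisely this constant.
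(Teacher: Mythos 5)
Your proof is correct, and it takes a genuinely different route from the paper's. The paper writes $\mA = (1-\eta)\mI + \eta\tilde\mA$, $\mB = (1-\eta)\mI + \eta\tilde\mB$, expands $(\mA\mB)^K$ as a noncommutative binomial sum in powers of $\eta$, and shows that the norm of the degree-$\ge 2$ tail is controlled by comparing with the scalar binomial expansion of $1 = ((1-\eta)^2 + 2\eta(1-\eta) + \eta^2)^K$; the PSD conclusion then follows because the degree-$0$ term $2(1-\eta)^{2K}\mI$ dominates the tail and the degree-$1$ term $2K\eta(1-\eta)^{2K-1}(\tilde\mA+\tilde\mB)$ is already PSD. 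Your argument instead observes that $(\mA\mB)^K + (\mB\mA)^K = \mS\mB + \mB\mS$ is an anticommutator of the symmetric PD matrix $\mS = \mA^{1/2}(\mA^{1/2}\mB\mA^{1/2})^{K-1}\mA^{1/2}$ with $\mB$, and then uses the exact identity $\mS\mB+\mB\mS = 2\mS^{1/2}\mB\mS^{1/2} + [\mS^{1/2},[\mS^{1/2},\mB]]$ to split off a manifestly PSD part and bound the remainder by a double-commutator estimate. I verified the intermediate steps ($\mS \succeq (1-\eta)^{2K-1}\mI$, the commutator-translation trick $[\mX,\mY]=[\mX-\mI,\mY-\mI]$, and the reduction of the final scalar inequality to $\mu(1+\sqrt{K-1/2})\ge 1$ with $\mu=(1-\tfrac{1}{2K})^{K-1/2}$), and they all check out. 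The paper's argument is more elementary (pure power-series bookkeeping), while yours exposes cleaner structure: it reduces the lemma to a general statement about PSD-ness of anticommutators $\mS\mB+\mB\mS$ of two well-conditioned PD matrices, with the condition-number dependence tracked explicitly through the double-commutator bound. One small caveat: the remark that the scalar verification is ``essentially tight at $K=1$'' applies only to your composite lower bound $e^{-1/2}(1+2^{-1/2})$, not to the exact quantity $\mu(1+\sqrt{K-1/2})$, which is $\approx 1.21$ at $K=1$ and increases thereafter; so the splitting actually has more slack than the last sentence suggests.
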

Next, we have $\half((\mA\mB)^K + (\mB\mA)^K) \preceq \frac{1}{4}((\mA\mB)^{K/2} + (\mB\mA)^{K/2})^2$, by the following argument:
\begin{align*}
    \half((\mA\mB)^K + (\mB\mA)^K) - \tfrac{1}{4}((\mA\mB)^{K/2} + (\mB\mA)^{K/2})^2
    = \tfrac{1}{4}((\mA\mB)^{K/2} - (\mB\mA)^{K/2})^2 \preceq \zeros.
\end{align*}
The last inequality comes from the fact that the square of a skew-symmetric matrix is negative semidefinite. Putting the two matrix inequalities together implies that we have
\begin{equation}
\label{eq:proof-sketch-1}
    \norm{\tfrac{(\mA\mB)^K + (\mB\mA)^K}{2}} \leq 
    \norm{\tfrac{(\mA\mB)^{K/2} + (\mB\mA)^{K/2}}{2}}^2.
\end{equation}
For $K = 2$ ($m=1$), \eqref{eq:proof-sketch-1} already proves the desired inequality. For larger $K = 2^m$ ($m > 1$), we can apply \eqref{eq:proof-sketch-1} recursively and finish the proof.

\subsection{Proof of Theorem~\ref{thm:ss-3}}
\label{sec:proof-thm-ss-3}
Since $\mA, \mB \in \sS^d_+$, the product $\mA \mB$ can be written as $\mA \mB = \mQ \mS \mQ^{-1}$ where $\mQ \in \reals^{2\times 2}$ is an invertible matrix and $\mS \in \reals^{2 \times 2}$ is a diagonal matrix with nonnegative entries (Theorem~7.6.1 of \citet{horn2012matrix}). Using this representation, the inequality can be written as
\begin{equation}
\label{eq:lem-numrad-counterex-2}
    \norm{\tfrac{\mQ \mS^K \mQ^{-1} + \mQ^{-T} \mS^K \mQ^T}{2}} 
    \leq 
    \norm{\tfrac{\mQ \mS \mQ^{-1} + \mQ^{-T} \mS \mQ^T}{2}}^K.
\end{equation}
Then, the proof is implied from the following lemma on the numerical radius\footnote{We refer the readers to \citet{horn1991topics, dragomir2013inequalities} for an overview of this quantity.} of a matrix
\begin{equation*}
w(\mM) \defeq \sup\,\{|\vv^H \mM \vv|: \vv \in \C^d, \norm{\vv} \leq 1 \},
\end{equation*}
and its property that $w(\mM^K) \leq w(\mM)^K$ (see \citet{pearcy1966elementary} for a proof).
\begin{lemma}
\label{lem:numrad}
For any invertible matrix $\mQ \in \reals^{2 \times 2}$ and any diagonal matrix $\mLambda \in \reals^{2 \times 2}$ with nonnegative entries, we have $w(\mQ \mLambda \mQ^{-1}) = \half \norm{\mQ \mLambda \mQ^{-1} + \mQ^{-T} \mLambda \mQ^{T}}$.
\end{lemma}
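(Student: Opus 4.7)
First observe that the right-hand side of the claim equals $\|\Re(\mM)\|$ where $\mM \defeq \mQ\mLambda\mQ^{-1}$ and $\Re(\mM) \defeq \half(\mM+\mM^T)$: indeed, $\mM^T=(\mQ\mLambda\mQ^{-1})^T=\mQ^{-T}\mLambda^T\mQ^T=\mQ^{-T}\mLambda\mQ^T$ since $\mLambda$ is diagonal. Moreover $\|\Re(\mM)\|\le w(\mM)$ always holds, because $\vv^H\Re(\mM)\vv=\Re(\vv^H\mM\vv)$ for every $\vv\in\C^2$. So the task reduces to proving the reverse inequality $w(\mM)\le \|\Re(\mM)\|$ for the specific real $2\times 2$ matrix $\mM=\mQ\mLambda\mQ^{-1}$.

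The key structural fact is that $\mM$ has real nonnegative eigenvalues (the diagonal entries of $\mLambda$). By the real Schur decomposition, there exists an orthogonal $\mU\in\reals^{2\times 2}$ such that $\mU^T\mM\mU=\begin{pmatrix} a & c \\ 0 & b \end{pmatrix}$ with $a,b\ge 0$ and $c\in\reals$. Both $w(\cdot)$ (unitary-invariant) and $\|\Re(\cdot)\|$ are invariant under real orthogonal similarity, so after replacing $\mQ$ by $\mU^T\mQ$ we may assume $\mM$ is already in this upper-triangular form.

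The rest is an explicit two-dimensional computation. The symmetric matrix $\Re(\mM)=\begin{pmatrix} a & c/2 \\ c/2 & b\end{pmatrix}$ has eigenvalues $\tfrac{1}{2}\bigl((a+b)\pm\sqrt{(a-b)^2+c^2}\bigr)$; since $a+b\ge 0$, the positive eigenvalue dominates in modulus, so
\[
\|\Re(\mM)\|=\tfrac{1}{2}\bigl((a+b)+\sqrt{(a-b)^2+c^2}\bigr).
\]
For $w(\mM)$, parametrize a unit vector $\vv=(\vv_1,\vv_2)\in\C^2$ with $p\defeq |\vv_1|^2$ and $1-p=|\vv_2|^2$; then $\vv^H\mM\vv=ap+b(1-p)+c\bar{\vv}_1\vv_2$. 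Since $ap+b(1-p)\ge 0$ and the phase of $\bar{\vv}_1\vv_2$ is free (subject to $|\bar{\vv}_1\vv_2|\le\sqrt{p(1-p)}$), the supremum of $|\vv^H\mM\vv|$ over $\vv_1,\vv_2$ with fixed $p$ is $ap+b(1-p)+|c|\sqrt{p(1-p)}$. Maximizing this smooth function over $p\in[0,1]$, the first-order condition gives $p^\star=\tfrac{1}{2}+\tfrac{a-b}{2\sqrt{(a-b)^2+c^2}}$, and substituting yields the same value $\tfrac{1}{2}\bigl((a+b)+\sqrt{(a-b)^2+c^2}\bigr)$, matching $\|\Re(\mM)\|$.

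\paragraph{Main obstacle.}
There is no deep obstacle: the proof is a reduction to canonical form plus a one-variable optimization. The only care needed is (i) checking that both sides are invariant under orthogonal similarity so that real Schur applies, and (ii) confirming that the maximizer of $p\mapsto ap+b(1-p)+|c|\sqrt{p(1-p)}$ is indeed interior and gives the stated value (edge cases $c=0$ or $\min(a,b)=0$ should be verified directly). As a cleaner alternative to the calculus, one can invoke the elliptical range theorem: the numerical range of $\mM$ is the closed elliptical disk with foci $a,b$ and semi-axes $\tfrac{1}{2}\sqrt{(a-b)^2+c^2}$ and $|c|/2$, lying in the closed right half-plane; its farthest point from the origin is the rightmost vertex $\tfrac{1}{2}\bigl((a+b)+\sqrt{(a-b)^2+c^2}\bigr)$, again equal to $\|\Re(\mM)\|$.
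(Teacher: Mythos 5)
Your proof is correct and takes a genuinely different route from the paper. The paper starts from the alternate characterization
\[
  w(\mM)=\sup_{\theta\in\reals}\norm{\cos\theta\,\tfrac{\mM+\mM^T}{2}+\I\sin\theta\,\tfrac{\mM-\mM^T}{2}},
\]
names all four entries of $\mQ$, computes $\tfrac12(\mQ\mLambda\mQ^{-1}\pm\mQ^{-T}\mLambda\mQ^T)$ explicitly, derives a closed form for the $2\times 2$ spectral norm as a function of $\theta$, changes variables to $x=\cos^2\theta$, and shows the resulting expression is nondecreasing in $x$ so the supremum occurs at $\theta=0$. You instead first observe the easy one-sided inequality $\norm{\Re(\mM)}\le w(\mM)$, then exploit that $\mM=\mQ\mLambda\mQ^{-1}$ has real nonnegative eigenvalues, so the real Schur decomposition brings it to upper triangular form $\begin{pmatrix}a&c\\0&b\end{pmatrix}$ with $a,b\ge 0$; after noting both sides of the identity are invariant under the orthogonal similarity this uses, you maximize $|\vv^H\mM\vv|$ directly over unit vectors by reducing to the scalar concave program $p\mapsto ap+b(1-p)+|c|\sqrt{p(1-p)}$. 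The Schur step is the key simplification: it cuts the parameter count from four (the entries of $\mQ$) to the three intrinsic quantities $a,b,c$, so the final computation is just a one-variable optimization and the role of the nonnegativity of $\mLambda$ is transparent ($ap+b(1-p)\ge 0$). Your appeal to the elliptical range theorem is an even shorter alternative route along the same lines, using that the numerical range of a $2\times 2$ matrix is an ellipse with foci at the eigenvalues, and hence with nonnegative foci its rightmost vertex attains both $w(\mM)$ and $\lambda_{\max}(\Re(\mM))$. The paper's approach has the modest advantage of not invoking real Schur or the elliptical range theorem, proceeding purely by $2\times 2$ entry algebra, but it is substantially more computational as a result.
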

The proof of this technical lemma is presented in Appendix~\ref{sec:proof-lem-numrad}.
From Lemma~\ref{lem:numrad}, we have
\begin{equation*}
    w(\mQ \mS^K \mQ^{-1}) = \norm{\tfrac{\mQ \mS^K \mQ^{-1} + \mQ^{-T} \mS^K \mQ^T}{2}} 
    \leq 
    \norm{\tfrac{\mQ \mS \mQ^{-1} + \mQ^{-T} \mS \mQ^T}{2}}^K 
    = w(\mQ \mS \mQ^{-1})^K.
    \vspace{5pt}
\end{equation*}

\paragraph{Remark: Extension to $d > 2$.}
Unfortuately, the proof technique using Lemma~\ref{lem:numrad} cannot be directly extended to $d>2$ because the lemma is no longer true when $d = 3$.
A counterexample is
\begin{equation}
\label{eq:lem-numrad-counterex-1}
    \mQ \mS \mQ^{-1} =
    {\footnotesize
    \begin{bmatrix}
    3 & 0.05 & 0.3\\
    -0.05 & 3 & -0.1\\
    -0.3 & 0.1 & 2
    \end{bmatrix}
    }.
\end{equation}
The matrix on the RHS is diagonalizable with real positive eigenvalues and real eigenvectors, so there exist $\mQ$ and $\mS$ satisfying \eqref{eq:lem-numrad-counterex-1}. However, one can check that $w(\mQ \mS \mQ^{-1}) \approx 3.0004 > \half \norm{\mQ \mS \mQ^{-1} + \mQ^{-T} \mS \mQ^{T}} = 3$. Nevertheless, the norm inequality~\eqref{eq:lem-numrad-counterex-2} still holds for this $\mQ \mS \mQ^{-1}$.

\subsection{Proof of Theorem~\ref{thm:rs-1}}
\label{sec:proof-thm-rs-1}
Recall that $\norm{\rsmean}$ and $\norm{\gdmean}$ are the $K$-th powers of both sides of the AM-GM inequality conjecture~\eqref{eq:recht-re-conj}. Hence, it suffices to prove the theorem for $K = 1$. Moreover, since the AM-GM inequality~\eqref{eq:recht-re-conj} is already true for $n=2$ and $3$, we only consider $n \geq 4$.

We prove Theorem~\ref{thm:rs-1} by the following technical lemma, proven in Appendix~\ref{sec:proof-lem-amgm}:
\begin{lemma}
\label{lem:amgm}
Suppose that we have $n$ unit vectors $\vx_1, \dots, \vx_n \in \reals^{d}$ ($d \leq n$) that satisfy $|\vx_i^T \vx_j| \leq \delta$ for all $i, j \in [n]$ such that $i \neq j$. Assume also that the maximum and minimum eigenvalues of $\sum_{i=1}^n \vx_i \vx_i^T$ are $s_{\max}$ and $s_{\min}$, respectively. For matrices $\mA_i \defeq \mI - \eta \vx_i\vx_i^T$,
the following inequalities on minimum eigenvalues $\lambda_{\min} (\cdot)$ hold:
\begin{align}
    \lambda_{\min}(\gdmean - \rsmean) &\geq \frac{\eta^2(n-1)(1-\delta)s_{\min}}{2n}
    -
    \sum\nolimits_{m=4}^n \eta^m (s_{\max}^m + s_{\max} n^{m-1} \delta^{m-1} )\notag
    \\
    &~~~~ -\frac{\eta^3}{6}
    \left (
    \frac{3s_{\max}^3}{n}
    +
    s_{\max} n^{1/2}(n^2\delta^4+6n\delta^2+1)^{1/2}
    \right )\label{eq:proof-sketch-2},\\
    \lambda_{\min}(\rsmean) &\geq 1 - s_{\max} \sum\nolimits_{m=1}^n \eta^m n^{m-1} \delta^{m-1}.
    \label{eq:proof-sketch-3}
\end{align}
Also, the RHSs of \eqref{eq:proof-sketch-2} and \eqref{eq:proof-sketch-3} are nonnegative for $n \geq 4$, $\delta = n^{-1/2}$, $s_{\min} = n^{-1/4}$, $s_{\max} = n^{1/4}$, and $\eta \in [0, \frac{1}{6n}]$.
\end{lemma}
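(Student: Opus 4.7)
The plan is to expand both matrices $\rsmean(n,1,\mA_{1:n})$ and $\gdmean(n,1,\mA_{1:n})$ as polynomials in $\eta$ in a common basis of ``distinct-index products,'' then read off the dominant second-order contribution and control the remainder. Define
\[
T_m \defeq \sum_{\substack{j_1,\dots,j_m \in [n]\\ \text{all distinct}}} \vx_{j_1}\vx_{j_1}^T \cdots \vx_{j_m}\vx_{j_m}^T \quad \text{for } 1\leq m\leq n, \qquad T_0 \defeq \mI.
\]
Then $T_1 = \mX\mX^T$ for $\mX = [\vx_1\;|\;\cdots\;|\;\vx_n]$, so $\lambda_{\min}(T_1)=s_{\min}$ and $\|T_1\|=s_{\max}$. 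A standard averaging argument — every size-$m$ subset of $[n]$, under a uniform $\sigma \in \mc S_n$, maps onto an ordered $m$-tuple of distinct indices, realized exactly $(n-m)!$ times — yields $\rsmean = \sum_{m=0}^n \tfrac{(-\eta)^m}{m!}T_m$, while the binomial theorem gives $\gdmean = (\mI - \tfrac{\eta}{n}T_1)^n = \sum_{m=0}^n \binom{n}{m}(-\eta/n)^m T_1^m$.

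To bound $\|T_m\|$ for $m\geq 2$, I factor $T_m = \mX\,\mD^{(m)}\mX^T$, where $[\mD^{(m)}]_{j_1 j_m}$ sums the ``path weights'' $(\vx_{j_1}^T\vx_{j_2})\cdots(\vx_{j_{m-1}}^T\vx_{j_m})$ over distinct intermediates. The incoherence hypothesis $|\vx_i^T\vx_j|\leq \delta$ bounds each weight by $\delta^{m-1}$ and the number of paths per entry by $n^{m-2}$, so a row-sum estimate yields $\|\mD^{(m)}\| \leq n^{m-1}\delta^{m-1}$ and hence $\|T_m\| \leq s_{\max} n^{m-1}\delta^{m-1}$. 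Inequality \eqref{eq:proof-sketch-3} then follows immediately:
\[
\lambda_{\min}(\rsmean) \;\geq\; 1 - \eta\|T_1\| - \sum_{m=2}^n \tfrac{\eta^m}{m!}\|T_m\| \;\geq\; 1 - s_{\max}\sum_{m=1}^n \eta^m n^{m-1}\delta^{m-1},
\]
using $m!\geq 1$.

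For the first bound, the $m\in\{0,1\}$ terms of $\gdmean$ and $\rsmean$ cancel, so
\[
\gdmean - \rsmean \;=\; \sum_{m=2}^n \tfrac{(-\eta)^m}{m!}\!\left[\tfrac{n!/(n-m)!}{n^m}\,T_1^m - T_m\right].
\]
The key identity $T_1^2 = T_1 + T_2$ (split $\sum_{i,j}(\vx_i^T\vx_j)\vx_i\vx_j^T$ by $i=j$ vs.\ $i\neq j$) collapses the $m=2$ contribution to $\tfrac{\eta^2}{2n}T_1(n\mI - T_1)$, which is a polynomial in $T_1$. Its eigenvalues are $\lambda(n-\lambda)/(2n)$ for $\lambda\in[s_{\min}, s_{\max}]$; the map $\lambda\mapsto\lambda(n-\lambda)$ is increasing on this range when $s_{\max}\leq n/2$ (trivial for $s_{\max}=n^{1/4}$, $n\geq 4$), and combining $s_{\min}\leq 1$ with $s_{\min}^2\leq s_{\min}$ gives $\lambda_{\min}\geq\tfrac{(n-1)s_{\min}}{2n}\geq\tfrac{(n-1)(1-\delta)s_{\min}}{2n}$, matching the leading term of \eqref{eq:proof-sketch-2}.

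The main obstacle is the $m=3$ correction, where the crude bound $\|T_1^3\|\leq s_{\max}^3$ alone is too coarse. My plan is to partition the ordered triples $(i,j,k)$ in $T_1^3=\sum_{i,j,k}(\vx_i^T\vx_j)(\vx_j^T\vx_k)\vx_i\vx_k^T$ by coincidence pattern (all distinct; $i=j\neq k$; $j=k\neq i$; $i=k\neq j$; $i=j=k$) to obtain $T_1^3 = T_3 + 2T_2 + U + T_1$, where $U = \sum_{i\neq j}(\vx_i^T\vx_j)^2\vx_i\vx_i^T$. Writing $\tfrac{(n-1)(n-2)}{n^2}T_1^3 - T_3 = (T_1^3 - T_3) - \tfrac{3n-2}{n^2}T_1^3$, the first piece equals $T_1 + 2T_2 + U$; bounding each summand via its incoherence-based norm estimate and combining $\|T_1\|$, $\|T_2\|$, $\|U\|$ through a Cauchy--Schwarz step (using $\alpha_i := \vx_i^T T_1 \vx_i - 1 \leq (n-1)\delta^2$ for the coefficients of $U$) yields the peculiar $s_{\max}n^{1/2}(n^2\delta^4 + 6n\delta^2 + 1)^{1/2}$ factor, while $\|\tfrac{3n-2}{n^2}T_1^3\| \leq \tfrac{3s_{\max}^3}{n}$ handles the residual. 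Higher orders $m\geq 4$ use the cruder $\|\tfrac{n!/(n-m)!}{n^m}T_1^m - T_m\|/m! \leq s_{\max}^m + s_{\max}n^{m-1}\delta^{m-1}$. Finally, plugging in $\delta=n^{-1/2}$, $s_{\min}=n^{-1/4}$, $s_{\max}=n^{1/4}$, $\eta\leq\tfrac{1}{6n}$ reduces the nonnegativity check to a geometric-series estimate: the dominant $\eta^2$ term is $\Theta(n^{-9/4})$, while the error sums are controlled by the geometric ratio $\eta n^{1/2} \leq \tfrac{1}{6\sqrt n}$, which is dominated for $n\geq 4$.
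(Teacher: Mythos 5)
Your expansion, the $T_m$ bound, inequality \eqref{eq:proof-sketch-3}, the $m\geq 4$ estimates, and the final substitution all track the paper's proof closely (the paper bounds $\|\mT_m\|$ by its Frobenius norm; your symmetric row-sum bound gives the same $n^{m-1}\delta^{m-1}$ and is interchangeable). The genuinely different part is your treatment of the degree-$2$ term, and it introduces a gap.

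You use the identity $T_1^2 = T_1 + T_2$ to collapse the degree-$2$ contribution to $\tfrac{\eta^2}{2n}T_1(n\mI - T_1)$, a polynomial in $T_1$, and then argue spectrally. This is neat — it entirely avoids the incoherence hypothesis for this term, and $(1-\delta)\leq 1$ supplies the stated factor for free. But to conclude $\lambda_{\min}\big(T_1(n\mI-T_1)\big)=s_{\min}(n-s_{\min})$ you need $\lambda\mapsto\lambda(n-\lambda)$ to be increasing on $\eig(T_1)\subseteq[s_{\min},s_{\max}]$, for which you explicitly invoke $s_{\max}\leq n/2$. The lemma does not assume this, and the hypothesis $|\vx_i^T\vx_j|\leq\delta$ only gives $s_{\max}\leq 1+(n-1)\delta$, which is well above $n/2$ once $\delta$ exceeds roughly $\tfrac{1}{2}$. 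So your argument proves \eqref{eq:proof-sketch-2} only under the additional restriction $s_{\max}\leq n/2$; the paper instead bounds the quadratic form $\vu^T\big((n-1)T_1-\sum_{i\neq j}\vx_i\vx_i^T\vx_j\vx_j^T\big)\vu$ directly, writing $(n-1)\sum_i a_i^2 - \delta\sum_{i\neq j}a_ia_j = (n-1)(1-\delta)\sum_i a_i^2 + \tfrac{\delta}{2}\sum_{i\neq j}(a_i-a_j)^2$ with $a_i=|\vu^T\vx_i|$, which needs no restriction on $s_{\max}$. For the specific values used later ($s_{\max}=n^{1/4}$) your restriction is vacuous, so the \emph{application} of the lemma survives, but the lemma as stated is not proved.

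For the degree-$3$ term, your decomposition $T_1^3 = T_3 + 2T_2 + U + T_1$ and the split $\tfrac{(n-1)(n-2)}{n^2}T_1^3 - T_3 = (T_1+2T_2+U) - \tfrac{3n-2}{n^2}T_1^3$ exactly match the paper's. However, ``bounding each summand \ldots and combining through a Cauchy--Schwarz step'' is underspecified: a plain triangle inequality gives $\|T_1+2T_2+U\|\leq s_{\max}(1+2n\delta+(n-1)\delta^2)$, which is \emph{not} dominated by the target $s_{\max}n^{1/2}(n^2\delta^4+6n\delta^2+1)^{1/2}$ for general $\delta$ (the $\delta^1$ term is absent on the right). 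What actually produces that factor in the paper is factoring the whole block as $\mX\mS\mX^T$ with $\mS = \mI + 2\mG + \mathrm{diag}(\alpha)$ (where $\mG_{ij}=\vx_i^T\vx_j\,\indic_{i\neq j}$), and bounding $\|\mS\|\leq\|\mS\|_{\mathrm F}$ entrywise; this is a bound on the combined middle matrix, not a combination of the three individual operator norms. If your ``Cauchy--Schwarz'' is shorthand for exactly this Frobenius estimate on the assembled $\mS$, you recover the stated bound; if not, you need to say how you reach that specific radical.
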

In a similar way as Lemma~\ref{lem:ssprobdep}, Lemma~\ref{lem:amgm} is proven by arranging $\rsmean$ and $\gdmean$ in increasing order of degree of $\eta$. These two matrices only differ in degrees at least 2. For the remaining steps, the crux of the proof is to carefully bound the degree-2 and degree-3 terms in $\gdmean-\rsmean$. Once \eqref{eq:proof-sketch-2} and \eqref{eq:proof-sketch-3} are shown, substituting the values of $\delta$, $s_{\min}$, and $s_{\max}$ assumed in Theorem~\ref{thm:rs-1} and a bit of elementary calculations finish the proof.

\section*{Acknowledgements}
CY acknowledges Korea Foundation for Advanced Studies, NSF CAREER grant 1846088, and ONR grant N00014-20-1-2394 for financial support.
SS acknowledges support from NSF BIGDATA grant 1741341, NSF CAREER grant 1846088, an MIT RSC award, and an Amazon Research Award.
AJ acknowledges support from ONR grant N00014-20-1-2394, and from MIT-IBM Watson AI lab.

\bibliographystyle{plainnat}
\bibliography{cite}

\appendix
\newpage

\section{Counterexamples for the first inequality, when $\eta_{n,K} = 1$}
\label{sec:finding-counterex}

In this section, we show that if we set $\eta_{n,K} = 1$, i.e., allow general PSD matrices, then counterexamples for the first inequality $\norm{\ssmean} \leq \norm{\rsmean}$ of Conjecture~\ref{conj:main} exist for $n\geq3$, $K\geq2$, $d \geq 2$.
This shows $\eta_{n,K} < 1$ is necessary when $n \geq 3$.
Using this counterexample, we will also demonstrate that at least for $d=2$, $\eta_{n,K}$ must also decrease with $K$ for the inequality to hold.
At the end of this section, we briefly describe a simple random procedure that we used to obtain such counterexamples, and report settings where we could/couldn't generate counterexamples to the conjecture.
In this section, we will occasionally omit subscripts in $\eta_{n,K}$ for simplicity.

\paragraph{Positive semidefinite counterexamples to $\norm{\ssmean} \leq \norm{\rsmean}$.}
We start by noting the following observation on ``extending'' a given counterexample to higher $n$ or $d$.
For $\norm{\ssmean} \leq \norm{\rsmean}$, once a counterexample $(\mA_1, \dots, \mA_n)$ is found for a tuple $(n,K,d,\eta)$, then $(\mA_1, \dots, \mA_n, \mI, \dots, \mI)$ is also a counterexample for $(n',K,d,\eta)$ where $n' > n$.
Also, for the special case of $\eta = 1$, a counterexample $(\mA_1, \dots, \mA_n)$ found for $(n,K,d,1)$ can be used to extend it to higher dimension $d'>d$ by letting $\wt{\mA}_i = \mA_i \oplus \zeros$, i.e., a direct sum of $\mA_i$ and a $(d'-d)$-by-$(d'-d)$ zero matrix.
Therefore, as long as we have $\eta = 1$, it suffices to present a counterexample for $n=3$, $K \geq 2$, and $d = 2$.

We will now show that the following three positive semidefinite matrices give a counterexample to $\norm{\ssmean} \leq \norm{\rsmean}$:
\begin{equation}
\label{eq:cntrex}
    \mA_1 = \begin{bmatrix}
    1/4 & \sqrt{3}/4\\
    \sqrt{3}/4 & 3/4
    \end{bmatrix},~~
    \mA_2 = \begin{bmatrix}
    1/4 & -\sqrt{3}/4\\
    -\sqrt{3}/4 & 3/4
    \end{bmatrix},~~
    \mA_3 = \begin{bmatrix}
    1 & 0\\
    0 & 0
    \end{bmatrix}.
\end{equation}
It is easy to check that $\mA_1$, $\mA_2$, and $\mA_3$ are all rank-1, and the powers of their without-replacement products read
\begin{align*}
    &(\mA_1 \mA_2 \mA_3)^K = \left(-\frac{1}{8}\right)^K 
    \begin{bmatrix}
    1 & 0\\\sqrt{3} & 0
    \end{bmatrix},~~~~~
    (\mA_1 \mA_3 \mA_2)^K = \left(-\frac{1}{8}\right)^K 
    \begin{bmatrix}
    -1/2 & \sqrt{3}/2\\-\sqrt{3}/2 & 3/2
    \end{bmatrix},\\
    &(\mA_2 \mA_1 \mA_3)^K = \left(-\frac{1}{8}\right)^K 
    \begin{bmatrix}
    1 & 0\\-\sqrt{3} & 0
    \end{bmatrix},~~
    (\mA_2 \mA_3 \mA_1)^K = \left(-\frac{1}{8}\right)^K 
    \begin{bmatrix}
    -1/2 & -\sqrt{3}/2\\\sqrt{3}/2 & 3/2
    \end{bmatrix},\\
    &(\mA_3 \mA_1 \mA_2)^K = \left(-\frac{1}{8}\right)^K 
    \begin{bmatrix}
    1 & -\sqrt{3}\\0 & 0
    \end{bmatrix},~~
    (\mA_3 \mA_2 \mA_1)^K = \left(-\frac{1}{8}\right)^K 
    \begin{bmatrix}
    1 & \sqrt{3}\\0 & 0
    \end{bmatrix}.
\end{align*}
Hence, the mean of the powers evaluates to
\begin{equation*}
    \ssmean(3,K,\mA_{1:3})
    = \frac{1}{6} \sum_{\sigma \in \mc S_3} \left (\prod_{i=1}^3 \mA_{\sigma(i)} \right )^K
    = \left(-\frac{1}{8}\right)^K
    \begin{bmatrix}
    1/2 & 0 \\ 0 & 1/2
    \end{bmatrix},
\end{equation*}
thus yielding
\begin{equation*}
    \norm{\ssmean(3,K,\mA_{1:3})} = \frac{1}{2\cdot 8^K}
    >
    \norm{\rsmean(3,K,\mA_{1:3})} = \norm{\ssmean(3,1,\mA_{1:3})}^K 
    = \frac{1}{16^K}.
\end{equation*}

\begin{figure}[tbp]
    \centering
    \includegraphics[width=0.6\linewidth]{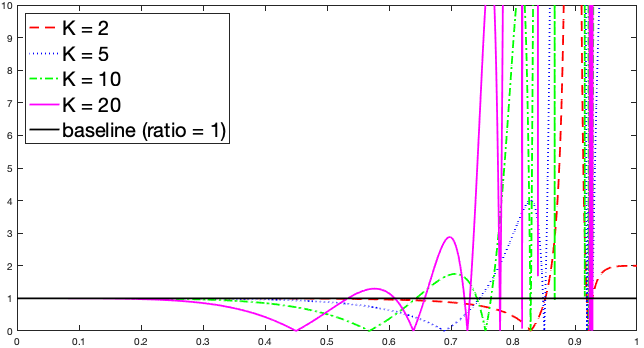}
    \caption{The ratio of $\norms{\ssmean}/\norms{\rsmean}$ calculated with $\wt{\mA}_i = (1-\eta)\mI + \eta\mA_i$, for $\eta \in [0,1]$.}
    \label{fig:counterex}
\end{figure}

\paragraph{Step-size constant $\eta_{n,K}$ should decay with $K$.}
We now further investigate this counterexample~\eqref{eq:cntrex} to (although rather computationally) demonstrate that for $d=2$, the step-size constant $\eta_{n,K}$ must decrease with $K$ for Conjecture~\ref{conj:main} to hold.
This starts from a natural question: 
given the counterexample~\eqref{eq:cntrex} for $\eta = 1$, what if we define positive definite matrices $\wt{\mA}_i = (1-\eta)\mI + \eta\mA_i$, for $\eta < 1$?

For the new matrices $\wt{\mA}_i$, $\rsmean(3,K,\wt{\mA}_{1:3})$ can be exactly evaluated through some calculations:
\begin{equation*}
    \rsmean(3,K,\wt{\mA}_{1:3})
    = \left ( \frac{1}{6} \sum_{\sigma \in \mc S_3} \prod_{i=1}^3 \wt{\mA}_{\sigma(i)} \right )^K
    = \left (1-\frac{3\eta}{2}+\frac{3\eta^2}{8}+\frac{\eta^3}{16} \right )^K \mI.
\end{equation*}
The polynomial $1-\frac{3\eta}{2}+\frac{3\eta^2}{8}+\frac{\eta^3}{16}$ has a root at $\eta = -4+2\sqrt{6}$. The other roots are outside $[0,1]$. The root $\eta = -4+2\sqrt{6}$ is the point where $\sum_{\sigma \in \mc S_3} \prod_{i=1}^3 \wt{\mA}_{\sigma(i)}$ turns from positive semidefinite to negative semidefinite, as $\eta$ increases from $0$ to $1$.

In case of $\ssmean(3,K,\wt{\mA}_{1:3})$, calculations are much more complicated and it is difficult to obtain a closed-form equation for general $K$'s. Instead, to see how the ratio between $\norm{\ssmean}$ and $\norm{\rsmean}$ changes with different $\eta$ and $K$, we plot the ratio $\norms{\ssmean(3,K,\wt{\mA}_{1:3})}/\norms{\rsmean(3,K,\wt{\mA}_{1:3})}$ over $\eta \in [0,1]$ for different $K$'s in Figure~\ref{fig:counterex}.
Notice that we also plot a baseline at $\text{ratio}=1$, so that if the ratio is above this line, the inequality $\norm{\ssmean}\leq\norm{\rsmean}$ is violated.
We observe that the ratio peaks to infinity at $\eta = -4+2\sqrt{6} \approx 0.899$ for all $K$'s, because $\rsmean = \zeros$ at $\eta = -4+2\sqrt{6}$. 
We also observe that as $K$ increases, the smallest value of $\eta_{n,K}$ where the graph of the ratio $\norm{\ssmean}/\norm{\rsmean}$ crosses the baseline decreases. This suggests that $\eta_{n,K}$ must decrease with $K$ for Conjecture~\ref{conj:main} to hold.

For these particular matrices $\wt{\mA}_{i}$, our calculations confirmed that choosing $\eta = \frac{1}{K^{1/3}}$ suffices to make the ratio $\norms{\ssmean(3,K,\wt{\mA}_{1:3})}/\norms{\rsmean(3,K,\wt{\mA}_{1:3})} \leq 0.999999$, for $K$ up to $200$. This is consistent with our belief that Conjecture~\ref{conj:main} is likely true for some choice of $\eta_{n,K}$ that decrease with~$K$.

\paragraph{Generating counterexamples via random samples.}
The counterexample found in \eqref{eq:cntrex} is in fact a refined version of counterexamples randomly generated from the following simple procedure.
For fixed $n \geq 2$, $K \geq 2$, $d \geq 2$, and $\eta \in (0,1]$, we used MATLAB to run the following random procedure.
\begin{itemize}
  \item[] Repeat the following steps $10^5$ times:
  \item[1.] Generate random matrices $\mU_1, \dots, \mU_n \in \reals^{d \times d}$ whose entries are sampled i.i.d.\ from standard Gaussian distribution: $[\mU_i]_{j,k} \sim \mc N(0,1)$ for all $i \in [n]$ and $j, k\in [d]$.
  \item[2.] Create unit-norm symmetric positive semidefinite matrices $\mM_1, \dots, \mM_n$ by $\mM_i = \frac{\mU_i \mU_i^T}{\norm{\mU_i \mU_i^T}}$.
  \item[3.] Let $\mA_i = (1-\eta)\mI + \eta \mM_i$ for all $i \in [n]$. 
  \item[4.] Test if $\norm{\ssmean(n,K,\mA_{1:n})} \leq  \norm{\rsmean(n,K,\mA_{1:n})} \leq \norm{\gdmean(n,K,\mA_{1:n})}$ holds.
\end{itemize}
Of course, such a random procedure does not prove the correctness of an inequality, but it can quickly disprove the inequality if a counterexample can be found.

After running the procedure for all possible combinations of 
\begin{align*}
    n &\in \{2,3,4,5,6\},~~
    K \in \{2,3,5,10,15,20,25,30\},\\
    d &\in \{2,5\},\quad\quad\quad~~~
    \eta \in \{0.25, 0.5, 0.75, 1\},
\end{align*}
we report the cases where the procedure succeeded in finding a counterexample to $\norm{\ssmean}\leq\norm{\rsmean}$. None of the runs generated a counterexample to $\norm{\rsmean}\leq\norm{\gdmean}$.
\begin{itemize}
    \item If $\eta = 1$, succeeded for all $n \geq 3$, $K$, and $d$.
    \item If $(\eta,n) = (0.75,3)$, succeeded for $K \geq 10$ when $d=2$, $K \geq 25$ when $d=5$.
    \item If $(\eta,n) = (0.75,4)$, succeeded for $K \geq 5$ when $d=2$, $K \geq 10$ when $d=5$.
    \item If $(\eta,n) = (0.75,5)$, succeeded for $K \geq 2$ when $d=2$, $K \geq 10$ when $d=5$.
    \item If $(\eta,n) = (0.75,6)$, succeeded for $K \geq 2$ when $d=2$, $K \geq 10$ when $d=5$.
    \item If $\eta = 0.5$ and $n \in \{3,4,5\}$, succeeded for $K = 30$ when $d=2$.
    \item If $\eta = 0.5$ and $n = 6$, succeeded for $K \geq 25$ when $d=2$.
\end{itemize}
In case of $n = 2$ or $\eta = 0.25$, none of the runs generated a counterexample.
We observe that for a fixed $\eta$, increasing $n$ makes it easier for the procedure to find counterexamples in lower $K$.
Also, it seems that the counterexamples are easier to generate in lower $d$.

We also ran the procedure using $\eta$ that decays with the $n$ and $K$. Trying the aforementioned sets of $n$, $K$, and $d$ with 
\begin{equation*}
\eta \in \left \{\frac{1}{(nK)^{1/4}}, \frac{1}{(nK)^{1/3}}, \frac{1}{(nK)^{1/2}}, \frac{1}{nK} \right \}    
\end{equation*}
did not generate any counterexamples for both inequalities.

We note again that the inability of the random procedure to find counterexamples does not imply that the conjecture is true. Recall that random sampling procedures fail to generate counterexamples to $\norm{\rsmean} \leq \norm{\gdmean}$, but counterexamples consisting of positive semidefinite matrices were found by \citet{de2020random}.

\section{Other versions of the Recht-R\'e conjecture \eqref{eq:recht-re-conj} and Conjecture~\ref{conj:main}}
\label{sec:othervers}

In the main paper, we focused on the ``norm-of-expectation-of-epoch'' version of the Recht-R\'e conjecture~\eqref{eq:recht-re-conj}, which we rewrite below:
\begin{equation}
\label{eq:otherver-1}
    \norm{\frac{1}{n!} \sum_{\sigma \in \mc S_n} \prod_{i=1}^n \mA_{\sigma(i)}} 
    \leq
    \norm{\frac{1}{n} \sum_{i=1}^n \mA_i}^n.
\end{equation}
In this section, we will introduce other versions of the conjecture proposed in the same paper \citep{recht2012toward} and its commentary \citep{duchi2012commentary}. These variants immediately motivate their corresponding versions of our Conjecture~\ref{conj:main}. We state these alternative versions of our $\norm{\ssmean}\leq\norm{\rsmean}$ conjecture and prove a simple special case.

To this end, we introduce the notation of without-replacement expectation $\Ewo[\cdot]$ and with-replacement expectation $\Ewr[\cdot]$ from \citet{recht2012toward}.
Suppose we are given a domain $\sD$, a codomain $\sD'$, $n$ elements of the domain $\vx_1, \dots, \vx_n \in \sD$, and a function $f: \sD^m \to \sD'$, where $m \leq n$. The \emph{without-replacement expectation} of $f$ is defined as
\begin{equation*}
    \Ewo[f(\vx_{i_1}, \dots, \vx_{i_m})] \defeq \frac{(n-m)!}{n!} \sum_{\substack{i_1,\dots,i_m \in [n]\\i_1, \dots, i_m\text{ distinct }}} f(\vx_{i_1}, \dots, \vx_{i_m}),
\end{equation*}
and \emph{with-replacement expectation} of $f$ is defined as
\begin{equation*}
    \Ewr[f(\vx_{i_1}, \dots, \vx_{i_m})] \defeq \frac{1}{n^m} \sum_{\substack{(i_1,\dots,i_m) \in [n]^m}} f(\vx_{i_1}, \dots, \vx_{i_m}).
\end{equation*}
Using this new notation, we can rewrite \eqref{eq:otherver-1} as 
\begin{equation}
\label{eq:otherver-2}
    \norm{\Ewo\left[ \prod\nolimits_{j=1}^n \mA_{i_j} \right]}
    \leq
    \norm{\Ewr\left[ \prod\nolimits_{j=1}^n \mA_{i_j} \right]}.
\end{equation}

We will now present other versions of the conjecture \citep{recht2012toward, duchi2012commentary}, including the symmetrized and expectation-of-norm versions.
The conjectures say: for any $n$ positive semidefinite matrices $\mA_1, \dots, \mA_n \in \sS^d_+$ and $m \leq n$,
\begin{align}
    \label{eq:otherver-3}
    \norm{\Ewo\left[ \prod\nolimits_{j=1}^m \mA_{i_j} \right]}
    &\leq
    \norm{\Ewr\left[ \prod\nolimits_{j=1}^m \mA_{i_j} \right]}\\
    \label{eq:otherver-4}
    \norm{\Ewo\left[ \prod\nolimits_{j=m}^1 \mA_{i_j}  \prod\nolimits_{j=1}^m \mA_{i_j} \right]}
    &\leq
    \norm{\Ewr\left[ \prod\nolimits_{j=m}^1 \mA_{i_j} \prod\nolimits_{j=1}^m \mA_{i_j} \right]}\\
    \label{eq:otherver-5}
    \Ewo\left[ \norm{\prod\nolimits_{j=1}^m \mA_{i_j}} \right]
    &\leq
    \Ewr\left[ \norm{\prod\nolimits_{j=1}^m \mA_{i_j}} \right]\\
    \label{eq:otherver-6}
    \Ewo\left[ \norm{\prod\nolimits_{j=m}^1 \mA_{i_j} \prod\nolimits_{j=1}^m \mA_{i_j}} \right]
    &\leq
    \Ewr\left[ \norm{\prod\nolimits_{j=m}^1 \mA_{i_j} \prod\nolimits_{j=1}^m \mA_{i_j}} \right]
\end{align}
The inequality \eqref{eq:otherver-3} is the one introduced in the main paper, but note that we multiply over $m$ matrices rather than $n$ matrices; the version we studied in the main paper is \eqref{eq:otherver-3} with $m = n$.
The next inequality \eqref{eq:otherver-4} is the \emph{symmetrized} version of \eqref{eq:otherver-3}, which is relevant to the convergence analysis of stochastic gradient methods when we want to bound the expectation of the squared $\ell_2$ distance to global optima (see e.g., the proof of Theorem~2 in \citet{ahn2020sgd}).
The inequalities~\eqref{eq:otherver-5} and \eqref{eq:otherver-6} are the \emph{expectation-of-norm} versions proposed by \citet{duchi2012commentary}, as opposed to the norm-of-expectation versions in the original paper \citep{recht2012toward}. As pointed out in \citet{duchi2012commentary}, these versions arise when we try to bound the expectation of (not squared) $\ell_2$ distance to global minima.

Several papers study these variants. 
It was introduced in the main paper that \citet{lai2020recht} prove \eqref{eq:otherver-3} for the $n = 3$ case because we restricted our attention to $m = n$ for simplicity, but in fact, the authors prove \eqref{eq:otherver-3} for $m = 2$ or $3$ and general $n \geq m$.
\citet{israel2016arithmetic} prove \eqref{eq:otherver-5} for $m \leq 3$ and general $n \geq m$. \citet{albar2017noncommutative,albar2018symmetrized} study a weaker version of \eqref{eq:otherver-3} and \eqref{eq:otherver-4}, where the RHSs are scaled up by a multiplicative factor.
As also mentioned in the main text, \citet{de2020random} disproves the $m = n$ version of \eqref{eq:otherver-3} by constructing counterexamples. The paper also provides counterexamples to the $m = n$ versions of \eqref{eq:otherver-4} and \eqref{eq:otherver-5}, suggesting that the conjectures are not likely true for general positive semidefinite matrices.

In light of the four different versions of the Recht-R\'e conjecture, one can also naturally define the corresponding versions of Conjecture~\ref{conj:main}. Focusing on the $\norm{\ssmean} \leq \norm{\rsmean}$ inequality, we can define three additional variants of the conjecture. 
\begin{conjecture}
For any $n \geq 2$, $K \geq 1$, and $d \geq 1$, there exists a step-size constant $\eta_{n,K} \in (0,1]$ such that the following statement holds:
Suppose $\mA_1, \dots, \mA_n \in \sS^d_{+}$ 
satisfy
$(1-\eta_{n,K}) \mI \preceq \mA_i \preceq \mI$ for all $i \in [n]$. 
Then, the following inequalities hold:
\begin{align}
    \label{eq:otherver-7}
    \norm{
    \E_\sigma \!\left[ 
    \bigg (\prod_{i=1}^n \mA_{\sigma(i)} \bigg )^K
    \right]
    }
    &\leq
    \norm{
    \E_{\sigma_{1:K}} \!\left[ 
    \prod_{k=1}^K \prod_{i=1}^n \mA_{\sigma_k(i)} 
    \right]
    }\\
    \label{eq:otherver-8}
    \norm{
    \E_\sigma \!\left[ 
    \bigg (\prod_{i=n}^1 \mA_{\sigma(i)} \bigg )^K
    \bigg (\prod_{i=1}^n \mA_{\sigma(i)} \bigg )^K
    \right]
    }
    &\leq
    \norm{
    \E_{\sigma_{1:K}} \!\left[
    \bigg (\prod_{k=K}^1 \prod_{i=n}^1 \mA_{\sigma_k(i)} \bigg)
    \bigg (\prod_{k=1}^K \prod_{i=1}^n \mA_{\sigma_k(i)} \bigg)
    \right]
    }\\
    \label{eq:otherver-9}
    \E_\sigma \!\left[ 
    \norm{\bigg (\prod_{i=1}^n \mA_{\sigma(i)} \bigg )^K}
    \right]
    &\leq
    \E_{\sigma_{1:K}} \!\left[ 
    \norm{\prod_{k=1}^K \prod_{i=1}^n \mA_{\sigma_k(i)}}
    \right]
    \\
    \label{eq:otherver-10}
    \E_\sigma \!\left[ 
    \norm{
    \bigg (\prod_{i=n}^1 \mA_{\sigma(i)} \bigg )^K
    \bigg (\prod_{i=1}^n \mA_{\sigma(i)} \bigg )^K
    }
    \right]
    &\leq
    \E_{\sigma_{1:K}} \!\left[
    \norm{
    \bigg (\prod_{k=K}^1 \prod_{i=n}^1 \mA_{\sigma_k(i)} \bigg)
    \bigg (\prod_{k=1}^K \prod_{i=1}^n \mA_{\sigma_k(i)} \bigg)
    }
    \right]
\end{align}
\end{conjecture}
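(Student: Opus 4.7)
The first thing I would notice is that by independence of $\sigma_1, \dots, \sigma_K$, the right-hand side of~\eqref{eq:otherver-7} equals $\big\|\big(\E_\sigma[\prod_{i=1}^n \mA_{\sigma(i)}]\big)^K\big\| = \norm{\rsmean}$, so~\eqref{eq:otherver-7} is literally the $\norm{\ssmean} \leq \norm{\rsmean}$ inequality of Conjecture~\ref{conj:main}. Consequently Theorems~\ref{thm:ss-1},~\ref{thm:ss-2}, and~\ref{thm:ss-3} already establish~\eqref{eq:otherver-7} in the corresponding special cases, and the genuinely new content lies in the symmetrized and expectation-of-norm variants~\eqref{eq:otherver-8}--\eqref{eq:otherver-10}.

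\textbf{Strategy for the symmetrized version~\eqref{eq:otherver-8}.} My plan would be to mimic the perturbation argument of Theorem~\ref{thm:ss-1}. The useful feature here is that each symmetrized product $\mQ_\sigma \defeq \prod_{i=n}^1 \mA_{\sigma(i)} \prod_{i=1}^n \mA_{\sigma(i)} = \mP_\sigma^T \mP_\sigma$ is automatically positive semidefinite, so both sides of~\eqref{eq:otherver-8} are spectral norms of PSD matrices and it suffices to establish a Loewner inequality between them. I would then write $\mA_i = \mI - \eta \mM_i$, expand both sides as polynomials in $\eta$, and identify the leading term of the difference RHS$-$LHS. The constant and low-order terms cancel by symmetry between the two sides, and by analogy with Lemma~\ref{lem:ssprobdep} the leading nonvanishing contribution should be a positive combination of the PSD squared commutators $-(\mM_i \mM_j - \mM_j \mM_i)^2$. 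A matrix-independent $\eta_{n,K}$ would then follow as in Theorem~\ref{thm:ss-1}, provided the combinatorial coefficient of these squared commutators is strictly positive; verifying this combinatorial positivity is the first concrete task, and I expect the doubled-product structure to make the coefficient depend on $K$ in a different way from Lemma~\ref{lem:ssprobdep}.

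\textbf{Strategy for the expectation-of-norm versions~\eqref{eq:otherver-9}--\eqref{eq:otherver-10} and main obstacle.} These are qualitatively harder because $\E[\norm{\cdot}]$ does not linearize over the product structure. A tempting route is to combine $\norm{\mP^K} \leq \norm{\mP}^K$ with submultiplicativity $\norm{\mP_{\sigma_K} \cdots \mP_{\sigma_1}} \leq \prod_k \norm{\mP_{\sigma_k}}$, but both bounds point in the wrong direction for a direct comparison. A more promising route is to reduce~\eqref{eq:otherver-9} to~\eqref{eq:otherver-10} via $\norm{\mP}^2 = \norm{\mP^T \mP}$ and then adapt the numerical-radius power inequality $w(\mM^K) \leq w(\mM)^K$ from the proof of Theorem~\ref{thm:ss-3} to a setting where the expectation over $\sigma$ is taken after the numerical-radius bound. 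For close-to-identity matrices I would also try a direct perturbation expansion of the random norms $\norm{\mP_\sigma^K}$ and $\norm{\mP_{\sigma_K} \cdots \mP_{\sigma_1}}$ via analytic eigenvalue perturbation, then compare moments of the resulting polynomial expressions in $\eta$. The main obstacle I foresee is precisely the loss of Loewner structure: every tool developed in Sections~\ref{sec:proof-of-conj}--\ref{sec:proof-sketches} relies on PSD comparisons, which are strictly stronger than norm comparisons, and controlling the higher-moment fluctuations of $\norm{\mP_\sigma}$ under the non-i.i.d.\ randomness of a single permutation $\sigma$ is where I expect the real difficulty to lie.
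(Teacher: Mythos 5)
You should first note that this statement is presented in the paper as a \emph{conjecture}, not a theorem, so the paper does not claim a full proof of~\eqref{eq:otherver-7}--\eqref{eq:otherver-10}; it only proves a Proposition establishing~\eqref{eq:otherver-9} and~\eqref{eq:otherver-10} for the very restricted case $n=2$, $K=2$, $\eta_{2,2}=1$. With that framing, your analysis of~\eqref{eq:otherver-7} is correct (it is exactly $\norm{\ssmean}\leq\norm{\rsmean}$), and your observation that both sides of~\eqref{eq:otherver-8} are norms of PSD matrices (so a Loewner comparison would suffice) is sound and plausibly the right route, though you stop at the stage of ``verify combinatorial positivity of the degree-4 coefficient,'' which is the entire remaining work and is not attempted in the paper either.

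For~\eqref{eq:otherver-9} and~\eqref{eq:otherver-10}, two points. First, your proposed reduction of~\eqref{eq:otherver-9} to~\eqref{eq:otherver-10} via $\norm{\mP}^2=\norm{\mP^T\mP}$ does not go through: the desired inequality compares $\E\norm{\cdot}$, not $\E\norm{\cdot}^2$, and there is no monotone map linking $\E\sqrt{X}$ to $\sqrt{\E X}$ in the direction you need, so~\eqref{eq:otherver-10} does not imply~\eqref{eq:otherver-9}. Second, the tool the paper actually uses for its partial result is entirely different from anything you propose: after collapsing the $n=2$, $K=2$ case to showing $\norm{\mA\mB\mA\mB}\leq\norm{\mA\mB\mB\mA}$, the Proposition invokes the Bhatia--Horn unitarily invariant norm inequality $2\norms{\mX^T\mZ\mY}\leq\norms{\mX\mX^T\mZ+\mZ\mY\mY^T}$ and chains it with triangle inequalities. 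This inequality does not appear in your toolkit (you consider perturbation expansion, numerical-radius power bounds, and submultiplicativity), yet it is precisely what gives the paper a matrix-independent result with $\eta=1$; it might be worth examining whether that inequality, or its generalizations, extends to larger $n$ or $K$ before resorting to perturbative methods.
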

Here, $\E_{\sigma_{1:K}}$ denotes the expectation over i.i.d.\ random permutations $\sigma_1, \dots, \sigma_K \sim {\rm Unif}(\mc S_n)$.
One can check that \eqref{eq:otherver-7} is the $\norm{\ssmean}\leq\norm{\rsmean}$ inequality in Conjecture~\ref{conj:main}. The new conjecture in \eqref{eq:otherver-8} is the symmetrized version, and the remaining two \eqref{eq:otherver-9} and \eqref{eq:otherver-10} are the expectation-of-norm versions of \eqref{eq:otherver-7} and \eqref{eq:otherver-8}.

We can prove a simple special case of \eqref{eq:otherver-9} and \eqref{eq:otherver-10}, for general PSD matrices:
\begin{proposition}
For $n = 2$, $K = 2$, and $\eta_{2,2} = 1$, the conjectures \eqref{eq:otherver-9} and \eqref{eq:otherver-10} are true.
\end{proposition}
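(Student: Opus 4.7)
The plan is to unfold both expectations explicitly for $n=2$, $K=2$, use symmetry of the $\mA_i$ together with the identities $\norm{\mX} = \norm{\mX^T}$ and $\norm{\mX \mX^T} = \norm{\mX}^2$, and then reduce each inequality to ordinary sub-multiplicativity of the spectral norm. Introduce the shorthand $\mP_\sigma \defeq \mA_{\sigma(1)} \mA_{\sigma(2)}$ for $\sigma \in \mc S_2 = \{\mathrm{id}, \tau\}$; since the $\mA_i$ are symmetric, $\mP_\tau = \mA_2 \mA_1 = (\mA_1 \mA_2)^T = \mP_{\mathrm{id}}^T$, so $\norm{\mP_\tau^K} = \norm{(\mP_{\mathrm{id}}^K)^T} = \norm{\mP_{\mathrm{id}}^K}$ for every $K$.

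First I would treat \eqref{eq:otherver-9}. The left side averages $\norm{(\prod_{i=1}^2 \mA_{\sigma(i)})^2}$ over $\sigma \in \mc S_2$; by the symmetry above this equals $\norm{(\mA_1 \mA_2)^2}$. For the right side, I would enumerate the four choices $(\sigma_1,\sigma_2) \in \mc S_2 \times \mc S_2$: the two "aligned" pairs $(\mathrm{id},\mathrm{id})$ and $(\tau,\tau)$ both yield matrices with norm $\norm{(\mA_1\mA_2)^2}$, while the two "opposite" pairs produce $\mA_1 \mA_2^2 \mA_1 = (\mA_1 \mA_2)(\mA_1 \mA_2)^T$ and $\mA_2 \mA_1^2 \mA_2 = (\mA_2 \mA_1)(\mA_2 \mA_1)^T$, each of spectral norm $\norm{\mA_1 \mA_2}^2$ by the $\norm{\mX \mX^T} = \norm{\mX}^2$ identity. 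Averaging gives the right side as $\tfrac12\bigl(\norm{(\mA_1\mA_2)^2} + \norm{\mA_1\mA_2}^2\bigr)$, so the inequality collapses to $\norm{(\mA_1\mA_2)^2} \le \norm{\mA_1\mA_2}^2$, which is immediate from sub-multiplicativity.

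Next I would repeat the same bookkeeping for the symmetrized version \eqref{eq:otherver-10}. Because $\prod_{i=2}^1 \mA_{\sigma(i)} = \mP_\sigma^T$, the left-side integrand becomes $(\mP_\sigma^T)^2 \mP_\sigma^2 = (\mP_\sigma^2)^T \mP_\sigma^2$, whose norm is $\norm{\mP_\sigma^2}^2$; averaging over $\sigma$ again gives $\norm{(\mA_1\mA_2)^2}^2$. For the right side, the inner double product factors as $(\mP_{\sigma_1}\mP_{\sigma_2})^T (\mP_{\sigma_1}\mP_{\sigma_2})$, so its norm is $\norm{\mP_{\sigma_1}\mP_{\sigma_2}}^2$. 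Enumerating the four pairs as before yields aligned contributions $\norm{(\mA_1\mA_2)^2}^2$ and opposite contributions $\norm{\mA_1\mA_2}^4$, so the inequality reduces to $\norm{(\mA_1\mA_2)^2}^2 \le \norm{\mA_1\mA_2}^4$, i.e.\ the same sub-multiplicativity statement squared.

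There is no substantive obstacle here: once the transpose/self-adjoint identities are invoked, both statements collapse to $\norm{(\mA_1\mA_2)^2} \le \norm{\mA_1\mA_2}^2$. The only care needed is to correctly identify the four cases of $(\sigma_1,\sigma_2)$ and to recognize the "opposite" cases as $\mX\mX^T$ for $\mX = \mA_1\mA_2$ or $\mA_2\mA_1$; neither requires the well-conditioned assumption, which is why the proposition holds with $\eta_{2,2} = 1$ for arbitrary positive semidefinite $\mA_1,\mA_2$.
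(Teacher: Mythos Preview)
Your proposal is correct. The reduction of both \eqref{eq:otherver-9} and \eqref{eq:otherver-10} to the single scalar inequality $\norm{(\mA_1\mA_2)^2}\le\norm{\mA_1\mA_2}^2$ is exactly the paper's reduction: after writing out the four $(\sigma_1,\sigma_2)$ cases and using $\norm{\mC}=\norm{\mC^T}$, $\norm{\mC\mC^T}=\norm{\mC}^2$, both sides collapse to precisely the paper's intermediate goal $\norm{\mA\mB\mA\mB}\le\norm{\mA\mB\mB\mA}$ (with $\mA=\mA_1$, $\mB=\mA_2$).

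Where you diverge from the paper is in how this last inequality is dispatched. You simply invoke sub-multiplicativity, $\norm{(\mA\mB)^2}\le\norm{\mA\mB}^2=\norm{(\mA\mB)(\mA\mB)^T}=\norm{\mA\mB\mB\mA}$, which is a one-line argument. The paper instead applies the Bhatia--Horn inequality $2\norm{\mX^T\mZ\mY}\le\norm{\mX\mX^T\mZ+\mZ\mY\mY^T}$ twice, passing through $\norm{\mA\mB\mA\mB\mB\mA\mB\mA}\le\norm{\mA\mB\mB\mA\mB\mA\mA\mB}\le\norm{\mA\mB\mB\mA\mA\mB\mB\mA}$. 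Your route is strictly more elementary and shorter; the paper's machinery is not needed here, though the Bhatia--Horn inequality would become relevant if one wanted to attack larger $K$ or $n$ where the product no longer factors as $\mX\mX^T$.
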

\begin{proof}
~~For $n = 2$ and $K = 2$, \eqref{eq:otherver-9} and \eqref{eq:otherver-10} boil down to the following: for $\mA, \mB \in \sS^d_+$,
\begin{align*}
    &\frac{1}{2}(\norm{\mA \mB \mA \mB} + \norm{\mB \mA \mB \mA})\\
    &~\leq
    \frac{1}{4}(\norm{\mA \mB \mA \mB} + \norm{\mA \mB \mB \mA} + \norm{\mB \mA \mA \mB} + \norm{\mB \mA \mB \mA}),\text{ and }\\
    &\frac{1}{2}(\norm{\mB \mA \mB \mA \mA \mB \mA \mB} + \norm{\mA \mB \mA \mB \mB \mA \mB \mA})\\
    &~\leq
    \frac{1}{4}
    (\norm{\mB \mA \mB \mA \mA \mB \mA \mB} 
    +\norm{\mA \mB \mB \mA \mA \mB \mB \mA} 
    +\norm{\mB \mA \mA \mB \mB \mA \mA \mB}
    +\norm{\mA \mB \mA \mB \mB \mA \mB \mA}).
\end{align*}
Recalling $\norm{\mC} = \norm{\mC^T}$ and $\norm{\mC}^2 = \norm{\mC \mC^T} = \norm{\mC^T \mC}$ for any matrix $\mC$, it suffices to prove 
\begin{align*}
    \norm{\mA \mB \mA \mB} = \norm{\mB \mA \mB \mA}
    \leq \norm{\mA \mB \mB \mA} = \norm{\mB \mA \mA \mB}.
\end{align*}
The inequality above is proven from the following well-known inequality \citep{bhatia1993more,horn1995norm}: for any $\mX \in \reals^{d \times d}$, $\mY \in \reals^{k \times k}$, and $\mZ \in \reals^{d \times k}$, 
\begin{equation}
\label{eq:otherver-11}
    2\norms{\mX^T \mZ \mY} \leq \norms{\mX \mX^T \mZ + \mZ \mY \mY^T}.
\end{equation}
Using \eqref{eq:otherver-11}, the proof of $\norm{\mA\mB\mA\mB} \leq \norm{\mA\mB\mB\mA}$ can be obtained as follows:
\begin{align*}
    \norm{\mA\mB\mA\mB}^2
    =&\,\norm{\mA\mB\mA\mB\mB\mA\mB\mA}\\
    \leqwtxt{(a)} &\,
    \bhalf (\norm{\mB\mA\mA\mB\mA\mB\mB\mA + \mA\mB\mB\mA\mB\mA\mA\mB})\\
    \leqwtxt{(b)} &\, \norm{\mA\mB\mB\mA\mB\mA\mA\mB}\\
    \leqwtxt{(c)} &\,\bhalf (\norm{\mA\mB\mB\mA\mA\mB\mB\mA + \mB\mA\mA\mB\mB\mA\mA\mB})\\
    \leqwtxt{(d)} &\,\norm{\mA\mB\mB\mA\mA\mB\mB\mA}
    = \norm{\mA\mB\mB\mA}^2.
\end{align*}
Here, (a) applies \eqref{eq:otherver-11} with $(\mX,\mY,\mZ) = (\mB\mA,\mB\mA,\mA\mB\mB\mA)$, followed by a triangle inequality and $\norm{\mC} = \norm{\mC^T}$ in (b). The next step (c) uses \eqref{eq:otherver-11} with $(\mX,\mY,\mZ) = (\mA\mB\mB\mA,\mB\mA\mA\mB,\mI)$, and then a triangle inequality and $\norm{\mC\mC^T\mC\mC^T} = \norm{\mC^T\mC\mC^T\mC}$ in (d).
\end{proof}

\section{Proofs of technical lemmas}
\subsection{Proof of Lemma~\ref{lem:ssprobdep}}
\label{sec:proof-lem-ssprobdep}
For any fixed set of $n$ symmetric matrices $\mM_1, \dots, \mM_n \in \reals^{d \times d}$ and permutation $\sigma \in \mc S_n$, we define the following matrix polynomials, for $m = 0 , \dots, n$:
\begin{align}
\label{eq:elem-sym-poly}
    e_m(\sigma) \defeq \sum_{1 \leq i_1 < i_2 < \cdots < i_m \leq n} \mM_{\sigma(i_1)} \mM_{\sigma(i_2)} \cdots \mM_{\sigma(i_m)}.
\end{align}
Note that this is a noncommutative version of the elementary symmetric polynomials. Notice also that $e_0(\sigma) = \mI$ and $e_1(\sigma) = \sum_{i=1}^n \mM_i$ are independent of the permutation $\sigma$, so we will write $e_1 \defeq \sum_{i=1}^n \mM_i$ for simplicity.
Using these polynomials, one can write the product of $\mA_i \defeq \mI - \eta \mM_i$ according to the order $\sigma$ as
\begin{align}
\label{eq:proof-lem-ssprobdep-1}
    \prod_{i=1}^n \mA_{\sigma(i)} = \prod_{i=1}^n (\mI - \eta \mM_{\sigma(i)})
    = \sum_{m=0}^n (-\eta)^m e_m(\sigma)
    = \mI - \eta e_1 + \sum_{m=2}^n (-\eta)^m e_m(\sigma).
\end{align}
Now, denoting the mean of $e_m(\sigma)$ over all $\sigma \in \mc S_n$ as $\overline{e_m} \defeq \frac{1}{n!}\sum_{\sigma \in \mc S_n} e_m(\sigma)$, the mean of the product can be similarly written as
\begin{align}
    \label{eq:proof-lem-ssprobdep-2}
    \frac{1}{n!} \sum_{\sigma \in \mc S_n} \prod_{i=1}^n \mA_{\sigma(i)} 
    = \mI - \eta e_1 + \sum_{m=2}^n (-\eta)^m \overline{e_m}.
\end{align}
We will use \eqref{eq:proof-lem-ssprobdep-1} and \eqref{eq:proof-lem-ssprobdep-2} to calculate $\rsmean$ and $\ssmean$, and then compute their difference.

First, recall that the $K$-th power of \eqref{eq:proof-lem-ssprobdep-2} is $\rsmean$. From \eqref{eq:proof-lem-ssprobdep-2}, $\rsmean$ can be written as
\begin{align}
    \rsmean
    =&\,\mI - \eta K e_1 
    + \eta^2 \bigg ( 
    \binom{K}{1} \overline{e_2} 
    + \binom{K}{2} e_1^2 
    \bigg )\notag\\
    & - \eta^3 \bigg ( 
    \indic_{n \geq 3} \binom{K}{1} \overline{e_3} 
    + \binom{K}{2} (e_1 \overline{e_2} + \overline{e_2} e_1) 
    + \indic_{K \geq 3}  \binom{K}{3} e_1^3 
    \bigg )\notag\\
    & + \eta^4 \bigg ( 
    \indic_{n \geq 4} \binom{K}{1} \overline{e_4} 
    + \binom{K}{2} \left (\overline{e_2}^2 + \indic_{n \geq 3} (e_1 \overline{e_3} + \overline{e_3}e_1) \right )\notag\\
    &\quad\qquad+ \indic_{K \geq 3} \binom{K}{3} \left ( e_1^2 \overline{e_2} + e_1 \overline{e_2} e_1 + \overline{e_2} e_1^2 \right )
    + \indic_{K \geq 4} \binom{K}{4} e_1^4
    \bigg )\notag\\
    &+ \sum_{m=5}^{nK} (-\eta)^m p_m,\label{eq:proof-lem-ssprobdep-3}
\end{align}
where $\indic_{E}$ is the 0-1 indicator function of an event $E$, and each $p_m$ is some appropriately defined noncommutative polynomial of $\mM_{1}, \dots, \mM_{n}$ whose monomials are of degree $m$.

Similarly, for a fixed $\sigma \in \mc S_n$, the $K$-th power of \eqref{eq:proof-lem-ssprobdep-1} reads
\begin{align*}
    \left(\prod_{i=1}^n \mA_{\sigma(i)}\right)^K
    =&\,\mI - \eta K e_1 
    + \eta^2 \bigg (
    \binom{K}{1} e_2(\sigma)
    + \binom{K}{2} e_1^2 
    \bigg )\\
    & - \eta^3 \bigg ( 
    \indic_{n \geq 3} \binom{K}{1} e_3(\sigma)
    + \binom{K}{2} (e_1 e_2(\sigma) + e_2(\sigma) e_1)
    + \indic_{K \geq 3}  \binom{K}{3} e_1^3
    \bigg )\\
    & + \eta^4 \bigg ( 
    \indic_{n \geq 4} \binom{K}{1} e_4(\sigma) 
    + \binom{K}{2} \left (e_2(\sigma)^2 + \indic_{n \geq 3} (e_1 e_3(\sigma) + e_3(\sigma)e_1) \right )\\
    &\quad\qquad+ \indic_{K \geq 3} \binom{K}{3} \left ( e_1^2 e_2(\sigma) + e_1 e_2(\sigma) e_1 + e_2(\sigma) e_1^2 \right )
    + \indic_{K \geq 4} \binom{K}{4} e_1^4
    \bigg )\\
    &+ \sum_{m=5}^{nK} (-\eta)^m q_m(\sigma),
\end{align*}
where each $q_m(\sigma)$ is some polynomial of $\mM_{\sigma(1)}, \dots, \mM_{\sigma(n)}$ with monomials of degree $m$. Taking the mean over all $\sigma \in \mc S_n$, we get
\begin{align}
    \ssmean 
    =&\,\mI - \eta K e_1 
    + \eta^2 \bigg (
    \binom{K}{1} \overline{e_2}
    + \binom{K}{2} e_1^2 
    \bigg )\notag\\
    & - \eta^3 \bigg ( 
    \indic_{n \geq 3} \binom{K}{1} \overline{e_3}
    + \binom{K}{2} (e_1 \overline{e_2} + \overline{e_2} e_1)
    + \indic_{K \geq 3}  \binom{K}{3} e_1^3
    \bigg )\notag\\
    & + \eta^4 \bigg ( 
    \indic_{n \geq 4} \binom{K}{1} \overline{e_4}
    + \binom{K}{2} \bigg (\frac{1}{n!} \sum_{\sigma\in\mc S_n} e_2(\sigma)^2 + \indic_{n \geq 3} (e_1 \overline{e_3} + \overline{e_3}e_1) \bigg )\notag\\
    &\quad\qquad+ \indic_{K \geq 3} \binom{K}{3} \left ( e_1^2 \overline{e_2} + e_1 \overline{e_2} e_1 + \overline{e_2} e_1^2 \right )
    + \indic_{K \geq 4} \binom{K}{4} e_1^4
    \bigg )\notag\\
    &+ \sum_{m=5}^{nK} (-\eta)^m \frac{1}{n!} \sum_{\sigma \in \mc S_n} q_m(\sigma).\label{eq:proof-lem-ssprobdep-4}
\end{align}

Comparing \eqref{eq:proof-lem-ssprobdep-3} and \eqref{eq:proof-lem-ssprobdep-4}, we can see that $\rsmean$ and $\ssmean$ have identical terms up to degree 3. The difference of the two matrices can be written as
\begin{align}
\label{eq:proof-lem-ssprobdep-5}
\rsmean - \ssmean 
= \frac{\eta^4 K (K-1)}{2} \left ( \left(\frac{\sum_{\sigma \in \mc S_n} e_2(\sigma)}{n!}\right)^2 - \frac{\sum_{\sigma \in \mc S_n} e_2(\sigma)^2}{n!}\right) + O(\eta^5).
\end{align}
If we take a close look at the fourth-order term in \eqref{eq:proof-lem-ssprobdep-5}, we can notice that
\begin{align}
    \left(\frac{\sum_{\sigma \in \mc S_n} e_2(\sigma)}{n!}\right)^2 - \frac{\sum_{\sigma \in \mc S_n} e_2(\sigma)^2}{n!}
    &= -\frac{1}{(n!)^2} \left (n! \sum_{\sigma \in \mc S_n} e_2(\sigma)^2 - \left (\sum_{\sigma \in \mc S_n} e_2(\sigma) \right)^2 \right)\notag\\
    &= -\frac{1}{2(n!)^2} \sum_{\sigma, \tilde \sigma \in \mc S_n} (e_2(\sigma) - e_2(\tilde \sigma))^2. \label{eq:proof-lem-ssprobdep-6}
\end{align}

Next, we show that for any $\sigma, \tilde \sigma \in \mc S_n$, $e_2(\sigma) - e_2(\tilde \sigma)$ is a skew-symmetrifc matrix, which implies that each $(e_2(\sigma) - e_2(\tilde \sigma))^2$ is a negative semidefinite matrix.
To illustrate why, we first give some examples. Consider $n = 3$ and $(\sigma(1), \sigma(2), \sigma(3)) = (1,2,3)$. If $(\tilde \sigma(1), \tilde \sigma(2), \tilde \sigma(3)) = (2,1,3)$, then $e_2(\sigma) - e_2(\tilde \sigma) = \mM_1 \mM_2 - \mM_2 \mM_1$, which is a skew-symmetric matrix. Similarly, if $(\tilde \sigma(1), \tilde \sigma(2), \tilde \sigma(3)) = (3,2,1)$, then $e_2(\sigma) - e_2(\tilde \sigma) = \mM_1 \mM_2 - \mM_2 \mM_1 + \mM_1 \mM_3 - \mM_3 \mM_1 + \mM_2 \mM_3 - \mM_3 \mM_2$ is also skew-symmetric. More formally, for any $\sigma, \tilde \sigma \in \mc S_n$, we have
\begin{equation*}
    e_2(\sigma) - e_2(\tilde \sigma) = \sum_{\substack{i, j \in [n], i \neq j,\\\sigma^{-1}(i) < \sigma^{-1}(j),\\\tilde \sigma^{-1}(i) > \tilde \sigma^{-1}(j)}} \mM_i \mM_j - \mM_j \mM_i,
\end{equation*}
which is hence skew-symmetric.

Because each $(e_2(\sigma) - e_2(\tilde \sigma))^2$ is negative semidefinite, for any subset $\mc T \subseteq \mc S_n \times \mc S_n$, we have
\begin{equation*}
    -\sum_{(\sigma, \tilde \sigma) \in \mc S_n \times \mc S_n} (e_2(\sigma) - e_2(\tilde \sigma))^2 \succeq -\sum_{(\sigma, \tilde \sigma) \in \mc T} (e_2(\sigma) - e_2(\tilde \sigma))^2.
\end{equation*}
We will choose $\mc T$ to be the set of pairs $(\sigma, \tilde \sigma)$ such that $\tilde \sigma$ is obtained from $\sigma$ by ``flipping an adjacent pair of inputs'':
\begin{align*}
    \mc T \defeq \big \{ (\sigma, \tilde \sigma) \in \mc S_n \times \mc S_n : 
    \exists j \in [n-1] ~\text{such that}&~ \sigma(j) = \tilde \sigma(j+1), \sigma(j+1) = \tilde \sigma(j), \\
    &~\text{and}~ \forall i \in [n] \setminus \{j, j+1\}, \sigma(i) = \tilde \sigma(i)
    \big \}.
\end{align*}
Elementary calculations reveal that for each $i, j \in [n]$ satisfying $i \neq j$, there are $(n-1)!$ pairs of $(\sigma, \tilde \sigma) \in \mc T$ such that $e_2(\sigma) - e_2(\tilde \sigma) = \mM_i \mM_j - \mM_j \mM_i$.
As a result, 
\begin{equation}
\label{eq:proof-lem-ssprobdep-7}
    -\sum_{\sigma, \sigma' \in \mc S_n} (e_2(\sigma) - e_2(\sigma'))^2
    \succeq -(n-1)! \sum_{i,j \in [n], i\neq j} (\mM_i \mM_j - \mM_j \mM_i)^2.
\end{equation}
Combining \eqref{eq:proof-lem-ssprobdep-5}, \eqref{eq:proof-lem-ssprobdep-6}, and \eqref{eq:proof-lem-ssprobdep-7} finishes the proof.

\subsection{Proof of Lemma~\ref{lem:sspsd}}
\label{sec:proof-lem-sspsd}
Since the positive definite matrices $\mA$ and $\mB$ satisfy $(1-\frac{1}{2K}) \mI \preceq \mA \preceq \mI$ and $(1-\frac{1}{2K}) \mI \preceq \mB \preceq \mI$, we can write them as
\begin{equation*}
    \mA = (1-\eta) \mI + \eta \tilde \mA ~~\text{and}~~
    \mB = (1-\eta) \mI + \eta \tilde \mB,
\end{equation*}
where $\eta \defeq \frac{1}{2K}$, $\zeros \preceq \tilde \mA \preceq \mI$ and $\zeros \preceq \tilde \mB \preceq \mI$. Using this representation, we can express $(\mA \mB)^K$ as
\begin{align*}
    (\mA \mB)^K &= \big (((1-\eta) \mI + \eta \tilde \mA)((1-\eta) \mI + \eta \tilde \mB) \big)^K\\
    &= ((1-\eta)^2 \mI + \eta(1-\eta)\tilde \mA + \eta(1-\eta) \tilde \mB + \eta^2 \tilde \mA \tilde \mB)^K\\
    &= (1-\eta)^{2K} \mI + K \eta (1-\eta)^{2K-1} (\tilde \mA + \tilde \mB) + \sum_{k=2}^{2K} \eta^k (1-\eta)^{2K-k} p_k(\tilde \mA, \tilde \mB),
\end{align*}
where $p_k$'s are some appropriately defined noncommutative polynomials of $\tilde \mA$ and $\tilde \mB$, whose monomials are of degree exactly $k$. Now compare this with the following expression of 1:
\begin{equation*}
    1^K = ((1-\eta)^2 + 2\eta(1-\eta) + \eta^2)^K
    = (1-\eta)^{2K} + 2K\eta(1-\eta)^{2K-1} + \sum_{k=2}^{2K} \eta^k(1-\eta)^{2K-k} q_k,
\end{equation*}
where $q_k$'s are some nonnegative integers.
Notice that the number of monomials in $p_k(\tilde \mA, \tilde \mB)$ is exactly $q_k$, and each of them have spectral norm bounded above by 1 (this is because $\norms{\tilde \mA} \leq 1$ and $\norms{\tilde \mB} \leq 1$). Therefore,
\begin{align*}
    &\norm{\sum_{k=2}^{2K} \eta^k (1-\eta)^{2K-k} p_k(\tilde \mA, \tilde \mB)}
    \leq \sum_{k=2}^{2K} \eta^k (1-\eta)^{2K-k} \norm{p_k(\tilde \mA, \tilde \mB)}\\
    \leq& \sum_{k=2}^{2K} \eta^k (1-\eta)^{2K-k} q_k
    = 1-(1-\eta)^{2K} - 2K\eta (1-\eta)^{2K-1}.
\end{align*}
From this observation, the matrix inequality that we want to show
\begin{align*}
    \zeros &\preceq (\mA \mB)^K + (\mB \mA)^K\\
    &= 2(1-\eta)^{2K} \mI + 2K \eta (1-\eta)^{2K-1} (\tilde \mA + \tilde \mB) + \sum_{k=2}^{2K} \eta^k (1-\eta)^{2K-k} \left(p_k(\tilde \mA, \tilde \mB) + p_k(\tilde \mB, \tilde \mA)\right)
\end{align*}
is true if
\begin{align}
\label{eq:proof-lem-sspsd-1}
    2(1-(1-\eta)^{2K} - 2K\eta (1-\eta)^{2K-1}) \leq 2(1-\eta)^{2K},
\end{align}
i.e., if the norm the sum of high-degree terms is not greater than that of $2(1-\eta)^{2K} \mI$.
The inequality~\eqref{eq:proof-lem-sspsd-1} is true for $\eta = \frac{1}{2K}$ because the sequences $s_1(K) = (1-\frac{1}{2K})^{2K}$ and $s_2(K) = (1-\frac{1}{2K})^{2K-1}$ are both monotone increasing in $K$, and $2s_1(1) + s_2(1) = 1$.

\subsection{Proof of Lemma~\ref{lem:numrad}}
\label{sec:proof-lem-numrad}
In this section, we use $\I$ to denote the imaginary number $\I$: $\I^2 = -1$.
We use the following equivalent definition of numerical radius:
\begin{equation}
\label{eq:proof-lem-numrad-1}
    w(\mM) 
    =
    \sup_{\theta \in \reals} \norm{\cos \theta \left(\frac{\mM + \mM^H}{2}\right) + \I \sin \theta \left(\frac{\mM - \mM^H}{2}\right)}.
\end{equation}
To see why this equivalent definition~\eqref{eq:proof-lem-numrad-1} holds, note that
\begin{align*}
    w(\mM) \defeq& \sup\,\{|\vv^H \mM \vv|: \vv \in \C^d, \norm{\vv} \leq 1 \}\\
    =& \sup\,\{\Re(e^{\I \theta} \vv^H \mM \vv): \theta \in \reals, \vv \in \C^d, \norm{\vv} \leq 1 \}\\
    =& \sup\,\left \{ \frac{\vv^H e^{\I \theta} \mM \vv + \vv^H e^{-\I \theta} \mM^H \vv}{2} : \theta \in \reals, \vv \in \C^d, \norm{\vv} \leq 1 \right \}\\
    =& \sup_{\theta \in \reals} w\left( \frac{e^{\I \theta} \mM + e^{-\I \theta} \mM^H}{2} \right)
    = \sup_{\theta \in \reals} \norm{ \frac{e^{\I \theta} \mM + e^{-\I \theta} \mM^H}{2} }\\
    =& \sup_{\theta \in \reals} \norm{\cos \theta \left(\frac{\mM + \mM^H}{2}\right) + \I \sin \theta \left(\frac{\mM - \mM^H}{2}\right)},
\end{align*}
where the second-to-last equality used the fact that for Hermitian matrices, the numerical radius is equal to the spectral norm (see e.g., \citet{dragomir2013inequalities}).

The proof will proceed as follows.
We will name the entries of $\mQ$ and $\mLambda$, and calculate the entries of $\mQ \mLambda \mQ^{-1}$. Substituting them into \eqref{eq:proof-lem-numrad-1}, we will show that the supremum is attained at $\theta = 0$ via direct calculations of the spectral norm.

First, we name the entries of $\mQ$ and $\mLambda$ as
\begin{equation*}
    \mQ = \begin{bmatrix} p & q\\ r & s\end{bmatrix},~~
    \mLambda = \begin{bmatrix} \alpha & 0 \\ 0 & \beta \end{bmatrix}.
\end{equation*}
By the assumptions, the entries are real and have to satisfy $ps - qr \neq 0$ and $\alpha, \beta \geq 0$.
Using these entries, we can then calculate $\half(\mQ \mLambda \mQ^{-1} + \mQ^{-T} \mLambda \mQ^{T})$ and $\half(\mQ \mLambda \mQ^{-1} - \mQ^{-T} \mLambda \mQ^{T})$ directly:
\begin{align}
    &\half(\mQ \mLambda \mQ^{-1} + \mQ^{-T} \mLambda \mQ^{T})\notag\\
    = &
    \frac{1}{2(ps-qr)} 
    \begin{bmatrix} p & q \\ r & s \end{bmatrix}
    \begin{bmatrix} \alpha & 0 \\ 0 & \beta \end{bmatrix}
    \begin{bmatrix} s & -q \\ -r & p \end{bmatrix}
    + 
    \frac{1}{2(ps-qr)} 
    \begin{bmatrix} s & -r \\ -q & p \end{bmatrix}
    \begin{bmatrix} \alpha & 0 \\ 0 & \beta \end{bmatrix}
    \begin{bmatrix} p & r \\ q & s \end{bmatrix}\notag\\
    = &
    \frac{1}{2(ps-qr)} 
    \begin{bmatrix} 
    2\alpha p s - 2\beta q r & (\beta-\alpha)(pq - rs) \\ 
    (\beta-\alpha)(pq - rs) & 2\beta p s - 2\alpha q r \end{bmatrix},\label{eq:proof-lem-numrad-4}\\
    &\half(\mQ \mLambda \mQ^{-1} - \mQ^{-T} \mLambda \mQ^{T})\notag\\
    = &
    \frac{1}{2(ps-qr)} 
    \begin{bmatrix} 
    0 & (\beta-\alpha)(pq + rs) \\ 
    -(\beta-\alpha)(pq + rs) & 0 
    \end{bmatrix}.\label{eq:proof-lem-numrad-5}
\end{align}
For 2-by-2 matrices, the norm on the RHS of \eqref{eq:proof-lem-numrad-1} can be directly calculated as a function of $\theta$. We prove and use the following lemma:
\begin{lemma}
For real scalars $a, b, c, d \in \reals$, we have
\label{lem:proof-lem-numrad-sub1}
\begin{align}
\label{eq:proof-lem-numrad-2}
2\norm{\cos \theta \begin{bmatrix}a & b\\b & c\end{bmatrix} + \I \sin \theta \begin{bmatrix}0 & d\\-d & 0\end{bmatrix}}^2
&= (a^2 + 2b^2+c^2) \cos^2\theta + 2 d^2 \sin^2\theta\\
&\hspace{-15pt}+ \sqrt{(a^2-c^2)^2 \cos^4 \theta + 4(a+c)^2 \cos^2\theta (b^2 \cos^2\theta + d^2 \sin^2\theta)}\notag.
\end{align}
\end{lemma}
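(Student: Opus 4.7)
The plan is purely computational: the matrix inside the norm on the left is Hermitian, so I can read off its spectral norm directly from the standard 2-by-2 Hermitian eigenvalue formula. Indeed, the off-diagonal entries of
$M_\theta := \cos\theta \begin{bmatrix} a & b \\ b & c \end{bmatrix} + \I \sin\theta \begin{bmatrix} 0 & d \\ -d & 0 \end{bmatrix} = \begin{bmatrix} a\cos\theta & b\cos\theta + \I d\sin\theta \\ b\cos\theta - \I d\sin\theta & c\cos\theta \end{bmatrix}$
are complex conjugates of each other and the diagonal is real, so $M_\theta$ is Hermitian. Hence $\norm{M_\theta}^2 = \max(\lambda_+^2, \lambda_-^2)$ where $\lambda_\pm$ are its real eigenvalues.

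First I would apply the usual formula $\lambda_\pm = \tfrac{x+z}{2} \pm \sqrt{(\tfrac{x-z}{2})^2 + |y|^2}$ for the eigenvalues of $\begin{bmatrix} x & y \\ \bar y & z \end{bmatrix}$ with $x = a\cos\theta$, $z = c\cos\theta$, $y = b\cos\theta + \I d\sin\theta$, so $|y|^2 = b^2\cos^2\theta + d^2\sin^2\theta$. This gives
$\lambda_\pm = \tfrac{(a+c)\cos\theta}{2} \pm \sqrt{\tfrac{(a-c)^2\cos^2\theta}{4} + b^2\cos^2\theta + d^2\sin^2\theta}$.
Squaring and taking the larger of the two yields, via $(u+v)^2$ vs.\ $(u-v)^2$ bookkeeping,
$\norm{M_\theta}^2 = \tfrac{(a+c)^2\cos^2\theta}{4} + \tfrac{(a-c)^2\cos^2\theta}{4} + b^2\cos^2\theta + d^2\sin^2\theta + |a+c|\,|\cos\theta|\sqrt{\tfrac{(a-c)^2\cos^2\theta}{4} + b^2\cos^2\theta + d^2\sin^2\theta}$.

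Finally, using $(a+c)^2 + (a-c)^2 = 2(a^2+c^2)$ to combine the first two terms into $\tfrac{a^2+c^2}{2}\cos^2\theta$, multiplying through by $2$, and absorbing the factor $2|a+c|\,|\cos\theta|$ into the square root (legal because $|a+c|^2 = (a+c)^2$ and $|\cos\theta|^2 = \cos^2\theta$, and $(a+c)^2(a-c)^2 = (a^2-c^2)^2$) produces exactly
$(a^2+2b^2+c^2)\cos^2\theta + 2d^2\sin^2\theta + \sqrt{(a^2-c^2)^2\cos^4\theta + 4(a+c)^2\cos^2\theta(b^2\cos^2\theta + d^2\sin^2\theta)}$,
which is the right-hand side of \eqref{eq:proof-lem-numrad-2}. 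There is no genuine obstacle; the only thing to be careful about is the absolute-value/sign bookkeeping when converting $\max(\lambda_+^2, \lambda_-^2)$ into the closed form, and that bookkeeping disappears cleanly once the factor is absorbed into the radical.
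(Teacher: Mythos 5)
Your proof is correct and takes essentially the same approach as the paper: both reduce to the explicit eigenvalue formula for a $2\times 2$ Hermitian matrix. The only cosmetic difference is that you apply the formula directly to $M_\theta$ and then square and take the larger magnitude, whereas the paper first forms $M_\theta^2$ and computes its eigenvalues; the algebra and the sign/absolute-value bookkeeping are otherwise identical.
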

\begin{proof}~~
The spectral norm squared in the LHS is equal to the maximum eigenvalue of 
\begin{align*}
    &\begin{bmatrix}
    a\cos \theta & b\cos \theta + \I d \sin \theta\\
    b\cos \theta - \I d \sin \theta& c\cos \theta
    \end{bmatrix}^2\\
    =&
    \begin{bmatrix}
    (a^2+b^2) \cos^2 \theta + d^2 \sin^2 \theta &
    (ab+bc)\cos^2 \theta + \I(ad+cd) \cos\theta \sin \theta\\
    (ab+bc)\cos^2 \theta - \I(ad+cd) \cos\theta \sin \theta &
    (b^2+c^2) \cos^2 \theta + d^2 \sin^2 \theta
    \end{bmatrix}.
\end{align*}
Elementary calculations show that, the eigenvalues of the matrix above are
\begin{align*}
\bhalf \Big ( 
(a^2 + 2b^2+c^2) \cos^2\theta &+ 2 d^2 \sin^2\theta\\
&\pm \sqrt{(a^2-c^2)^2 \cos^4 \theta + 4(a+c)^2 \cos^2\theta (b^2 \cos^2\theta + d^2 \sin^2\theta)} \Big ).
\end{align*}
The maximum eigenvalue is the one with a $+$ sign in front of the square root term.
\end{proof}
From \eqref{eq:proof-lem-numrad-2}, one can notice that the norm is an even and periodic function (with period $\pi$) of $\theta$, so it suffices to take the supremum over $\theta \in [0, \pi/2]$. 
We can then use the change of variables $\cos^2 \theta = x$ and $\sin^2 \theta = 1-x$ to write the RHS of \eqref{eq:proof-lem-numrad-2} as 
\begin{equation}
\label{eq:proof-lem-numrad-3}
    2d^2 + (a^2+2b^2+c^2-2d^2) x + 
    \sqrt{((a^2-c^2)^2 + 4(a+c)^2 (b^2-d^2)) x^2 + 
    4(a+c)^2 d^2 x}.
\end{equation}
With this new variable $x$, it suffices to show that 
\begin{equation}
\label{eq:proof-lem-numrad-6}
    a^2+2b^2+c^2-2d^2 \geq 0 ~~\text{and}~~
    (a^2-c^2)^2 + 4(a+c)^2 (b^2-d^2) \geq 0,
\end{equation}
because this implies that \eqref{eq:proof-lem-numrad-3} is an increasing function of $x$ and the supremum over $x \in [0,1]$ is attained at $x = 1$, as desired.

Taking a closer look at \eqref{eq:proof-lem-numrad-6}, one can realize that
\begin{align*}
    a^2+2b^2+c^2-2d^2 &= \frac{(a^2-c^2)^2+4(a+c)^2(b^2-d^2)}{(a+c)^2} + 2(ac-b^2+d^2)\\
    &=
    (a-c)^2 +4(b^2-d^2) + 2(ac-b^2+d^2).
\end{align*}
This means showing
\begin{align}
\label{eq:proof-lem-numrad-7}
    ac-b^2+d^2 \geq 0~~\text{and}~~(a-c)^2 +4(b^2-d^2) \geq 0
\end{align}
implies the two inequalities in \eqref{eq:proof-lem-numrad-6}.
Substituting the entries in \eqref{eq:proof-lem-numrad-4} and \eqref{eq:proof-lem-numrad-5} to $a$, $b$, $c$, and $d$ in  \eqref{eq:proof-lem-numrad-7} and rearranging terms, we get
\begin{align*}
    &ac - b^2 + d^2 = \frac{4 \alpha \beta (ps-qr)^2}{4(ps-qr)^2} = \alpha \beta \geq 0,\\
    &(a-c)^2 + 4(b^2-d^2) = \frac{4 (\alpha- \beta)^2 (ps-qr)^2}{4(ps-qr)^2} = (\alpha- \beta)^2 \geq 0.
\end{align*}

Consequently, for $w(\mQ \mLambda \mQ^{-1})$, the supremum
\begin{equation*}
    \sup_{\theta \in \reals} \norm{\cos \theta \left(\frac{\mQ \mLambda \mQ^{-1} + \mQ^{-T} \mLambda \mQ^{T}}{2}\right) + \I \sin \theta \left(\frac{\mQ \mLambda \mQ^{-1} - \mQ^{-T} \mLambda \mQ^{T}}{2}\right)}
\end{equation*}
is attained at $\theta = 0$, proving that
\begin{equation*}
    w(\mQ \mLambda \mQ^{-1}) = \norm{\frac{\mQ \mLambda \mQ^{-1} + \mQ^{-T} \mLambda \mQ^{T}}{2}}.
\end{equation*}

\subsection{Proof of Lemma~\ref{lem:amgm}}
\label{sec:proof-lem-amgm}

Define the data matrix $\mX \defeq \begin{bmatrix} \vx_1 & \cdots & \vx_n \end{bmatrix}$. Then, $\gdmean$ can be written as
\begin{equation}
\label{eq:proof-lem-amgm-3}
    \gdmean = \left(\mI - \frac{\eta}{n} \mX \mX^T \right)^n
    = \sum_{m=0}^n \binom{n}{m} \left (-\frac{\eta}{n} \right )^m (\mX \mX^T)^m.
\end{equation}
Next, we will write $\rsmean$ in a similar way as the proof of Lemma~\ref{lem:ssprobdep} (see Appendix~\ref{sec:proof-lem-ssprobdep}).
For the given unit vectors $\vx_1, \dots, \vx_n \in \reals^d$ and any permutation $\sigma \in \mc S_n$, we define the following matrices for $m = 0 , \dots, n$, which correspond to the matrix polynomials defined in~\eqref{eq:elem-sym-poly}:
\begin{align*}
    e_m(\sigma) \defeq \sum_{1 \leq i_1 < i_2 < \cdots < i_m \leq n} \vx_{\sigma(i_1)} \vx_{\sigma(i_1)}^T  \vx_{\sigma(i_2)} \vx_{\sigma(i_2)}^T \cdots \vx_{\sigma(i_m)} \vx_{\sigma(i_m)}^T.
\end{align*}
Notice that $e_0(\sigma) = \mI$ and $e_1(\sigma) = \sum_{i=1}^n \vx_i \vx_i^T = \mX \mX^T$ are independent of the permutation $\sigma$.
The mean of $e_m(\sigma)$ over all permutations $\sigma \in \mc S_n$ is
\begin{align*}
    \overline{e_m} \defeq \frac{1}{n!}\sum_{\sigma \in \mc S_n} e_m(\sigma)
    = \frac{1}{m!} \sum_{\substack{i_1, i_2, \dots, i_m\\\text{all distinct}}} \vx_{i_1} \vx_{i_1}^T  \vx_{i_2} \vx_{i_2}^T \cdots \vx_{i_m} \vx_{i_m}^T.
\end{align*}
Using these matrices, we can write $\rsmean$ as
\begin{equation}
\label{eq:proof-lem-amgm-4}
    \rsmean
    = \sum_{m=0}^n (-\eta)^m \overline{e_m}
    = \mI - \eta \mX\mX^T + \sum_{m=2}^n (-\eta)^m \overline{e_m}.
\end{equation}
From \eqref{eq:proof-lem-amgm-3} and \eqref{eq:proof-lem-amgm-4}, notice that the degree-0 and degree-1 (in $\eta$) terms in $\gdmean$ and $\rsmean$ cancel each other:
\begin{align}
    \gdmean - \rsmean
    &= \sum_{m=0}^n \binom{n}{m} \left (-\frac{\eta}{n} \right )^m (\mX \mX^T)^m
    - \sum_{m=0}^n (-\eta)^m \overline{e_m}\notag\\
    &= \sum_{m=2}^n (-\eta)^m \left ( \binom{n}{m} \frac{(\mX\mX^T)^m}{n^m} - \overline{e_m} \right ).
    \label{eq:proof-lem-amgm-6}
\end{align}
The rest of the proof goes as follows. We will bound the norm of the degree-2 and degree-3 terms in $\gdmean - \rsmean$~\eqref{eq:proof-lem-amgm-6}. We will then bound the higher-degree terms separately.

We first consider the degree-2 terms and bound their minimum eigenvalue:
\begin{align}
    \eta^2 \left ( \binom{n}{2} \frac{(\mX\mX^T)^2}{n^2} - \overline{e_2} \right )
    &=
    \frac{\eta^2}{2} \left ( \frac{n-1}{n} \left (\sum_{i=1}^n \vx_i\vx_i^T \right )^2 - \sum_{\substack{i,j \in [n],i \neq j}} \vx_i \vx_i^T \vx_j \vx_j^T \right )\notag\\
    &=
    \frac{\eta^2}{2n} \left ( (n-1) \sum_{i=1}^n \vx_i \vx_i^T - \sum_{\substack{i,j \in [n],i \neq j}} \vx_i \vx_i^T \vx_j \vx_j^T \right ).
    \label{eq:proof-lem-amgm-1}
\end{align}
Now, for any unit vector $\vu \in \reals^d$, we have
\begin{align}
    &\vu^T\left ( (n-1) \sum_{i=1}^n \vx_i \vx_i^T - \sum_{\substack{i,j \in [n],i \neq j}} \vx_i \vx_i^T \vx_j \vx_j^T \right )\vu\notag\\
    =&
    (n-1)\sum_{i=1}^n (\vu^T \vx_i)^2 - \sum_{i,j \in [n],i \neq j} (\vx_i^T \vx_j) (\vu^T \vx_i) (\vu^T \vx_j)\notag\\
    \geq&
    (n-1)\sum_{i=1}^n (\vu^T \vx_i)^2
    - \delta \sum_{i,j \in [n],i \neq j} |\vu^T \vx_i| |\vx^T \vx_j|\notag\\
    =& (n-1)(1-\delta) \sum_{i=1}^n (\vu^T \vx_i)^2 + \frac{\delta}{2}\sum_{i,j \in [n],i \neq j} (|\vu^T \vx_i| -  |\vx^T \vx_j|)^2\notag\\
    \geq& (n-1)(1-\delta)s_{\min}.
    \label{eq:proof-lem-amgm-2}
\end{align}
Combining \eqref{eq:proof-lem-amgm-1} and \eqref{eq:proof-lem-amgm-2}, we bound the minimum eigenvalue of the degree-2 terms in $\gdmean-\rsmean$:
\begin{equation}
\label{eq:proof-lem-amgm-7}
    \eta^2 \left ( \binom{n}{2} \frac{(\mX\mX^T)^2}{n^2} - \overline{e_2} \right )
    \succeq \frac{\eta^2 (n-1)(1-\delta) s_{\min}}{2n} \mI.
\end{equation}

Next, we bound the spectral norm of the degree-3 terms.
\begin{align}
    &\eta^3 \left ( \binom{n}{3} \frac{(\mX\mX^T)^3}{n^3} - \overline{e_3} \right )\notag\\
    =&
    \frac{\eta^3}{6} \Bigg ( \frac{(n-1)(n-2)}{n^2} \left (\sum_{i=1}^n \vx_i\vx_i^T \right )^3 - \sum_{\substack{i,j,k \in [n]\\i,j,k ~\text{distinct}}} \vx_i \vx_i^T \vx_j \vx_j^T \vx_k \vx_k^T \Bigg )\notag\\
    =&
    \frac{\eta^3}{6} \Bigg ( 
    \left (\frac{(n-1)(n-2)}{n^2} - 1 \right ) \left (\sum_{i=1}^n \vx_i\vx_i^T \right )^3 
    + \left (\sum_{i=1}^n \vx_i\vx_i^T \right )^3
    - \sum_{\substack{i,j,k \in [n]\\i,j,k ~\text{distinct}}} \vx_i \vx_i^T \vx_j \vx_j^T \vx_k \vx_k^T \Bigg ).\notag
\end{align}
Bounding the first part separately, we have
\begin{align*}
    \norm{\frac{\eta^3}{6}
    \left (\frac{(n-1)(n-2)}{n^2} - 1 \right ) \left (\sum_{i=1}^n \vx_i\vx_i^T \right )^3}
    \leq \frac{\eta^3 (3n-2)}{6n^2} s_{\max}^3
    \leq \frac{\eta^3 s_{\max}^3}{2n}.
\end{align*}
and the second part can be written as
\begin{align*}
    &\frac{\eta^3}{6}
    \Bigg ( \left (\sum_{i=1}^n \vx_i\vx_i^T \right )^3
    - \sum_{\substack{i,j,k \in [n]\\i,j,k ~\text{distinct}}} \vx_i \vx_i^T \vx_j \vx_j^T \vx_k \vx_k^T \Bigg )\\
    =&
    \frac{\eta^3}{6} \Bigg (\sum_{i=1}^n \vx_i \vx_i^T \Bigg ( \sum_{j=1}^n \vx_j \vx_j^T \Bigg) \vx_i \vx_i^T\notag
    +\sum_{\substack{i,j \in [n], i\neq j}} \vx_i \vx_i^T (\vx_i \vx_i^T + \vx_j \vx_j^T) \vx_j \vx_j^T \Bigg )\\
    =&\frac{\eta^3}{6}\mX \mS \mX^T,
\end{align*}
where the matrix $\mS \in \reals^{n \times n}$ in between is defined as
\begin{align*}
    [\mS]_{i,j} \defeq 
    \begin{cases}
    \vx_i^T \left ( \sum_{k=1}^n \vx_k \vx_k^T \right ) \vx_i = 1 + \sum_{k\in[n]\setminus \{i\}} (\vx_k^T \vx_i)^2 & \text{ if } i=j,\\
    \vx_i^T (\vx_i \vx_i^T + \vx_j \vx_j^T) \vx_j
    = 2\vx_i^T\vx_j & \text { if } i \neq j.
    \end{cases}
\end{align*}
The spectral norm $\norm{\mS}$ of $\mS$ is bounded above by its Frobenius norm, so
\begin{equation*}
    \norm{\mS}^2 \leq \lfro{\mS}^2 \leq
    n (1+(n-1)\delta^2)^2 + n(n-1) (2\delta)^2
    \leq n(n^2\delta^4+6n\delta^2+1),
\end{equation*}
which leads to
\begin{equation*}
\norm{\frac{\eta^3}{6}\mX \mS \mX^T}
\leq \frac{\eta^3}{6}\norm{\mX}^2 \norm{\mS}
=\frac{\eta^3}{6}\norm{\mX\mX^T} \norm{\mS}
\leq \frac{\eta^3}{6}s_{\max} n^{1/2}(n^2\delta^4+6n\delta^2+1)^{1/2}.
\end{equation*}
In summary, the norm of the degree-3 terms is bounded by
\begin{equation}
\label{eq:proof-lem-amgm-8}
    \norm{\eta^3 \left ( \binom{n}{3} \frac{(\mX\mX^T)^3}{n^3} - \overline{e_3} \right )}
    \leq
    \frac{\eta^3}{6}
    \left (
    \frac{3s_{\max}^3}{n} 
    +
    s_{\max} n^{1/2}(n^2\delta^4+6n\delta^2+1)^{1/2}
    \right ).
\end{equation}

Next, we bound the norm of higher-degree terms in $\gdmean$ and $\rsmean$ separately. For $\gdmean$,
\begin{align}
\label{eq:proof-lem-amgm-9}
    \norm{\sum_{m=4}^n \binom{n}{m} \left (-\frac{\eta}{n} \right )^m (\mX \mX^T)^m}
    \leq \sum_{m=4}^n  \binom{n}{m} \left (\frac{\eta}{n} \right )^m \norm{\mX\mX^T}^m    \leq \sum_{m=4}^n \eta^m s_{\max}^m.
\end{align}
For $\rsmean$, the matrix $\overline{e_m}$ can be written as the following form (this argument applies for any $m \geq 2$):
\begin{align*}
    \overline{e_m} &= \frac{1}{m!} \sum_{\substack{i_1, i_2, \dots, i_m\\\text{all distinct}}} \vx_{i_1} \vx_{i_1}^T  \vx_{i_2} \vx_{i_2}^T \cdots \vx_{i_m} \vx_{i_m}^T
    =\frac{1}{m!} \mX \mT_m \mX^T,
\end{align*}
where $[\mT_m]_{i,i} = 0$ for all $i \in [n]$, and for any $i \neq j$,
\begin{align*}
    [\mT_m]_{i,j} = \sum_{\substack{i_1, \dots, i_{m-2} \in [n] \setminus \{i,j\}\\i_1, \dots, i_{m-2} \text{ all distinct }}} \vx_i^T \vx_{i_1} \vx_{i_1}^T \vx_{i_2} \vx_{i_2}^T \cdots \vx_{i_{m-2}} \vx_{i_{m-2}}^T \vx_j.
\end{align*}
Since there are $\frac{(n-2)!}{(n-m)!}$ terms in the sum and each term is a product of $m-1$ inner products between distinct vectors, each of the entries satisfies $|[\mT_m]_{i,j}| \leq \frac{(n-2)!}{(n-m)!} \delta^{m-1}$. Therefore, using the fact that the spectral norm $\norm{\mT_m}$ of $\mT_m$ is bounded above by its Frobenius norm $\lfro{\mT_m}$, we have
\begin{align}
\label{eq:proof-lem-amgm-5}
    \norm{\mT_m}^2 \leq \lfro{\mT_m}^2 \leq n(n-1)\cdot \left ( \frac{(n-2)!}{(n-m)!} \delta^{m-1} \right )^2
    \leq n^{2m-2} \delta^{2m-2}.
\end{align}
Using \eqref{eq:proof-lem-amgm-5}, we can bound $\norm{\overline{e_m}}$:
\begin{align}
\label{eq:proof-lem-amgm-10}
    \norm{\overline{e_m}} = \frac{1}{m!} \norm{\mX \mT_m \mX^T} 
    \leq \frac{1}{m!} \norm{\mX}^2 \norm{\mT_m} 
    = \frac{1}{m!} \norm{\mX\mX^T} \norm{\mT_m} 
    \leq s_{\max} n^{m-1} \delta^{m-1}.
\end{align}
Combining \eqref{eq:proof-lem-amgm-7}, \eqref{eq:proof-lem-amgm-8}, \eqref{eq:proof-lem-amgm-9}, and \eqref{eq:proof-lem-amgm-10}, one get the lower bound~\eqref{eq:proof-sketch-2} on the minimum eigenvalue $\lambda_{\min}(\gdmean-\rsmean)$ of the difference.
The other lower bound~\eqref{eq:proof-sketch-3} on $\lambda_{\min}(\rsmean)$ follows from the definition of $\rsmean$~\eqref{eq:proof-lem-amgm-4} and the upper bounds on $\norm{\overline{e_m}}$~\eqref{eq:proof-lem-amgm-10}.

Finally, to prove the last statement that the RHSs of \eqref{eq:proof-sketch-2} and \eqref{eq:proof-sketch-3} are nonnegative, we substitute $\delta = n^{-1/2}$, $s_{\min} = n^{-1/4}$, and $s_{\max} = n^{1/4}$ from the assumptions to the RHSs of \eqref{eq:proof-sketch-2} and \eqref{eq:proof-sketch-3}. 
For \eqref{eq:proof-sketch-2}, we get
\begin{align}
\label{eq:proof-sketch-4}
    &\frac{\eta^2 (n-1)(1-n^{-1/2})}{2n^{5/4}} - \sum_{m=4}^n \eta^m(n^{m/4} + n^{1/4} n^{(m-1)/2}) -\frac{\eta^3}{6} \left (3n^{-1/4} + \sqrt 8 n^{3/4} \right )\notag
    \\
    \geq &\frac{3 \eta^2}{16n^{1/4}} -
    2 n^{-1/4} \sum_{m=4}^n \eta^m n^{m/2}
    - \frac{2\eta^3 n^{3/4}}{3}
    \geq \frac{3 \eta^2}{16n^{1/4}} - \frac{2\eta^3 n^{3/4}}{3} - \frac{2 \eta^4 n^{7/4}}{1-\eta n^{1/2}},
\end{align}
where we used $\frac{(n-1)(1-n^{-1/2})}{2n} \geq \frac{3}{16}$ for $n \geq 4$, $n^{m/4} \leq n^{1/4} n^{(m-1)/2}$ for $m,n \geq 4$, and $3n^{-1/4} \leq \frac{3n^{3/4}}{4}$ for $n \geq 4$.
Elementary calculations prove that the RHS of \eqref{eq:proof-sketch-4} is nonnegative for $\eta \in [0, \frac{1}{6n}]$.

Similarly, for the RHS of \eqref{eq:proof-sketch-3}, we have
\begin{align}
\label{eq:proof-sketch-5}
    1 - n^{-1/4} \sum_{m=1}^n \eta^m n^{m/2}
    \geq 1 - \frac{\eta n^{1/4}}{1-\eta n^{1/2}}.
\end{align}
Again, the RHS of \eqref{eq:proof-sketch-5} is nonnegative for $\eta \in [0, \frac{1}{6n}]$.

\end{document}